\newtheorem{condition}{Condition}
\newtheorem{definition}{Definition}
\newtheorem{theorem}{Theorem}
\newtheorem{lemma}[theorem]{Lemma}
\newtheorem{proposition}[theorem]{Proposition}
\newcommand{\ve}[1]{\mathbf{#1}}
\newcommand{\ves}[1]{\boldsymbol{#1}}
\newcommand{\Lagr}{\mathcal{L}}
\newcommand{\dd}{\text{d}}
\newcommand{\ddt}{\frac{\text{d}}{\text{d}t}}
\DeclareMathOperator*{\argmin}{arg\,min}
\newcommand{\myequation}{\begin{equation}}
\newcommand{\myendequation}{\end{equation}}
\newcommand{\appropto}{\mathrel{\vcenter{
  \offinterlineskip\halign{\hfil$##$\cr
    \propto\cr\noalign{\kern2pt}\sim\cr\noalign{\kern-2pt}}}}}
\newcommand{\J}{J}
\newcommand{\Q}{Q}
\newcommand{\rank}{\text{rank}}
\newcommand{\vfb}{\ve{v}^{\mathrm{fb}}}
\newcommand{\vff}{\ve{v}^{\mathrm{ff}}}
\newcommand{\taufb}{\tau_{v^{\mathrm{fb}}}}
\newcommand{\sss}{\mathrm{ss}}
\newcommand{\tdu}{\Delta\ve{u}}
\newcommand{\vdl}{\ves{\delta}_L}
\newcommand{\vd}{\ves{\delta}}
\newcommand{\vv}{\ve{v}}
\newcommand{\vr}{\ve{r}}
\newcommand{\vt}{\ves{\theta}}
\newcommand{\uint}{\ve{u}^{\mathrm{int}}}
\title{Credit Assignment in Neural Networks through \\
Deep Feedback Control}
\author{%
  \textbf{Alexander Meulemans\thanks{Equal contribution}, Matilde Tristany Farinha$^*$, Javier García Ordóñez,}\\
  \textbf{Pau Vilimelis Aceituno, João Sacramento, Benjamin F. Grewe}\\
  \\
  Institute of Neuroinformatics, University of Zürich and ETH Zürich\\
  \texttt{ameulema@ethz.ch}  
}
\begin{document}

\doparttoc 
\faketableofcontents 

\maketitle

\begin{abstract}
\vspace{-0.3cm}
The success of deep learning sparked interest in whether the brain learns by using similar techniques for assigning credit to each synaptic weight for its contribution to the network output.
However, the majority of current attempts at biologically-plausible learning methods are either non-local in time, require highly specific connectivity motifs, or have no clear link to any known mathematical optimization method. Here, we introduce Deep Feedback Control (DFC), a new learning method that uses a feedback controller to drive a deep neural network to match a desired output target and whose control signal can be used for credit assignment.
The resulting learning rule is fully local in space and time and approximates Gauss-Newton optimization for a wide range of feedback connectivity patterns. To further underline its biological plausibility, we relate DFC to a multi-compartment model of cortical pyramidal neurons with a local voltage-dependent synaptic plasticity rule, consistent with recent theories of dendritic processing. By combining dynamical system theory with mathematical optimization theory, we provide a strong theoretical foundation for DFC that we corroborate with detailed results on toy experiments and standard computer-vision benchmarks.
\end{abstract}

\section{Introduction}
\vspace{-0.3cm}
The error backpropagation (BP) algorithm \citep{rumelhart1986learning, werbos1982applications, linnainmaa1970representation} is currently the gold standard to perform credit assignment (CA)
in deep neural networks. Although deep learning was inspired by biological neural networks, an exact mapping of BP onto biology to explain learning in the brain leads to several inconsistencies with experimental results that are not yet fully addressed \citep{crick1989recent, grossberg1987competitive, lillicrap2020backpropagation}. First, BP requires an exact symmetry between the weights of the forward and feedback pathways \citep{grossberg1987competitive, lillicrap2020backpropagation}, also called the weight transport problem. Another issue of relevance is that, in biological networks, feedback also changes each neuron’s activation and thus its immediate output \citep{larkum2009synaptic, gilbert2013top}, which does not occur in BP.

\citet{lillicrap2016random} convincingly showed that the weight transport problem can be sidestepped in modest supervised learning problems by using random feedback connections. However, follow-up studies indicated that random feedback paths cannot provide precise CA in more complex problems \citep{bartunov2018assessing, launay2019principled, moskovitz2018feedback, crafton2019direct}, which can be mitigated by learning feedback weights that align with the forward pathway \citep{akrout2019deep, kunin2020two, lansdell2019learning, Guerguiev2020Spike-based,golkar2020biologically} or approximate its inverse \citep{bengio2014auto, lee2015difference, meulemans2020theoretical, bengio2020deriving}. However, this precise alignment imposes strict constraints on the feedback weights, whereas more flexible constraints could provide the freedom to use feedback also for other purposes besides learning, such as attention and prediction \citep{gilbert2013top}.

A complementary line of research proposes models of cortical microcircuits which propagate CA signals through the network using dynamic feedback \citep{sacramento2018dendritic, whittington2017approximation, guerguiev2017towards} or multiplexed neural codes \citep{payeur2020burst}, thereby directly influencing neural activations with feedback. However, these models introduce highly specific connectivity motifs and tightly coordinated plasticity mechanisms. Whether these constraints can be fulfilled by cortical networks is an interesting experimental question. Another line of work uses adaptive control theory \citep{slotine1991applied} to derive learning rules for non-hierarchical recurrent neural networks (RNNs) based on error feedback, which drives neural activity to track a reference output \citep{gilra2017predicting, deneve2017brain, alemi2017learning, bourdoukan2015enforcing}. These methods have so far only been used to train single-layer RNNs with fixed output and feedback weights, making it unclear whether they can be extended to deep neural networks.
Finally, two recent studies \citep{podlaski2020biological, kohan2018error} use error feedback in a dynamical setting to invert the forward pathway, thereby enabling errors to flow backward.
These approaches rely on a learning rule that is non-local in time and it remains unclear whether they approximate any known optimization method.
Addressing the latter, two recent studies take a first step by relating learned (non-dynamical) inverses of the forward pathway \citep{meulemans2020theoretical} and iterative inverses restricted to invertible networks \citep{bengio2020deriving} to approximate Gauss-Newton optimization.


Inspired by the Dynamic Inversion method \citep{podlaski2020biological}, we introduce Deep Feedback Control (DFC), a new biologically-plausible CA method that addresses the above-mentioned limitations and extends the control theory approach to learning \citep{gilra2017predicting, deneve2017brain, alemi2017learning, bourdoukan2015enforcing} to deep neural networks.
DFC uses a feedback controller that drives a deep neural network to match a desired output target. For learning, DFC then simply uses the dynamic change in the neuron activations to update their synaptic weights, resulting in a learning rule fully local in space and time. We show that DFC approximates Gauss-Newton (GN) optimization and therefore provides a fundamentally different approach to CA compared to BP. Furthermore, DFC does not require precise alignment between forward and feedback weights, nor does it rely on highly specific connectivity motifs. 
Interestingly, the neuron model used by DFC can be closely connected to recent multi-compartment models of cortical pyramidal neurons. Finally, we provide detailed experimental results, corroborating our theoretical contributions and showing that DFC does principled CA on standard computer-vision benchmarks in a way that fundamentally differs from standard BP.



\section{The Deep Feedback Control method}
\vspace{-0.3cm}
Here, we introduce the core parts of DFC. In contrast to conventional feedforward neural network models, DFC makes use of a dynamical neuron model (Section \ref{section:dynamical_neuron_model}). We use a feedback controller to drive the neurons of the network to match a desired output target (Section \ref{section:feedback_controller}), while simultaneously updating the synaptic weights using the change in neuronal activities (Section \ref{section:plasticity_rules}). This combination of dynamical neurons and controller leads to a simple but powerful learning method, that is linked to GN optimization and offers a flexible range of feedback connectivity (see Section \ref{sec:learning_TPDI}).

\subsection{Neuron and network dynamics}\label{section:dynamical_neuron_model}
\vspace{-0.2cm}
The first main component of DFC is a dynamical multilayer network, in which every neuron integrates its forward and feedback inputs according to the following dynamics:
\begin{align}\label{eq:network_dynamics}
    \tau_v \ddt \ve{v}_i(t) &= -\ve{v}_i(t) + W_i\phi\big(\ve{v}_{i-1}(t)\big) + Q_i\ve{u}(t) \quad 1\leq i \leq L,
\end{align}
with $\ve{v}_i$ a vector containing the pre-nonlinearity activations of the neurons in layer $i$, $W_i$ the forward weight matrix, $\phi$ a smooth nonlinearity, $\ve{u}$ a feedback input, $Q_i$ the feedback weight matrix, and $\tau_v$ a time constant. See Fig. \ref{fig:DFC_schematics}B for a schematic representation of the network. To simplify notation, we define $\ve{r}_i=\phi(\ve{v}_i)$ as the post-nonlinearity activations of layer $i$. The input $\ve{r}_0$ remains fixed throughout the dynamics \eqref{eq:network_dynamics}. Note that in the absence of feedback, i.e., $\ve{u}=0$, the equilibrium state of the network dynamics \eqref{eq:network_dynamics} corresponds to a conventional multilayer feedforward network state, which we denote with superscript `-':
\begin{align} \label{eq:feedforward_equilibrium}
    \ve{r}_i^- = \phi(\ve{v}_i^-) = \phi(W_i \ve{r}_{i-1}^-), \quad 1\leq i \leq L, \quad \text{with } \ve{r}_0^- = \ve{r}_0.
\end{align}

\subsection{Feedback controller} \label{section:feedback_controller}
\vspace{-0.2cm}
The second core component of DFC is a feedback controller, which is only active during learning. Instead of a single backward pass for providing feedback, DFC uses a feedback controller to continuously drive the network to an output target $\ve{r}^*_L$ (see Fig. \ref{fig:DFC_schematics}D). Following the Target Propagation framework \citep{lee2015difference, meulemans2020theoretical, bengio2020deriving}, we define $\ve{r}^*_L$ as the feedforward output nudged towards lower loss:

\begin{align}\label{eq:output_target}
    \ve{r}^*_L \triangleq  \ve{r}^-_L - \lambda \frac{\partial \mathcal{L}(\ve{r}_L,\ve{y})}{\partial \ve{r}_L}\Big\rvert_{\ve{r}_L^{ }=\ve{r}_L^{-}} = \ve{r}_L^- + \ves{\delta}_L,
\end{align}

with $\mathcal{L}(\ve{r}_L,\ve{y})$ a supervised loss function defining the task, $\ve{y}$ the label of the training sample, $\lambda$ a stepsize, and $\ves{\delta}_L$ shorthand notation.
Note that \eqref{eq:output_target} only needs the easily obtained loss gradient w.r.t. the output, e.g., for an $L^2$ output loss, one obtains the convex combination $\ve{r}_L^* = (1-2\lambda)\ve{r}_L^- + 2\lambda\ve{y}$.

The feedback controller produces a feedback signal $\ve{u}(t)$ to drive the network output $\ve{r}_L(t)$ towards its target $\ve{r}_L^*$, using the control error $\ve{e}(t) \triangleq \ve{r}^*_L - \ve{r}_L(t) $. A standard approach in designing a feedback controller is the Proportional-Integral-Derivative (PID) framework \citep{franklin2015feedback}. While DFC is compatible with various controller types, such as a full PID controller or a pure proportional controller (see Appendix \ref{app:controller_type}), we use a PI controller for a combination of simplicity and good performance, resulting in the following controller dynamics (see also Fig. \ref{fig:DFC_schematics}A):
\begin{align} \label{eq:controller_dynamics}
    \ve{u}(t) = K_I \ve{u}^{\text{int}}(t) + K_P \ve{e}(t), \quad 
    \tau_u \ddt \ve{u}^{\text{int}}(t) = \ve{e}(t) - \alpha \ve{u}^{\text{int}}(t),
\end{align}
where a leakage term is added to constrain the magnitude of $\ve{u}^{\text{int}}$. For mathematical simplicity, we take the control matrices equal to $K_I = I$ and $K_P = k_p I$ with $k_p\geq0$ the proportional control constant.
This PI controller adds a leaky integration of the error $\ve{u}^{\text{int}}$ to a scaled version of the error $k_p \ve{e}$ which could be implemented by a dedicated neural microcircuit (for a discussion see App. \ref{app:controller_microcircuit}). Drawing inspiration from the Target Propagation framework \citep{bengio2014auto, lee2015difference, meulemans2020theoretical, bengio2020deriving} and the Dynamic Inversion framework \citep{podlaski2020biological}, one can think of the controller and network dynamics as performing a \textit{dynamic inversion} of the output target $\ve{r}_L^*$ towards the hidden layers, as the controller dynamically changes the activation of the hidden layers until the output target is reached. 

\begin{figure}[h!]
\centering
\includegraphics[width=0.95\linewidth]{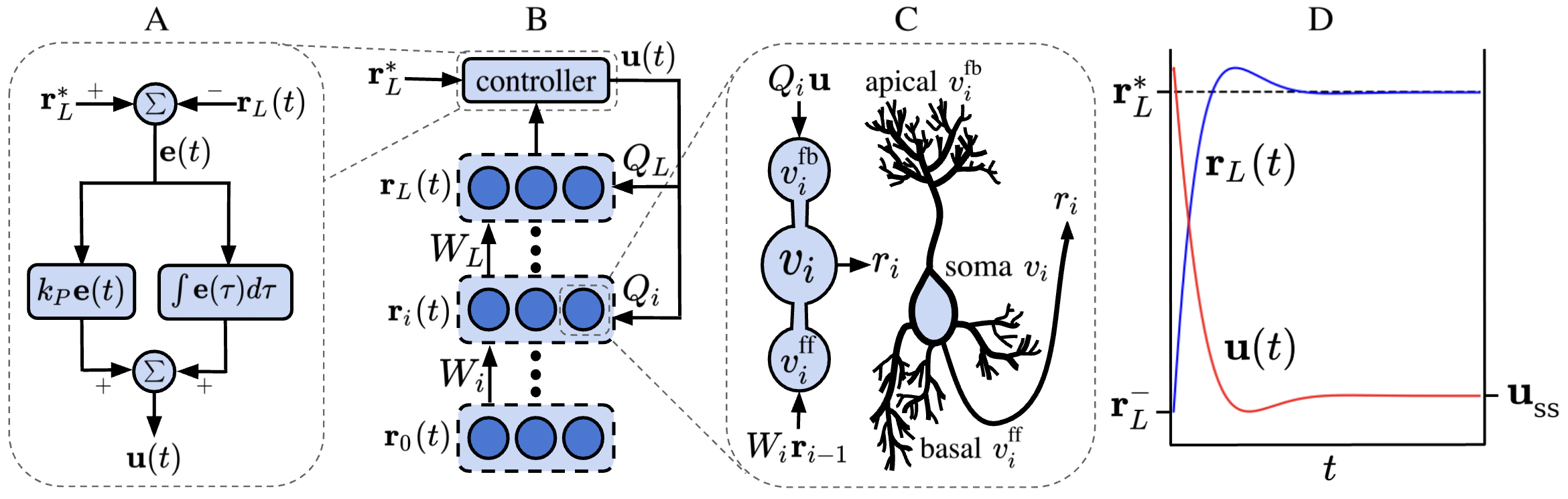}
\caption{(A) A block diagram of the controller, where we omitted the leakage term of the integral controller. (B) Schematic illustration of DFC. (C) Schematic illustration of the multi-compartment neuron used by DFC, compared to a cortical pyramidal neuron sketch (see also Discussion). (D) Illustration of the output $\ve{r}_L(t)$ and the controller dynamics $\ve{u}(t)$ in DFC.}
\label{fig:DFC_schematics}
\end{figure}

\subsection{Forward weight updates}\label{section:plasticity_rules}
\vspace{-0.2cm}
The update rule for the feedforward weights has the form:
\begin{align}\label{eq:W_dynamics}
    \tau_W \ddt W_i (t) = \big(\phi(\ve{v}_i(t)) - \phi(W_i\ve{r}_{i-1}(t))\big)\ve{r}_{i-1}(t)^T.
\end{align}
This learning rule simply compares the neuron's controlled activation to its current feedforward input and is thus local in space and time. Furthermore, it can be interpreted most naturally by compartmentalizing the neuron into the central compartment $\ve{v}_i$ from \eqref{eq:network_dynamics} and a feedforward compartment $\ve{v}_i^{\text{ff}} \triangleq W_i\ve{r}_{i-1}$ that integrates the feedforward input. Now, the forward weight dynamics \eqref{eq:W_dynamics} represents a delta rule using the difference between the actual firing rate of the neuron, $\phi(\ve{v}_i)$, and its estimated firing rate, $\phi(\ve{v}_i^{\text{ff}})$, based on the feedforward inputs. Note that we assume $\tau_W$ to be a large time constant, such that the network \eqref{eq:network_dynamics} and controller dynamics \eqref{eq:controller_dynamics} are not influenced by the weight dynamics, i.e., the weights are considered fixed in the timescale of the controller and network dynamics.

In Section \ref{sec:fb_learning}, we show how the feedback weights $Q_i$ can also be learned locally in time and space for supporting the stability of the network dynamics and the learning of $W_i$. This feedback learning rule needs a feedback compartment $\ve{v}_i^{\text{fb}} \triangleq Q_i\ve{u}$, leading to the \textit{three-compartment neuron} schematized in Fig. \ref{fig:DFC_schematics}C, inspired by recent multi-compartment models of the pyramidal neuron (see Discussion).
Now, that we introduced the DFC model, we will show that (i) the weight updates \eqref{eq:W_dynamics} can properly optimize a loss function (Section \ref{sec:learning_TPDI}), (ii) the resulting dynamical system is stable under certain conditions (Section \ref{sec:stability_TPDI}), and (iii) learning the feedback weights facilitates (i) and (ii) (Section \ref{sec:fb_learning}).

\section{Learning theory} \label{sec:learning_TPDI}
\vspace{-0.3cm}
To understand how DFC optimizes the feedforward mapping \eqref{eq:feedforward_equilibrium} on a given loss function, we link the weight updates \eqref{eq:W_dynamics} to mathematical optimization theory. We start by showing that DFC dynamically inverts the output error to the hidden layers (Section \ref{sec:dynamical_inversion}), which we link to GN optimization under flexible constraints on the feedback weights $Q_i$ and on layer activations (Section \ref{sec:gn_optimization}). In Section \ref{sec:minimum_norm_updates}, we relax some of these constraints, and show that DFC still does principled optimization by using \textit{minimum norm} (MN) updates for $W_i$. During this learning theory section, we assume stable dynamics, which we investigate in more detail in Section \ref{sec:stability_TPDI}. All theoretical results of this section are tailored towards a PI controller, and they can be easily extended to pure proportional or integral control (see App. \ref{app:controller_type}).

\subsection{DFC dynamically inverts the output error}\label{sec:dynamical_inversion}
\vspace{-0.2cm}
To understand how the weight update \eqref{eq:W_dynamics} can access error information, we start by investigating the steady state of the network dynamics \eqref{eq:network_dynamics} and the controller dynamics \eqref{eq:controller_dynamics}, assuming that all weights are fixed (hence, a separation of timescales). As the feedback controller controls all layers simultaneously, we introduce a compact notation for: concatenated neuron activations $\ve{v} \triangleq [\ve{v}^{T}_1, ..., \ve{v}^{T}_L]^T$, feedforward compartments $\ve{v}^{\text{ff}} \triangleq [\ve{v}^{\text{ff},T}_1, ..., \ve{v}^{\text{ff},T}_L]^T$, and feedback weights $Q \triangleq [Q_1^T ... Q_L^T]^T$. Lemma \ref{lemma:steady_state_solution_system} shows a first-order Taylor approximation of the steady-state solution (full proof in App. \ref{app:linearized_dyn}). 
\begin{lemma}\label{lemma:steady_state_solution_system}
    Assuming stable dynamics, a small target stepsize $\lambda$, and $W_i$ and $Q_i$ fixed, the steady-state solutions of the dynamical systems \eqref{eq:network_dynamics} and \eqref{eq:controller_dynamics} can be approximated by:
    \begin{align}\label{eq:l1_dynamical_inversion}
        \ve{u}_{\mathrm{ss}} = (JQ + \tilde{\alpha} I)^{-1}\ves{\delta}_L + \mathcal{O}(\lambda^2), \quad
        \ve{v}_{\mathrm{ss}} = \ve{v}^{\mathrm{ff}}_{\mathrm{ss}} +  Q (JQ + \tilde{\alpha} I)^{-1}\ves{\delta}_L + \mathcal{O}(\lambda^2),
    \end{align}
    with $J \triangleq \left.\left[\frac{\partial \ve{r}^-_L}{\partial \ve{v}_1},...,\frac{\partial \ve{r}^-_L}{\partial \ve{v}_L}\right]\right\rvert_{\ve{v}=\ve{v}^-}$ the Jacobian of the network output w.r.t. all $\ve{v}_i$, evaluated at the network equilibrium without feedback, $\ves{\delta}_L$ the output error as defined in \eqref{eq:output_target}, $\ve{v}^{\mathrm{ff}}_{i,ss} = W_i \phi(\ve{v}_{i-1,\mathrm{ss}})$, and $\tilde{\alpha} = \alpha/(1+\alpha k_p)$.
\end{lemma}
To get a better intuition of what this steady state represents, consider the scenario where we want to nudge the network activation $\ve{v}$ with $\Delta \ve{v}$, i.e., $\ve{v}_{\mathrm{ss}} = \ve{v}^{\text{ff}}_{\mathrm{ss}} + \Delta \ve{v}$, such that the steady-state network output equals its target $\ve{r}_L^*$. With linearized network dynamics, this results in solving the linear system $J \Delta \ve{v} = \ves{\delta}_L$. As $\Delta \ve{v}$ is of much higher dimension than $\ves{\delta}_L$, this is an underdetermined system with infinitely many solutions. Constraining the solution to the column space of $Q$ leads to the unique solution $\Delta \ve{v} = Q (JQ)^{-1}\ves{\delta}_L$, corresponding to the steady-state solution in Lemma \ref{lemma:steady_state_solution_system} minus a small damping constant $\tilde{\alpha}$.
Hence,
similar to \citet{podlaski2020biological}, through an interplay between the network and controller dynamics,
the controller \textit{dynamically inverts} the output error $\ves{\delta}_L$ to produce feedback that exactly drives the network output to its desired target.

\subsection{DFC approximates Gauss-Newton optimization}\label{sec:gn_optimization}
\vspace{-0.2cm}
To understand the optimization characteristics of DFC, we show that under flexible conditions on $Q_i$ and the layer activations, DFC approximates GN optimization. We first briefly review GN optimization and introduce two conditions needed for the main theorem. 

\textbf{Gauss-Newton optimization} \citep{gauss1809theoria} is an approximate second-order optimization method used in nonlinear least-squares regression. The GN update for the model parameters $\ves{\theta}$ is computed as:
\begin{align} \label{eq:damped_gn_iteration}
    \Delta \ves{\theta} &= J^{\dagger}_{\theta}\ve{e}_L,
\end{align}
with $J_{\theta}$ the Jacobian of the model output w.r.t. $\ves{\theta}$ concatenated for all minibatch samples, $J^{\dagger}_{\theta}$ its Moore-Penrose pseudoinverse, and $\ve{e}_L$ the output errors.

\begin{condition}\label{con:magnitude_r}
Each layer of the network, except from the output layer, has the same activation norm:
\begin{align}
    \|\ve{r}_0\|_2 = \|\ve{r}_1\|_2 = ... \|\ve{r}_{L-1}\|_2 \triangleq \|\ve{r}\|_2.
\end{align}
\end{condition}
Note that the latter condition considers a statistic $\|\ve{r}_i\|_2$ of a whole layer and does not impose specific constraints on single neural firing rates. This condition can be interpreted as each layer, except the output layer, having the same `energy budget' for firing. 

\begin{condition}\label{con:Q_GN}
The column space of $Q$ is equal to the row space of $J$.
\end{condition}
This more abstract condition imposes a flexible constraint on the feedback weights $Q_i$, that generalizes common learning rules with direct feedback connections \citep{lansdell2019learning, meulemans2020theoretical}. For instance, besides $Q = J^T$ (BP; \citep{lansdell2019learning}) and $Q = J^{\dagger}$ \citep{meulemans2020theoretical}, many other instances of $Q$ which have not yet been explored in the literature fulfill Condition \ref{con:Q_GN} (see Fig. \ref{fig:pca_feedbackweights}), hence leading to principled optimization (see Theorem \ref{theorem:GN_TPDI}). With these conditions in place, we are ready to state the main theorem of this section (full proof in App. \ref{app:learning_theory}).

\begin{theorem}\label{theorem:GN_TPDI}
Assuming Conditions \ref{con:magnitude_r} and \ref{con:Q_GN} hold, $\J$ is full rank, the task loss $\mathcal{L}$ is a $L^2$ loss, and $\lambda,\alpha \rightarrow 0$ 
, then the following steady-state (ss) updates for the forward weights,
\begin{align}\label{eq:update_W_ss}
    \Delta W_{i, \mathrm{ss}} = \eta (\ve{v}_{i,\mathrm{ss}} - \ve{v}_{i,\mathrm{ss}}^{\mathrm{ff}})\ve{r}_{i-1,\mathrm{ss}}^T\,\,,
\end{align}
with $\eta$ a stepsize parameter, align with the weight updates for $W_i$ for the feedforward network \eqref{eq:feedforward_equilibrium} prescribed by the GN optimization method with a minibatch size of 1.
\end{theorem}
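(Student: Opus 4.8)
The plan is to reduce both the DFC update and the Gauss--Newton (GN) update to a common outer-product form and check that they point in the same direction in the concatenated weight space. First I would invoke Lemma~\ref{lemma:steady_state_solution_system}: in the limit $\lambda,\alpha\to0$ the $\mathcal{O}(\lambda^2)$ remainder vanishes and $\tilde\alpha\to0$, so the controller drives the activations to $\vv_\sss-\vv^{\mathrm{ff}}_\sss = Q(JQ)^{-1}\vdl$. Condition~\ref{con:Q_GN} (column space of $Q$ equals row space of $J$) together with $J$ full rank is exactly what turns this into a pseudoinverse: writing $Q=J^\top A$ with $A$ invertible gives $Q(JQ)^{-1}=J^\top A(JJ^\top A)^{-1}=J^\top(JJ^\top)^{-1}=J^\dagger$, independent of the particular feedback weights. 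Hence the $i$-th block satisfies $\vv_{i,\sss}-\vv^{\mathrm{ff}}_{i,\sss}=[J^\dagger\vdl]_i$ up to higher order, and since $\vdl=\mathcal{O}(\lambda)$ the presynaptic activation obeys $\vr_{i-1,\sss}=\vr^-_{i-1}+\mathcal{O}(\lambda)$. The steady-state rule \eqref{eq:update_W_ss} therefore equals $\Delta W_{i,\sss}=\eta\,[J^\dagger\vdl]_i(\vr^-_{i-1})^\top$ to leading order, and for the $L^2$ loss $\vdl=-\lambda\,\partial\mathcal{L}/\partial\vr_L=-\lambda\,\ve{e}_L$, so the remaining task is to compare $[J^\dagger\ve{e}_L]_i(\vr^-_{i-1})^\top$ with the GN update.

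For the GN side I would not compute $J^\dagger_{\theta}$ directly but exploit a two-stage minimum-norm decomposition. Because $J$ is full rank and the minibatch size is $1$, the GN step $J^\dagger_{\theta}\ve{e}_L$ is the minimum-norm solution of the linearized output constraint, i.e.\ it minimizes $\sum_i\|\delta W_i\|_F^2$ subject to the output moving by $\ve{e}_L$. A perturbation $\delta W_i$ alters the output only through $\delta\vv_i\triangleq\delta W_i\,\vr^-_{i-1}$ (the activation $\vr^-_{i-1}$ depends only on earlier weights), so with $J=[J_1,\dots,J_L]$ the constraint reads $\sum_i J_i\,\delta\vv_i=J\,\delta\vv=\ve{e}_L$. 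For a prescribed $\delta\vv_i$, the minimum-Frobenius-norm matrix solving $\delta W_i\vr^-_{i-1}=\delta\vv_i$ is the rank-one $\delta W_i=\delta\vv_i(\vr^-_{i-1})^\top/\|\vr^-_{i-1}\|^2$, with $\|\delta W_i\|_F^2=\|\delta\vv_i\|^2/\|\vr^-_{i-1}\|^2$. Thus GN is equivalent to minimizing $\sum_i\|\delta\vv_i\|^2/\|\vr^-_{i-1}\|^2$ over $\delta\vv$ subject to $J\,\delta\vv=\ve{e}_L$.

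Here Condition~\ref{con:magnitude_r} does the decisive work: when $\|\vr^-_{i-1}\|=\|\vr\|$ is common to all hidden layers, the weighted objective collapses to the unweighted $\|\delta\vv\|^2$, whose constrained minimizer is exactly $\delta\vv=J^\dagger\ve{e}_L$. Back-substituting into the rank-one formula yields the GN weight update $\delta W_i=[J^\dagger\ve{e}_L]_i(\vr^-_{i-1})^\top/\|\vr\|^2$, which differs from $\Delta W_{i,\sss}=\eta\,[J^\dagger\vdl]_i(\vr^-_{i-1})^\top$ only by the global scalar $-\eta\lambda\|\vr\|^2$ shared across all layers; being layer-independent, this factor leaves the two updates parallel in the concatenated weight space, which establishes the claimed alignment. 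I expect the main obstacle to be the GN decomposition step: one must justify that partially minimizing over the induced activation change and taking the per-layer rank-one minimizer genuinely reproduces the global $J^\dagger_{\theta}\ve{e}_L$, and it is precisely the equal-norm Condition~\ref{con:magnitude_r} that makes the per-layer weighting uniform so that the activation-space pseudoinverse $J^\dagger$ (the object the controller actually inverts) coincides in direction with the weight-space one. Care is also needed to check that $J$ full rank together with nonzero activations makes $J_{\theta}$ full row rank, so that the minimum-norm characterization of $J^\dagger_{\theta}\ve{e}_L$ is valid.
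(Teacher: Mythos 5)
Your proposal is correct, and it reaches the paper's conclusion by a genuinely different route on the Gauss--Newton side. The paper computes $J_{\bar W}^{\dagger}$ algebraically: it factorizes the weight Jacobian as $J_{\bar W}=JR^T$ with $R^T$ the block Kronecker structure of eq.~\eqref{eq:R_matrix}, observes that Condition~\ref{con:magnitude_r} makes $\frac{1}{\|\ve{r}\|_2}R^T$ have orthonormal rows, and then invokes the factorization rule for pseudoinverses to obtain $J_{\bar W}^{\dagger}=\frac{1}{\|\ve{r}\|_2^2}RJ^{\dagger}$ (Lemma~\ref{lemma:GN_update_feedforward}). You instead use the variational characterization of the overparameterized GN step as a minimum-norm problem and perform a two-stage minimization: first over $\delta W_i$ at fixed induced activation change $\delta\ve{v}_i=\delta W_i\ve{r}_{i-1}^-$, which yields the rank-one minimizer and the weight $\|\delta\ve{v}_i\|^2/\|\ve{r}_{i-1}^-\|^2$, and then over $\delta\ve{v}$ subject to $J\delta\ve{v}=\ve{e}_L$. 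This is essentially the argument the paper reserves for Theorem~\ref{theorem:MN_DFC} (there solved via a Lagrangian), so your route makes the dependence of GN on the minimum-norm picture explicit and arguably more transparent: Condition~\ref{con:magnitude_r} visibly does nothing except render the per-layer weights uniform so that the activation-space pseudoinverse produced by the controller coincides in direction with the weight-space one. What the paper's algebraic route buys is that it never has to justify the interchange of the two minimizations or establish full row rank of $J_{\bar W}$ separately; both are absorbed into the pseudoinverse factorization lemma. On the dynamical-inversion side, your direct computation $Q(JQ)^{-1}=J^{T}A\,(JJ^{T}A)^{-1}=J^{\dagger}$ for $Q=J^{T}A$ with $A$ invertible is a cleaner sufficiency argument than the paper's verification of the four Moore--Penrose conditions in Lemma~\ref{lemma:penrose_conditions_Q} (whose extra length is spent proving necessity of Condition~\ref{con:Q_GN}, which the theorem itself does not need). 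Two small points of care: the interchange of minimizations requires surjectivity of $\delta W_i\mapsto\delta W_i\ve{r}_{i-1}^-$, guaranteed by $\ve{r}_{i-1}^-\neq 0$; and your convention $\vdl=-\lambda\ve{e}_L$ clashes with the $\ve{e}_L$ of eq.~\eqref{eq:damped_gn_iteration}, which denotes the residual $\ve{y}-\ve{r}_L^-$, so that in fact $\vdl=2\lambda\ve{e}_L$ for the $L^2$ loss and the layer-independent proportionality constant between the two updates is positive, as alignment requires.
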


\begin{wrapfigure}{r}{0.40\linewidth}
  \begin{center}
    \vspace*{-0.6cm}
    \includegraphics[width=1.0\linewidth]{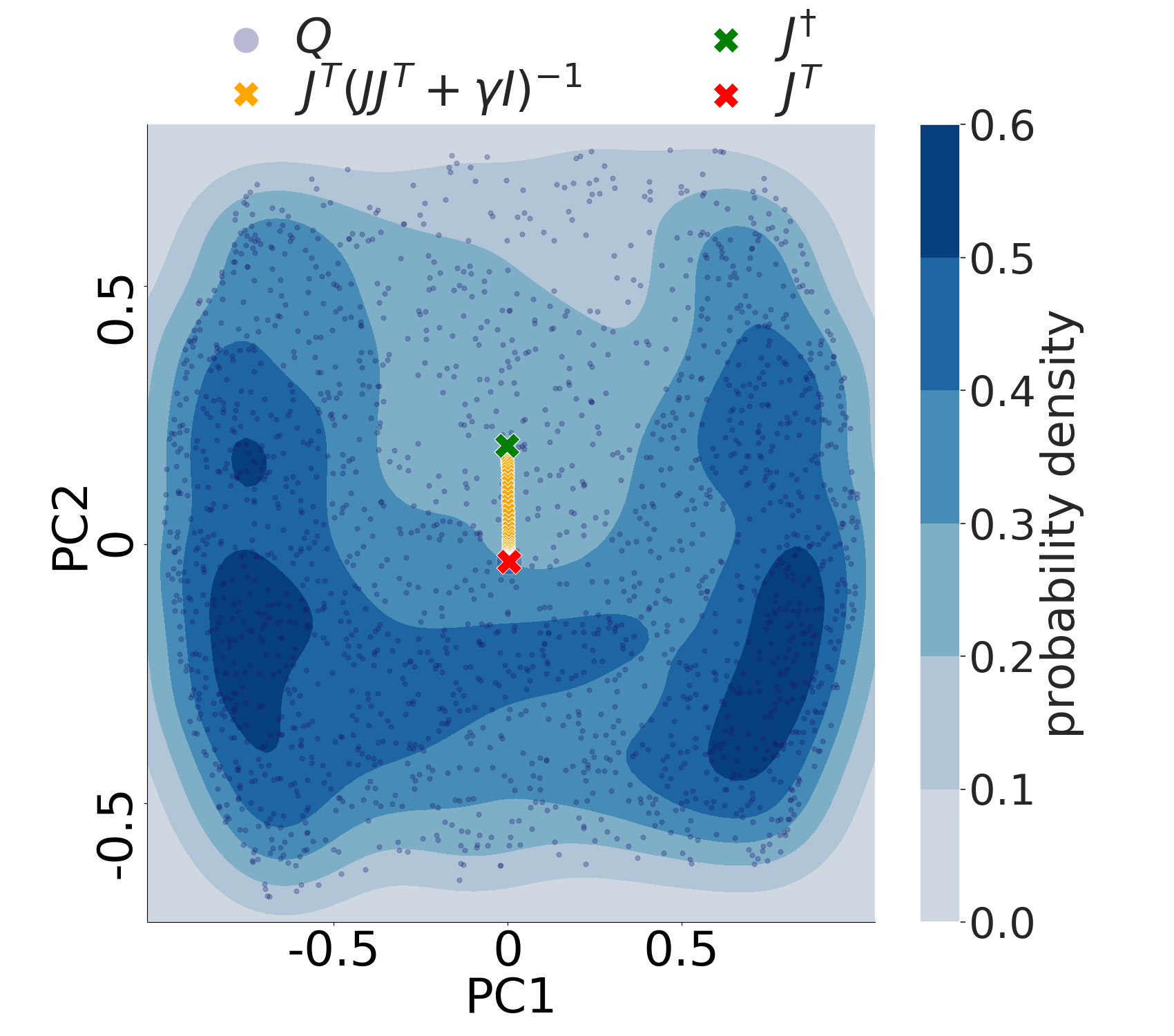}
  \end{center}
  \vspace{-0.2cm}
  \caption{Randomly generated feedback matrices $Q$ (blue) that satisfy Conditions \ref{con:Q_GN} and \ref{con:local_stability}, and have unity norm, visualized by a principal component analysis, with density contours added for visual clarity. $J^T$, $J^{\dagger}$, and $J^T(JJ^T + \gamma I)^{-1},\, \gamma \in [10^{-5}, 10^2]$, are added, highlighting that the optimal feedback configurations for DFC (blue) span a much wider space compared to conventional CA methods.}
    \label{fig:pca_feedbackweights}
  \vspace*{-0.4cm}
\end{wrapfigure}

In this theorem, we need Condition \ref{con:Q_GN} such that the dynamical inversion $Q (JQ)^{-1}$ \eqref{eq:l1_dynamical_inversion} equals the pseudoinverse of $\J$ and we need Condition \ref{con:magnitude_r} to extend this pseudoinverse to the Jacobian of the output w.r.t. the network weights, as in eq. \eqref{eq:damped_gn_iteration}.
Theorem \ref{theorem:GN_TPDI} links the DFC method to GN optimization, thereby showing that it does principled optimization, while being fundamentally different from BP. In contrast to recent work that connects target propagation to GN \citep{meulemans2020theoretical, bengio2020deriving}, we do not need to approximate the GN curvature matrix by a block-diagonal matrix but use the full curvature instead. Hence, one can use Theorem 2 in \citet{cai2019gram} to obtain convergence results for this setting of GN with a minibatch size of 1, in highly overparameterized networks. Strikingly, the feedback path of DFC does not need to align with the forward path or its inverse to provide optimally aligned weight updates with GN, as long as it satisfies the flexible Condition \ref{con:Q_GN} (see Fig. \ref{fig:pca_feedbackweights}).

The steady-state updates \eqref{eq:update_W_ss} used in Theorem \ref{theorem:GN_TPDI} differ from the actual updates \eqref{eq:W_dynamics} in two nuanced ways. First, the plasticity rule \eqref{eq:W_dynamics} uses a nonlinearity, $\phi$, of the compartment activations, whereas in Theorem \ref{theorem:GN_TPDI} this nonlinearity is not included. There are two reasons for this: (i) the use of $\phi$ in \eqref{eq:W_dynamics} can be linked to specific biophysical mechanisms in the pyramidal cell \citep{urbanczik2014learning} (see Discussion), and (ii) using $\phi$ makes sure that saturated neurons do not update their forward weights, which leads to better performance (see App. \ref{app:linear_undamped_lr}). Second, in Theorem \ref{theorem:GN_TPDI}, the weights are only updated at steady state, whereas in \eqref{eq:W_dynamics} they are continuously updated during the dynamics of the network and controller. Before settling rapidly, the dynamics oscillate around the steady-state value (see Fig. \ref{fig:DFC_schematics}D), and hence, the accumulated continuous updates \eqref{eq:W_dynamics} will be approximately equal to its steady-state equivalent, since the oscillations approximately cancel each other out and the steady state is quickly reached (see Section \ref{sec:toy_regression} and App. \ref{app:relation_ss_cont}). Theorem \ref{theorem:GN_TPDI} needs a $L^2$ loss function and Condition \ref{con:magnitude_r} and \ref{con:Q_GN} to hold for linking DFC with GN. In the following subsection, we relax these assumptions and show that DFC still does principled optimization.



\subsection{DFC uses weighted minimum norm updates} \label{sec:minimum_norm_updates}
\vspace{-0.2cm}
GN optimization with a minibatch size of 1 is equivalent to MN updates \citep{meulemans2020theoretical}, i.e., it computes the smallest possible weight update such that the network exactly reaches the current output target after the update. These MN updates can be generalized to weighted MN updates for targets using arbitrary loss functions. The following theorem shows the connection between DFC and these weighted MN updates, while removing the need for Condition \ref{con:magnitude_r} and an $L^2$ loss (full proof in App. \ref{app:learning_theory}). 

\begin{theorem}\label{theorem:MN_DFC}
Assuming stable dynamics, Condition \ref{con:Q_GN} holds and $\lambda, \alpha \rightarrow 0$, the steady-state weight updates \eqref{eq:update_W_ss} are proportional to the weighted MN updates of $W_i$ for letting the feedforward output $\ve{r}^-_L$ reach $\ve{r}_L^*$, i.e., the solution to the following optimization problem:
\begin{align}
    \argmin_{\Delta W_i, i \in [1,..,L]}\quad \sum_{i=1}^{L}\|\ve{r}_{i-1}^{-(m)}\|_2^2\|\Delta W_i\|_F^2 \qquad \text{s.t.}\quad \ve{r}^{-(m+1)}_L = \ve{r}_L^{*(m)},
\end{align}
with $m$ the iteration and $\ve{r}^{-(m+1)}_L$ the network output without feedback after the weight update.
\end{theorem}

Theorem \ref{theorem:MN_DFC} shows that Condition \ref{con:Q_GN} enables the controller to drive the network towards its target $\ve{r}_L^*$ with MN activation changes, $\Delta \ve{v} = \ve{v} - \ve{v}^{\text{ff}}$, which combined with the steady-state weight update \eqref{eq:update_W_ss} result in weighted MN updates $\Delta W_i$ (see also App. \ref{app:intuitive_condition2}). When the feedback weights do not have the correct column space, the weight updates will not be MN. Nevertheless, the following proposition shows that the weight updates still follow a descent direction given arbitrary feedback weights. 


\begin{proposition}\label{prop:descent_direction}
Assuming stable dynamics and $\lambda, \alpha \rightarrow 0$, the steady-state weight updates \eqref{eq:update_W_ss} with a layer-specific learning rate $\eta_i = \eta/\|r_{i-1}\|_2^2$ lie within 90 degrees of the loss gradient direction. 
\end{proposition}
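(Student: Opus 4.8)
The plan is to establish the descent-direction property directly: I would show that the summed Frobenius inner product between the steady-state updates and the negative loss gradient, $\sum_i\langle\Delta W_i,-\nabla_{W_i}\Lagr\rangle_F$, is strictly positive, which is exactly the claim that the update lies within $90^\circ$ of the steepest-descent direction $-\nabla_{W}\Lagr$. First I would use Lemma \ref{lemma:steady_state_solution_system} to identify the controlled activation shift of layer $i$, $\Delta\ve{v}_i \triangleq \ve{v}_{i,\sss}-\ve{v}^{\mathrm{ff}}_{i,\sss}$, with the $i$-th block of $Q(\J Q+\tilde\alpha I)^{-1}\vdl + \mathcal{O}(\lambda^2)$. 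In parallel I would write the exact feedforward gradient by the chain rule, $\partial\Lagr/\partial\ve{v}_i = \J_i^T\,\partial\Lagr/\partial\ve{r}_L$ with $\J_i \triangleq \partial\ve{r}_L^-/\partial\ve{v}_i$ the $i$-th block of $\J$, and then eliminate the loss gradient using the output-target definition $\vdl = -\lambda\,\partial\Lagr/\partial\ve{r}_L$, giving $-\nabla_{W_i}\Lagr = \tfrac{1}{\lambda}\J_i^T\vdl\,\ve{r}_{i-1}^T$. Since $\vdl\to 0$ as $\lambda\to0$, the steady state collapses onto the feedforward equilibrium, so $\J$, $\ve{r}_{i-1}$ and $\Delta\ve{v}_i$ may consistently be evaluated there to leading order.

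The core of the argument is the rank-one structure of both $\Delta W_i = \eta_i\,\Delta\ve{v}_i\,\ve{r}_{i-1}^T$ and $-\nabla_{W_i}\Lagr$. Using $\langle A,B\rangle_F = \Tr(A^TB)$ and cyclicity of the trace, I would compute
\[
\big\langle \Delta W_i,\,-\nabla_{W_i}\Lagr\big\rangle_F = \frac{\eta}{\lambda\norm{\ve{r}_{i-1}}_2^2}\,\Tr\!\big(\ve{r}_{i-1}\Delta\ve{v}_i^T\J_i^T\vdl\,\ve{r}_{i-1}^T\big) = \frac{\eta}{\lambda}\,\langle \Delta\ve{v}_i,\,\J_i^T\vdl\rangle .
\]
The decisive point is that the factor $\norm{\ve{r}_{i-1}}_2^2$ generated by $\Tr(\ve{r}_{i-1}^T\ve{r}_{i-1})$ is cancelled exactly by the layer-specific rate $\eta_i = \eta/\norm{\ve{r}_{i-1}}_2^2$; a uniform rate would instead leave layer-dependent weights $\norm{\ve{r}_{i-1}}_2^2$ that block the recombination below. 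Summing over layers and using the block structure $\J = [\J_1,\dots,\J_L]$ and $\Delta\ve{v} = [\Delta\ve{v}_1^T,\dots,\Delta\ve{v}_L^T]^T$, the per-layer inner products telescope into one global inner product,
\[
\sum_{i=1}^L \big\langle \Delta W_i,\,-\nabla_{W_i}\Lagr\big\rangle_F = \frac{\eta}{\lambda}\,\langle \J\,\Delta\ve{v},\,\vdl\rangle .
\]

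Next I would substitute $\J\,\Delta\ve{v} = \J Q(\J Q+\tilde\alpha I)^{-1}\vdl$ and send $\alpha\to0$, so $\tilde\alpha\to0$. Stability of the linearized dynamics forces the eigenvalues of $\J Q$ into the open right half-plane, so $\J Q$ is invertible and $\J Q(\J Q+\tilde\alpha I)^{-1}\to I$; hence $\J\,\Delta\ve{v}\to\vdl$ and the summed inner product tends to $\tfrac{\eta}{\lambda}\norm{\vdl}_2^2 = \eta\lambda\norm{\partial\Lagr/\partial\ve{r}_L}_2^2 > 0$. Finally I would verify that the neglected terms cannot flip the sign: since $\vdl = \mathcal{O}(\lambda)$ the leading inner product is $\mathcal{O}(\lambda^2)$, whereas the discarded $\mathcal{O}(\lambda^2)$ correction in $\Delta\ve{v}$ and the $\mathcal{O}(\lambda)$ gap between the steady state and the feedforward equilibrium contribute only at $\mathcal{O}(\lambda^3)$, so for small $\lambda$ the positive leading term dominates.

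The hard part will be the final limit for genuinely arbitrary feedback weights. If $Q$ is unconstrained, $\J Q$ need not be invertible nor have a right-half-plane spectrum, in which case $\J Q(\J Q+\tilde\alpha I)^{-1}$ does not converge to $I$ and $\langle\J\,\Delta\ve{v},\vdl\rangle$ could vanish or even change sign. The cleanest way around this is to tie the descent guarantee directly to stability rather than to Condition \ref{con:Q_GN}: I would argue that the local stability condition (Condition \ref{con:local_stability}) already constrains $\J Q$ enough that the symmetric part of $\J Q(\J Q+\tilde\alpha I)^{-1}$ stays positive definite in the limit, making the angle strictly acute. Pinning down exactly how much stability buys here, and ruling out the degenerate boundary case, is the delicate step of the proof.
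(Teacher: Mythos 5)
Your proposal is correct and follows essentially the same route as the paper's proof: the layer-specific rate $\eta_i=\eta/\|\ve{r}_{i-1}\|_2^2$ cancels the $\|\ve{r}_{i-1}\|_2^2$ factors (the paper encodes this as $\lim_{\lambda\to 0}R^TM^{-2}R_{\mathrm{ss}}=I$), the per-layer inner products collapse to $\ves{\delta}_L^T JQ(JQ+\tilde{\alpha}I)^{-1}\ves{\delta}_L$, and the limit $JQ(JQ+\tilde{\alpha}I)^{-1}\to I$ yields $\|\ves{\delta}_L\|_2^2>0$. The only divergence is at the step you flag as delicate: the paper does not derive invertibility of $JQ$ from stability but instead proves the limit via an SVD of $Q$ (Lemma \ref{lemma:v_ss}), assuming the block $\tilde{J}_1$ is full rank --- i.e., it simply excludes the degenerate boundary case (a zero eigenvalue of $JQ$, which stability alone does not rule out as $\alpha\to 0$) by hypothesis rather than resolving it.
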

\vspace{-0.1cm}

\section{Stability of DFC}\label{sec:stability_TPDI}
\vspace{-0.3cm}
Until now, we assumed that the network dynamics are stable, which is necessary for DFC, as an unstable network will diverge, making learning impossible. In this section, we investigate the conditions on the feedback weights $Q_i$ necessary for stability. To gain intuition, we linearize the network around its feedforward values, assume a separation of timescales between the controller and the network ($\tau_u \gg \tau_v$), and only consider integrative control ($k_p=0$). This results in the following dynamics (see App. \ref{app:stability} for the derivation):
\begin{align}\label{eq:linearization_u}
    \tau_u \ddt \ve{u}(t) = - (JQ + \alpha I)\ve{u}(t) + \ve{\delta}_L.
\end{align}
Hence, in this simplified case, the local stability of the network around the equilibrium point depends on the eigenvalues of $JQ$, which is formalized in the following condition and proposition.

\begin{condition}\label{con:local_stability}
    Given the network Jacobian evaluated at the steady state, $J_{\mathrm{ss}} \triangleq \left.\left[\frac{\partial \ve{r}^-_L}{\partial \ve{v}_1},...,\frac{\partial \ve{r}^-_L}{\partial \ve{v}_L}\right]\right\rvert_{\ve{v}=\ve{v}_{\mathrm{ss}}}$, the real parts of the eigenvalues of $J_{\mathrm{ss}}Q$ are all greater than $-\alpha$.
\end{condition}
\begin{proposition}\label{prop:local_stability}
Assuming $\tau_u \gg \tau_v$ and $k_p=0$, the network and controller dynamics are locally asymptotically stable around its equilibrium iff Condition \ref{con:local_stability} holds.
\end{proposition}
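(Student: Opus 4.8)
The plan is to treat the coupled network--controller system as a singularly perturbed dynamical system in the joint state $(\ve{v},\ve{u})$ and reduce it to the slow controller subsystem, whose linear stability is governed exactly by Condition~\ref{con:local_stability}. Since $k_p=0$, the controller \eqref{eq:controller_dynamics} gives $\ve{u}=\ve{u}^{\mathrm{int}}$, so the full system collapses to
\begin{align}
    \tau_v \ddt \ve{v} = -\ve{v} + \ve{v}^{\mathrm{ff}}(\ve{v}) + Q\ve{u}, \qquad \tau_u \ddt \ve{u} = \ve{r}_L^* - \phi(\ve{v}_L) - \alpha \ve{u},
\end{align}
where $\ve{v}^{\mathrm{ff}}(\ve{v})$ stacks the feedforward drives $W_i\phi(\ve{v}_{i-1})$. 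First I would fix the equilibrium $(\ve{v}_{\mathrm{ss}},\ve{u}_{\mathrm{ss}})$, whose existence follows from the steady-state analysis of Lemma~\ref{lemma:steady_state_solution_system}, and linearize both equations about it. Writing $F \triangleq \partial \ve{v}^{\mathrm{ff}}/\partial\ve{v}|_{\ve{v}_{\mathrm{ss}}}$, which is strictly lower block-triangular because layer $i$ receives feedforward input only from layer $i-1$, and $P \triangleq \partial\phi(\ve{v}_L)/\partial\ve{v}|_{\ve{v}_{\mathrm{ss}}}$, which is nonzero only on its last block, the linearized system has block Jacobian
\begin{align}
    A = \begin{pmatrix} \tfrac{1}{\tau_v}(-I + F) & \tfrac{1}{\tau_v}Q \\[2pt] -\tfrac{1}{\tau_u}P & -\tfrac{\alpha}{\tau_u} I \end{pmatrix}.
\end{align}

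Next I would exploit the timescale separation $\tau_u \gg \tau_v$ by treating $\epsilon \triangleq \tau_v/\tau_u$ as a small singular-perturbation parameter. The fast (boundary-layer) subsystem is governed by $\tfrac{1}{\tau_v}(-I+F)$; because $F$ is strictly lower block-triangular it is nilpotent, so $-I+F$ has every eigenvalue equal to $-1$ and is therefore Hurwitz, and $I-F$ is invertible. This guarantees both that the fast modes are always stable, independently of $Q$, and that the slow manifold $\delta\ve{v} = (I-F)^{-1}Q\,\delta\ve{u}$ is well defined. Substituting this relation into the controller equation yields the reduced slow dynamics
\begin{align}
    \tau_u \ddt \delta\ve{u} = -\big(P(I-F)^{-1}Q + \alpha I\big)\delta\ve{u},
\end{align}
which recovers \eqref{eq:linearization_u} once we identify the total network Jacobian $J_{\mathrm{ss}} = P(I-F)^{-1}$ --- the same object as $J$ in Lemma~\ref{lemma:steady_state_solution_system}, now evaluated at $\ve{v}_{\mathrm{ss}}$. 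The key point is that eliminating the fast variables converts the purely local output derivative $P$ seen by the controller into the full backpropagation-style Jacobian $J_{\mathrm{ss}}$ that enters Condition~\ref{con:local_stability}.

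Finally I would invoke the eigenvalue-splitting result for singularly perturbed linear systems: for $\epsilon$ small enough the $n+n_L$ eigenvalues of $A$ separate into $n$ fast eigenvalues approaching those of $\tfrac{1}{\tau_v}(-I+F)$, all with real part near $-1/\tau_v<0$, and $n_L$ slow eigenvalues approaching those of $-\tfrac{1}{\tau_u}(J_{\mathrm{ss}}Q + \alpha I)$. Since the fast block is unconditionally Hurwitz, the stability of $A$ is decided entirely by the slow block: $A$ is Hurwitz iff $J_{\mathrm{ss}}Q + \alpha I$ is Hurwitz, i.e. iff every eigenvalue of $J_{\mathrm{ss}}Q$ has real part strictly greater than $-\alpha$, which is exactly Condition~\ref{con:local_stability}. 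Lyapunov's indirect method then transfers this linear characterization to local asymptotic stability of the nonlinear system, yielding both directions of the ``iff''. The main obstacle is making the timescale-separation step rigorous rather than heuristic: one must verify the hypotheses of the singular-perturbation eigenvalue theorem --- a Hurwitz fast subsystem and a well-posed slow manifold, both supplied by the nilpotency of $F$ --- and argue that $\tau_u/\tau_v$ is large enough that no slow eigenvalue is perturbed across the imaginary axis. The only case the argument leaves open is the non-generic boundary where an eigenvalue of $J_{\mathrm{ss}}Q$ lies exactly on the line $\mathrm{Re}=-\alpha$, where the indirect method is inconclusive; this measure-zero situation is excluded by the strict inequality in Condition~\ref{con:local_stability}.
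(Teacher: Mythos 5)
Your proposal is correct and reaches the conclusion by the same essential route as the paper: reduce to the slow controller subsystem \eqref{eq:linearization_u}, observe that its Jacobian at equilibrium is $-(J_{\mathrm{ss}}Q+\alpha I)$, and apply Lyapunov's indirect method, so that local asymptotic stability holds iff every eigenvalue of $J_{\mathrm{ss}}Q$ has real part exceeding $-\alpha$. The difference is one of rigor rather than of strategy: the paper simply declares that under $\tau_u\gg\tau_v$ ``the stability of the system is entirely up to the controller dynamics'' and analyzes only the controller Jacobian, whereas you justify this reduction explicitly as a singular perturbation of the joint $(\ve{v},\ve{u})$ system --- verifying that the fast block $\tfrac{1}{\tau_v}(-I+F)$ is unconditionally Hurwitz because the feedforward Jacobian $F$ is strictly lower block-triangular and hence nilpotent, checking that the slow manifold $\delta\ve{v}=(I-F)^{-1}Q\,\delta\ve{u}$ is well posed, and confirming the identity $J_{\mathrm{ss}}=P(I-F)^{-1}$ (which indeed reproduces the chain-rule Jacobian $[J_1\ \dots\ J_L]$ used in the paper). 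This buys a genuine proof that the eigenvalues of the full block Jacobian split into a fast group with real parts near $-1/\tau_v$ and a slow group governed by $J_{\mathrm{ss}}Q+\alpha I$, at the cost of needing $\tau_v/\tau_u$ small enough that no slow eigenvalue crosses the imaginary axis. Your closing caveat about eigenvalues lying exactly on the line $\mathrm{Re}=-\alpha$, where the indirect method is inconclusive, applies equally to the paper's argument; neither proof resolves that boundary case, and the strict inequality in Condition~\ref{con:local_stability} is what keeps both honest.
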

This proposition follows directly from Lyapunov's Indirect Method \citep{doi:10.1080/00207179208934253}. When assuming the more general case where $\tau_v$ is not negligible and $k_p > 0$, the stability criteria quickly become less interpretable (see App. \ref{app:stability}). However, experimentally, we see that Condition \ref{con:local_stability} is a good proxy condition for guaranteeing stability in the general case where $\tau_v$ is not negligible and $k_p > 0$ (see Section \ref{sec:experiments} and App. \ref{app:stability}).

\section{Learning the feedback weights} \label{sec:fb_learning}
\vspace{-0.3cm}
Condition \ref{con:Q_GN} and \ref{con:local_stability} emphasize the importance of the feedback weights for enabling efficient learning and ensuring stability of the network dynamics, respectively. As the forward weights, and hence the network Jacobian, $J$, change during training, the set of feedback configurations that satisfy Conditions \ref{con:Q_GN} and \ref{con:local_stability} also change. This creates the need to adapt the feedback weights accordingly to ensure efficient learning and network stability. We solve this challenge by learning the feedback weights, such that they can adapt to the changing network during training. We separate forward and feedback weight training in alternating wake-sleep phases \citep{hinton1995wake}. Note that in practice, a fast alternation between the two phases is not required (see Section \ref{sec:experiments}). 

Inspired by the Weight Mirror method \citep{akrout2019deep}, we learn the feedback weights by inserting independent zero-mean noise $\ves{\epsilon}$ in the system dynamics:
\begin{align}\label{eq:network_dynamic}
    \tau_v \ddt \ve{v}_i(t) = -\ve{v}_i(t) + W_i\phi\big(\ve{v}_{i-1}(t)\big) + Q_i\ve{u}(t) + \sigma \ves{\epsilon}_i.
\end{align}
The noise fluctuations propagated to the output carry information from the network Jacobian, $J$. To let $\ve{e}$, and hence $\ve{u}$, incorporate this noise information, we set the output target $\ve{r}^*_L$ to the average network output $\ve{r}_L^-$.
As the network is continuously perturbed by noise, the controller will try to counteract the noise and regulate the network towards the output target $\ve{r}_L^-$. The feedback weights can then be trained with a simple anti-Hebbian plasticity rule with weight decay, which is local in space and time:
\begin{align}\label{eq:Q_dynamics}
    \tau_Q \ddt Q_i(t) = -\ve{v}^{\text{fb}}_i(t)\ve{u}(t)^T - \beta Q_i,
\end{align}
where $\beta$ is the scale factor of the weight decay term and where we assume that a subset of the noise input $\ves{\epsilon}_i$ enters through the feedback compartment, i.e., $\ve{v}^{\text{fb}}_i = Q_i \ve{u} + \sigma_{\text{fb}} \ves{\epsilon}_i^{\text{fb}}$. The correlation between the noise in $\ve{v}^{\text{fb}}_i$ and noise fluctuations in $\ve{u}$ provides the teaching signal for $Q_i$. Theorem \ref{theorem:fb_learning_simplified} shows under simplifying assumptions that the feedback learning rule \eqref{eq:Q_dynamics} drives $Q_i$ to satisfy Condition \ref{con:Q_GN} and \ref{con:local_stability} (see App. \ref{app:fb_learning} for the full theorem and its proof).

\begin{theorem}[Short version]\label{theorem:fb_learning_simplified}
Assume a separation of timescales $\tau_v \ll \tau_u \ll \tau_Q$, $\alpha$ big, $k_p=0$, $\ve{r}_L^* = \ve{r}_L^-$, and Condition \ref{con:local_stability} holds. Then, for a fixed input sample and $\sigma \rightarrow 0$, the first moment of $Q$ converges approximately to:
\begin{align}\label{eq:t6_Q_ss}
\lim_{\sigma \rightarrow 0}\mathbb{E}[Q_{\mathrm{ss}}] \appropto J^{T} (JJ^T + \gamma I)^{-1},
\end{align}
for some $\gamma > 0$. Furthermore, $\mathbb{E}[Q_{\mathrm{ss}}]$ satisfies Conditions \ref{con:Q_GN} and \ref{con:local_stability}, even if $\alpha = 0$ in the latter.
\end{theorem}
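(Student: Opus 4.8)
The plan is to exploit the three-way separation of timescales $\tau_v \ll \tau_u \ll \tau_Q$ as an averaging / stochastic-approximation argument: on the slow timescale of the feedback weights, $Q$ effectively sees only the noise-average of the Hebbian product $-\ve{v}_i^{\text{fb}}\ve{u}^T$, evaluated after $\ve{v}$ and $\ve{u}$ have relaxed to their (noise-driven) quasi-steady states at fixed $Q$. So the first step is to obtain those fast steady states as explicit functions of the injected noise. Since $\ve{r}_L^* = \ve{r}_L^-$, the only driving signal is the noise $\sigma\ves{\epsilon}$; linearizing the network dynamics \eqref{eq:network_dynamics} around the feedforward equilibrium and propagating the per-layer noise forward gives, to leading order in $\sigma$, an output fluctuation $\sigma J\ves{\epsilon}$ with $\ves{\epsilon}$ the concatenated layer noise. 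Feeding this fluctuation in as the control error and repeating the steady-state computation of Lemma \ref{lemma:steady_state_solution_system} with $k_p=0$ (so $\tilde\alpha = \alpha$) yields $\ve{u}_{\mathrm{ss}} = -\sigma(JQ + \alpha I)^{-1}J\ves{\epsilon} + \mathcal{O}(\sigma^2)$.

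Next I would take the noise-expectation of the feedback update \eqref{eq:Q_dynamics}. Writing $\ve{v}_i^{\text{fb}} = Q_i\ve{u} + \sigma_{\text{fb}}\ves{\epsilon}_i^{\text{fb}}$ splits $\mathbb{E}[\ve{v}_i^{\text{fb}}\ve{u}^T]$ into an autocorrelation term $Q_i\,\mathbb{E}[\ve{u}\ve{u}^T]$ and a cross-correlation term $\sigma_{\text{fb}}\mathbb{E}[\ves{\epsilon}_i^{\text{fb}}\ve{u}^T]$. Using $\mathbb{E}[\ves{\epsilon}\ves{\epsilon}^T] = I$ and independence of the noise across layers, the autocorrelation is $\sigma^2 Q_i(JQ+\alpha I)^{-1}JJ^T(JQ+\alpha I)^{-T}$, while the cross term reduces to $-\sigma\sigma_{\text{fb}}$ times the block of $J^T(JQ+\alpha I)^{-T}$ picked out by the feedback-compartment noise of layer $i$. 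This is where the assumption that $\alpha$ is large does the essential work: it lets me replace $(JQ+\alpha I)^{-1}$ by $\tfrac{1}{\alpha}I$ to leading order, which (i) renders both correlation matrices independent of $Q$, so the first-moment dynamics of $Q$ become an affine, contracting linear ODE with a unique stable fixed point, and (ii) collapses the per-layer selected blocks of $J^T$ back into the full $J^T$. Setting $\dot{Q}=0$ then gives $\mathbb{E}[Q_{\mathrm{ss}}] \propto J^T(\tfrac{\sigma^2}{\alpha^2}JJ^T + \beta I)^{-1}$, i.e. \eqref{eq:t6_Q_ss} with $\gamma = \beta\alpha^2/\sigma^2 > 0$.

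Finally I would verify the two conditions directly from this closed form. Because $(JJ^T + \gamma I)^{-1}$ is invertible, the column space of $Q = J^T(JJ^T+\gamma I)^{-1}$ coincides with the column space of $J^T$, which for full-rank $J$ is exactly the row space of $J$, establishing Condition \ref{con:Q_GN}. For Condition \ref{con:local_stability}, note that $J_{\mathrm{ss}}Q \approx JJ^T(JJ^T + \gamma I)^{-1}$ is symmetric positive semidefinite with eigenvalues $\mu_k/(\mu_k+\gamma) \in [0,1)$, where $\mu_k \geq 0$ are the eigenvalues of $JJ^T$; these have non-negative real part and hence exceed $-\alpha$ for every $\alpha \geq 0$, which is precisely why the stability guarantee survives even at $\alpha = 0$.

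The main obstacle I anticipate is the noise bookkeeping in the cross-correlation term: one must track precisely how the feedback-compartment noise $\ves{\epsilon}_i^{\text{fb}}$, being a subset of the layer noise $\ves{\epsilon}_i$ that also perturbs the central compartment, correlates with $\ve{u}$, and confirm that summing the per-layer selected blocks of $J^T$ reconstitutes the true transpose Jacobian rather than a distorted version. A secondary subtlety is making the averaging rigorous — arguing that the slow $Q$-dynamics genuinely track the first moment (not merely that the averaged equation has a fixed point of the stated form) and controlling the $\mathcal{O}(\sigma^2)$ remainders uniformly, so that the $\sigma \to 0$ limit, taken with $\gamma$ held fixed, is legitimate.
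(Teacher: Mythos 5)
Your overall strategy is the one the paper follows -- exploit $\tau_u \ll \tau_Q$ to average the Hebbian product over the fast noise, split $\mathbb{E}[-\vfb\ve{u}^T]$ into an autocorrelation piece $\propto QJJ^T$ and a cross-correlation piece $\propto J^T$, use ``$\alpha$ big'' to reduce $(JQ+\alpha I)^{-1}$ to $\tfrac{1}{\alpha}I$, solve the resulting affine fixed-point equation, and verify Conditions \ref{con:Q_GN} and \ref{con:local_stability} from the closed form exactly as the paper's Lemma \ref{lemma_app:Q_satisfies_conditions} does. The one step that does not hold up as written is your quasi-static evaluation of $\ve{u}$. You cannot apply the steady-state formula of Lemma \ref{lemma:steady_state_solution_system} to the instantaneous noise, because $\ves{\epsilon}$ decorrelates on the timescale $\taufb \ll \tau_u$ while $\ve{u}$ responds on the timescale $\tau_u/\alpha$: the controller is a low-pass filter of the noise, not a static gain. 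The paper instead solves the linear SDE $\tau_u\dot{\ve{u}} = -(JQ+\alpha I)\ve{u} - \sigma J\ves{\epsilon}$ by variation of constants and evaluates $\mathbb{E}[\ve{u}\ve{u}^T]$ and $\mathbb{E}[\ves{\epsilon}\ve{u}^T]$ using the temporal covariance of the exponentially filtered (Ornstein--Uhlenbeck) noise; the true stationary covariance of $\ve{u}$ is $\tfrac{\sigma^2}{\tau_u^2}\int_0^\infty e^{-As/\tau_u}JJ^Te^{-A^Ts/\tau_u}\,\mathrm{d}s$ (a Lyapunov-equation solution), not your $\sigma^2(JQ+\alpha I)^{-1}JJ^T(JQ+\alpha I)^{-T}$, and the equal-time cross-correlation picks up only half the noise mass. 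Your shortcut survives only because the large-$\alpha$ assumption collapses both expressions to multiples of $JJ^T$ and $J^T$ respectively, so the structural conclusion $Q_{\mathrm{ss}}\propto J^T(JJ^T+\gamma I)^{-1}$ is unaffected and only the constants change; were $\alpha$ not dominant, the two computations would give genuinely different matrices.

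Two smaller points. First, your $\gamma = \beta\alpha^2/\sigma^2$ diverges as $\sigma\to 0$; the paper normalizes the Hebbian term by $1/\sigma^2$ when taking that limit, which makes both terms of the first-moment ODE $\mathcal{O}(1)$ and yields a $\sigma$-independent $\gamma = \alpha\beta\tau_u$ -- you flagged this subtlety yourself, and that normalization is the resolution. Second, the noise-bookkeeping obstacle you anticipate is real but is sidestepped in the paper by assuming the injected noise enters only through the feedback compartment (so $\ves{\epsilon}_i^{\mathrm{fb}}=\ves{\epsilon}_i$ exactly), with the remark that independent noise in the other compartments leaves the result unchanged. Also note that for Condition \ref{con:local_stability} with $\alpha=0$ you need the eigenvalues $\mu_k/(\mu_k+\gamma)$ to be strictly positive, which requires invoking full rank of $J$ rather than the ``non-negative'' bound you state.
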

Theorem \ref{theorem:fb_learning_simplified} shows that under simplifying assumptions, $Q$ converges towards a damped pseudoinverse of $J$, which satisfies Conditions \ref{con:Q_GN} and \ref{con:local_stability}. Empirically, we see that this also approximately holds for more general settings where $\tau_v$ is not negligible, $k_p > 0$, and small $\alpha$ (see Section \ref{sec:experiments} and App. \ref{app:fb_learning}). 

The above theorem leaves two questions unanswered. First, it assumes that Condition \ref{con:local_stability} holds, however, the task of the feedback weight training is to make unstable network dynamics stable, resulting in a chicken-and-egg problem. The solution we use is to take $\alpha$ big enough to make the network stable during early training, after which the feedback weights align according to \eqref{eq:t6_Q_ss} and $\alpha$ can be decreased. 
Second, Theorem \ref{theorem:fb_learning_simplified} considers training the feedback weights to convergence over one fixed input sample. However, in reality many different input samples will be considered during learning. When the network is linear, $J$ is the same for each input sample and eq. \eqref{eq:t6_Q_ss} holds exactly. However, for nonlinear networks, $J$ will be different for each sample, causing the feedback weights to align with an average of $J^{T} (JJ^T + \gamma I)^{-1}$ over many samples. 


\section{Experiments} \label{sec:experiments}
\vspace{-0.3cm}
We evaluate DFC in detail on toy experiments to showcase that our theoretical results translate to practice (Section \ref{sec:toy_regression}) and on a modest range of computer vision benchmarks -- MNIST classification and autoencoding \citep{lecun1998mnist}, and Fashion MNIST classification \citep{xiao2017fashion} -- to show that DFC can do precise CA in more challenging settings (Section \ref{sec:mnist}). Alongside DFC, we test two variants: (i) \textbf{DFC-SS} which only updates its feedforward weights $W_i$ after the steady state (SS) of \eqref{eq:network_dynamics} and \eqref{eq:controller_dynamics} is reached; and (ii) \textbf{DFC-SSA} which analytically computes the linearized steady state of \eqref{eq:network_dynamics} and \eqref{eq:controller_dynamics} according to Lemma \ref{lemma:steady_state_solution_system}. To investigate whether learning the feedback weights is crucial for DFC, we compare for all three settings: (i) learning the feedback weights $Q_i$ according to \eqref{eq:Q_dynamics}; and (ii) fixing the feedback weights to the initialization $Q_i=\prod_{k=i+1}^L W_k^T$, which approximately satisfies Condition \ref{con:Q_GN} and \ref{con:local_stability} at the beginning of training (see App. \ref{app:experiments}), denoted with suffix \textbf{(fixed)}. For the former, we pre-train the feedback weights according to \eqref{eq:Q_dynamics} to ensure stability. During training, we iterate between 1 epoch of forward weight training and $X$ epochs of feedback weight training (if applicable), where $X\in[1,2,3]$ is a hyperparameter.
We compare all variants to Direct Feedback Alignment (\textbf{DFA}) \citep{nokland2016direct} as a control for direct feedback connectivity. DFC is simulated with the Euler-Maruyama method, which is the equivalent of forward Euler for stochastic differential equations \citep{sarkka2019applied}. We initialize the network to its feedforward activations \eqref{eq:feedforward_equilibrium} for each datasample and, for computational efficiency, we buffer the weight updates \eqref{eq:W_dynamics} and \eqref{eq:Q_dynamics} and apply them once at the end of the simulation for the considered datasample. App. \ref{app:simulation_and_algorithms} and \ref{app:experiments} provide further details on the implementation of all experiments.\footnote{PyTorch implementation of all methods is available at \url{https://github.com/meulemansalex/deep_feedback_control}.}

\subsection{Empirical verification of the theory} \label{sec:toy_regression}
\vspace{-0.2cm}
Figure \ref{fig:toy_experiment_tanh} visualizes the theoretical results of Theorems \ref{theorem:GN_TPDI} and \ref{theorem:MN_DFC} and Conditions \ref{con:magnitude_r}, \ref{con:Q_GN} and \ref{con:local_stability}, in an empirical setting of nonlinear student teacher regression, where a randomly initialized teacher network generates synthetic training data for a student network. We see that Condition \ref{con:Q_GN} is approximately satisfied for all DFC variants that learn their feedback weights (Fig. \ref{fig:toy_experiment_tanh}A), leading to close alignment with the ideal weighted MN updates of Theorem \ref{theorem:MN_DFC} (Fig. \ref{fig:toy_experiment_tanh}B). For nonlinear networks and linear direct feedback, it is in general not possible to perfectly satisfy Condition \ref{con:Q_GN} as the network Jacobian $J$ varies for each datasample, while $Q_i$ remains the same. However, the results indicate that feedback learning finds a configuration for $Q_i$ that approximately satisfies Condition \ref{con:Q_GN} for all datasamples. When the feedback weights are fixed, Condition \ref{con:Q_GN} is approximately satisfied in the beginning of training due to a good initialization. However, as the network changes during training, Condition \ref{con:Q_GN} degrades modestly, which results in worse alignment compared to DFC with trained feedback weights (Fig. \ref{fig:toy_experiment_tanh}B). 

For having GN updates, both Conditions \ref{con:magnitude_r} and \ref{con:Q_GN} need to be satisfied. Although we do not enforce Condition \ref{con:magnitude_r} during training, we see in Fig. \ref{fig:toy_experiment_tanh}C that it is crudely satisfied, which can be explained by the saturating properties of the $\tanh$ nonlinearity. This is reflected in the alignment with the ideal GN updates in Fig. \ref{fig:toy_experiment_tanh}D that follows the same trend as the alignment with the MN updates. Fig. \ref{fig:toy_experiment_tanh}E shows that all DFC variants remain stable throughout training, even when the feedback weights are fixed. In App. \ref{app:stability}, we indicate that Condition \ref{con:local_stability} is a good proxy for the stability shown in Fig. \ref{fig:toy_experiment_tanh}E. Finally, we see in Fig. \ref{fig:toy_experiment_tanh}F that the weight updates of DFC and DFC-SS align well with the analytical steady-state solution of Lemma \ref{lemma:steady_state_solution_system}, confirming that our learning theory of Section \ref{sec:learning_TPDI} applies to the continuous weight updates \eqref{eq:W_dynamics} of DFC.

\begin{figure}[h!]
    \vspace*{-0.2cm}
    \centering
    \includegraphics[width=0.97\linewidth]{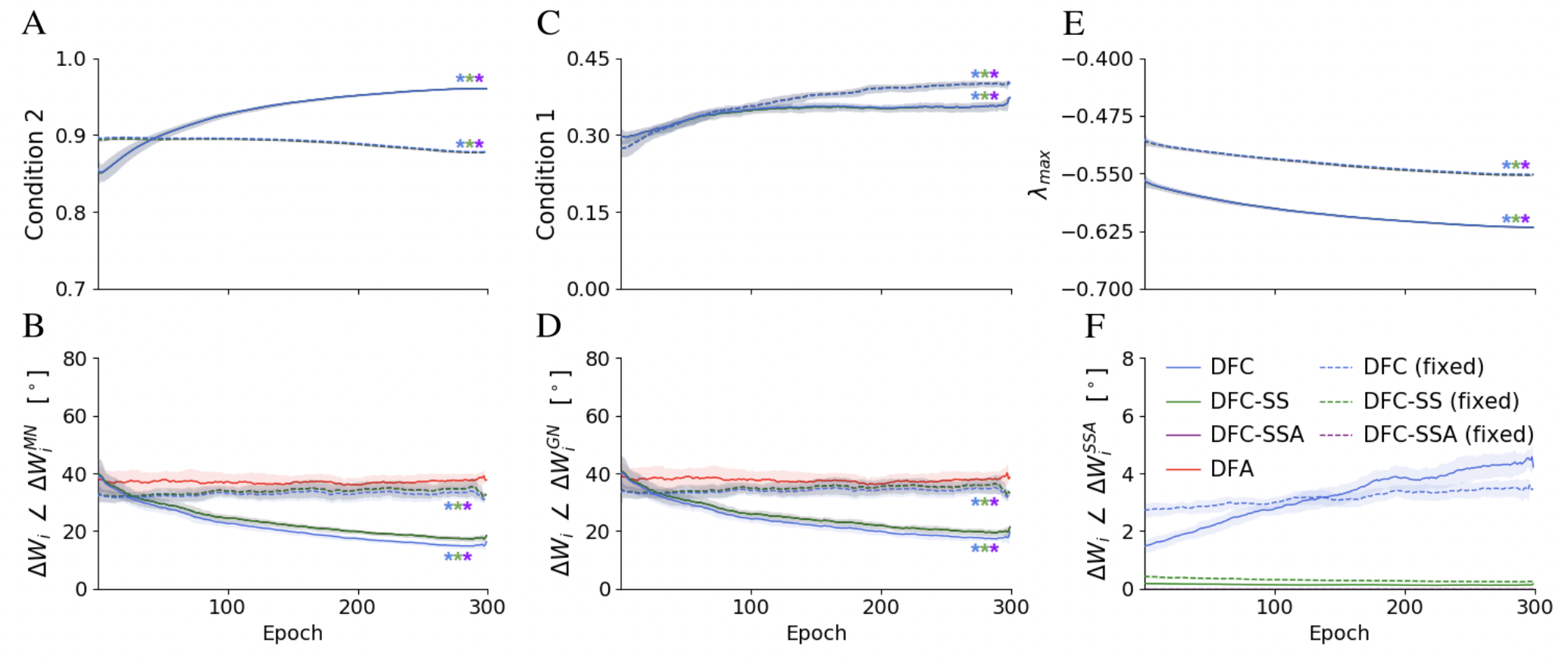}
    \caption[Results]{Results for nonlinear student-teacher regression task with layer sizes (15-10-10-5), $\tanh$ nonlinearities, a linear output layer, $k_p=1.5$, $\lambda=0.05$, and $\alpha=0.0015$. (\textbf{A}) Ratio between the norms of $Q$ projected into the row space of $J$, and $Q$, with a value of 1 indicating perfect compliance of Condition \ref{con:Q_GN}. (\textbf{B},\textbf{D},\textbf{F}) Angle in degrees between the concatenated parameter updates of the whole network and: (\textbf{B}) the ideal weighted MN parameter updates (Theorem \ref{theorem:MN_DFC}); (\textbf{D}) the ideal GN parameter updates (Theorem \ref{theorem:GN_TPDI}); and (\textbf{F}) the DFC-SSA parameter updates (see App. \ref{app:alignment_measures} for all definitions). (\textbf{C}) The standard deviation of the layer norms $\|\ve{r}_i\|_2$, divided by the average layer norm, with a value of zero indicating perfect compliance to Condition \ref{con:magnitude_r}. (\textbf{E}) The maximum real part of the eigenvalues of the total system dynamics matrix evaluated at equilibrium (see App. \ref{app:alignment_measures}), with negative real parts indicating local stability. For all measures, a window-average is plotted together with the window-std (shade). Stars indicate overlapping plots.}
    \label{fig:toy_experiment_tanh}
    \vspace*{-0.4cm}
\end{figure}

In Fig. 4, we show that the alignment with MN updates remains robust for $\lambda \in [10^{-3}:10^{-1}]$ and $\alpha \in [10^{-4}:10^{-1}]$, highlighting that our theory explains the behavior of DFC robustly when the limit of $\lambda$ and $\alpha$ to zero does not hold. When we clamp the output target to the label ($\lambda = 0.5$), the alignment with the MN updates decreases as expected (see Fig. 4), because the linearization of Lemma \ref{lemma:steady_state_solution_system} becomes less accurate and the strong feedback changes the neural activations more significantly, thereby changing the pre-synaptic factor of the update rules (c.f. eq. \ref{eq:update_W_ss}). However, performance results on MNIST, provided in Table \ref{tab:lambda_alpha_robustness}, show that the performance of DFC remains robust for a wide range of $\lambda$s and $\alpha$s, including $\lambda = 0.5$, suggesting that DFC can also provide principled CA in this setting of strong feedback, which motivates future work to design a complementary theory for DFC focused on this extreme case.

\begin{figure}[h!]
\vspace*{-0.2cm}
    \centering
    \begin{minipage}{0.7\textwidth}
    \centering
    \includegraphics[width=0.9\linewidth]{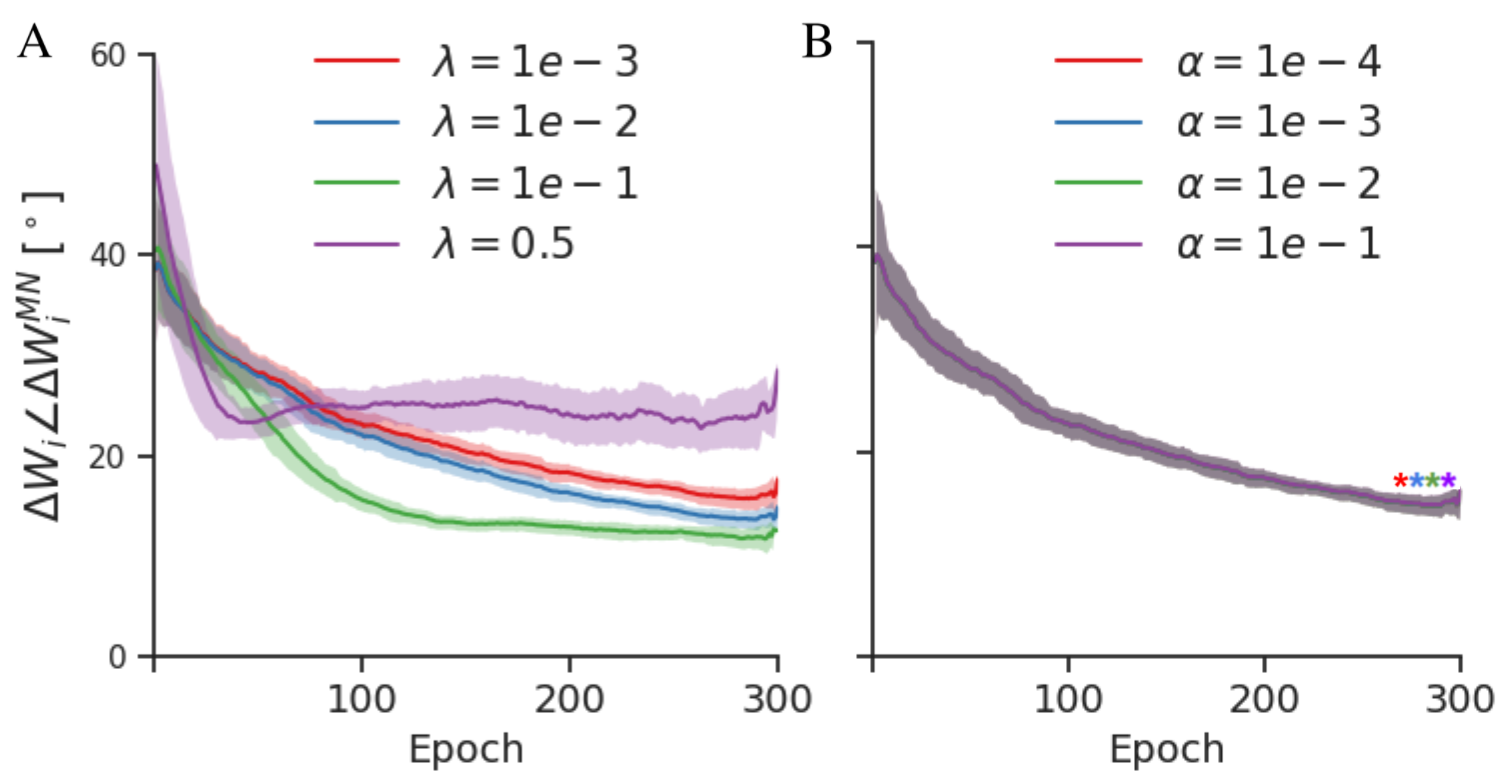}
    \label{fig:gnt_lambda_alpha_robustness_angle_alignment}
    \end{minipage}
    \begin{minipage}{0.26\textwidth}
    \begin{flushleft}
    \vspace{-0.35cm}
        Figure 4: Comparison of the alignment between the DFC weight updates and the MN updates for variable values of $\lambda$ (A) and $\alpha$ (B), when performing the nonlinear student-teacher regression task described in Fig. \ref{fig:toy_experiment_tanh}. Stars indicate overlapping plots.
    \end{flushleft}
    \end{minipage}
    \vspace*{-0.2cm}
\end{figure}

\subsection{Performance of DFC on computer vision benchmarks} \label{sec:mnist}
\vspace{-0.2cm}
The classification results on MNIST and Fashion-MNIST (Table \ref{tab:test_results}) show that the performances of DFC and its variants, but also its controls, lie close to the performance of BP, indicating that they perform proper CA in these tasks. To see significant differences between the methods, we consider the more challenging task of training an autoencoder on MNIST, where it is known that DFA fails to provide precise CA \citep{lillicrap2016random,lansdell2019learning, podlaski2020biological}. The results in Table \ref{tab:test_results} show that the DFC variants with trained feedback weights clearly outperform DFA and have close performance to BP. The low performance of the DFC variants with fixed feedback weights show the importance of learning the feedback weights continuously during training to satisfy Condition \ref{con:Q_GN}. Finally, to disentangle optimization performance from implicit regularization mechanisms, which both influence the test performance, we investigate the performance of all methods in minimizing the training loss of MNIST.\footnote{We used separate hyperparameter configurations, selected for minimizing the training loss.} The results in Table \ref{tab:test_results} show improved performance of the DFC method with trained feedback weights compared to BP and controls, suggesting that the approximate MN updates of DFC can faster descend the loss landscape for this simple dataset.
\vspace{-0.2cm}

\begin{table}[h]
\centering
\vspace*{-0.2cm}
\caption[Test errors]{Test errors (classification) and test loss (autoencoder) corresponding to the epoch with the best validation result (for 5000 validation samples) over a training of 100 epochs (classification) or 25 epochs (autoencoder). Training loss after 100 epochs (MNIST train loss). We use the Adam optimizer \citep{kingma2014adam}. Architectures: 3x256 fully connected (FC) tanh hidden layers and softmax output (classification), 256-32-256 FC hidden layers for autoencoder MNIST with tanh-linear-tanh nonlinearities, and a linear output. Mean $\pm$ std (5 random seeds). Best results (except BP) are displayed in bold.}
\label{tab:test_results}
\begin{tabular}{*5c}
\toprule
{}   & MNIST & Fashion-MNIST  & MNIST-autoencoder  & MNIST (train loss)   \\
\midrule
BP & $2.08^{\pm 0.15}\%$ & $10.60^{\pm 0.34}\%$ & $9.42^{\pm 0.09} \cdot 10^{-2}$ &  $1.53^{\pm 0.19}\cdot 10^{-7}$  \\
\midrule
DFC   &  $2.25^{\pm 0.094}\%$ & $11.17^{\pm 0.27}\%$  & $11.28^{\pm 0.18} \cdot 10^{-2}$ & $7.61^{\pm 0.65}\cdot 10^{-8}$ \\
DFC-SSA  &  $\mathbf{2.18^{\pm 0.16}\%}$ & $11.28^{\pm 0.27}\%$ & $11.27^{\pm 0.09} \cdot 10^{-2}$  & $4.89^{\pm 1.26}\cdot 10^{-8}$ \\
DFC-SS  &  $2.29^{\pm 0.097}\%$ & $\mathbf{11.15^{\pm 0.32}}\%$  & $\mathbf{11.21^{\pm 0.04} \cdot 10^{-2}}$ & $\mathbf{4.80^{\pm 0.70} \cdot 10^{-8}}$ \\
DFC (fixed)   &  $2.47^{\pm 0.12}\%$ & $11.62^{\pm 0.30}\%$ & $33.37^{\pm0.60} \cdot 10^{-2}$ & $1.30^{\pm 0.15} \cdot 10^{-6}$ \\
DFC-SSA (fixed)   &  $2.46^{\pm 0.11}\%$ & $11.44^{\pm 0.14}\%$  & $31.90^{\pm 0.77} \cdot 10^{-2}$ & $1.73^{\pm 0.39} \cdot 10^{-6}$\\
DFC-SS (fixed) 
&  $2.39^{\pm 0.22}\%$ & $11.55^{\pm 0.42}\%$ & $32.31^{\pm 0.37} \cdot 10^{-2}$  & $1.67^{\pm 0.70} \cdot 10^{-6}$ \\
DFA   &  $2.69^{\pm 0.11}\%$ & $11.38^{\pm 0.25}\%$ & $29.95^{\pm 0.36} \cdot 10^{-2}$ & $7.09^{\pm 1.11} \cdot 10^{-7}$ \\
\bottomrule
\end{tabular}
\vspace*{-0.2cm}
\end{table}

\begin{table}[H]
\vspace*{-0.2cm}
\centering
\caption{Test errors on MNIST with variable $\lambda$ values and fixed $\alpha = 0.0015$ (left), and with variable $\alpha$ values and fixed $\lambda=0.08$ (right). Same experimental setting as in Table \ref{tab:test_results}.} \label{tab:lambda_alpha_robustness}
\begin{tabular}{*3c|*3c}
\toprule
$\lambda$ & DFC-SS & DFC & $\alpha$ & DFC-SS & DFC \\
\midrule
$1e^{-3}$ & $2.26^{\pm 0.11}\%$ & $2.29^{\pm 0.04}\%$ & $1e^{-4}$ & $2.31^{\pm 0.12}\%$ & $2.28^{\pm 0.06}\%$ \\
$1e^{-2}$ & $2.25^{\pm 0.05}\%$ & $2.31^{\pm 0.04}\%$ & $1e^{-3}$ & $2.28^{\pm 0.15}\%$ & $2.31^{\pm 0.11}\%$\\
$1e^{-1}$ & $2.27^{\pm 0.07}\%$ & $2.30^{\pm 0.06}\%$ & $1e^{-2}$ & $2.26^{\pm 0.05}\%$ & $2.32^{\pm 0.12}\%$ \\
$0.5$ & $2.31^{\pm 0.15}\%$ & $2.34^{\pm 0.15}\%$ & $1e^{-1}$ & $2.28^{\pm 0.11}\%$ & $2.34^{\pm 0.16}\%$ \\
\bottomrule
\end{tabular}
\vspace*{-0.2cm}
\end{table}

\section{Discussion}
\vspace{-0.3cm}
We introduced DFC as an alternative biologically-plausible learning method for deep neural networks. DFC uses error feedback to drive the network activations to a desired output target. This process generates a neuron-specific learning signal which can be used to learn both forward and feedback weights locally in time and space. In contrast to other recent methods that learn the feedback weights and aim to approximate BP \citep{akrout2019deep, kunin2020two, lansdell2019learning, Guerguiev2020Spike-based, payeur2020burst}, we show that DFC approximates GN optimization, making it fundamentally different from BP approximations. 

DFC is optimal -- i.e., Conditions \ref{con:Q_GN} and \ref{con:local_stability} are satisfied -- for a wide range of feedback connectivity strengths. Thus, we prove that principled learning can be achieved with local rules and without symmetric feedforward and feedback connectivity by leveraging the network dynamics. This finding has interesting implications for experimental neuroscientific research looking for precise patterns of symmetric connectivity in the brain. Moreover, from a computational standpoint, the flexibility that stems from Conditions \ref{con:Q_GN} and \ref{con:local_stability} might be relevant for other mechanisms besides learning, such as attention and prediction \citep{gilbert2013top}.

To present DFC in its simplest form, we used direct feedback mappings from the output controller to all hidden layers. Although numerous anatomical studies of the mammalian neocortex reported the occurrence of such direct feedback connections \citep{ungerleider2008cortical, rockland1994direct}, it is unlikely that all feedback pathways are direct. We note that DFC is also compatible with other feedback mappings, such as layerwise connections or separate feedback pathways with multiple layers of neurons (see App. \ref{app:fb_pathways}).

Interestingly, the three-compartment neuron is closely linked to recent multi-compartment models of the cortical pyramidal neuron \citep{sacramento2018dendritic, guerguiev2017towards, payeur2020burst, richards2019dendritic}. In the terminology of these models, our central, feedforward, and feedback compartments, correspond to the somatic, basal dendritic, and apical dendritic compartments of pyramidal neurons, respectively (see Fig. \ref{fig:DFC_schematics}C). In line with DFC, experimental observations \citep{larkum2013cellular,spruston2008pyramidal} suggest that feedforward connections converge onto the basal compartment and feedback connections onto the apical compartment. Moreover, our plasticity rule for the forward weights \eqref{eq:W_dynamics} belongs to a class of dendritic predictive plasticity rules for which a biological implementation based on backpropagating action potentials has been put forward \citep{urbanczik2014learning}. 


\textbf{Limitations and future work.}
In practice, the forward weight updates are not exactly equal to GN or MN updates (Theorems \ref{theorem:GN_TPDI} and \ref{theorem:MN_DFC}), due to (i) the nonlinearity $\phi$ in the weight update rule \ref{eq:W_dynamics}, (ii) non-infinitesimal values for $\alpha$ and $\lambda$, (iii) limited training iterations for the feedback weights, and (iv) the limited capacity of linear feedback mappings to satisfy Condition \ref{con:Q_GN} for each datasample. Figs. \ref{fig:toy_experiment_tanh} and 4, and Table \ref{tab:lambda_alpha_robustness} show that DFC approximates the theory well in practice and has robust performance, however, future work can improve the results further by investigating new feedback architectures (see App. \ref{app:fb_pathways}). We note that, even though GN optimization has desirable approximate second-order optimization properties, it is presently unclear whether these second-order characteristics translate to our setting with a minibatch size of 1. 
Currently, our proposed feedback learning rule \eqref{eq:Q_dynamics} aims to approximate one specific configuration and hence does not capitalize on the increased flexibility of DFC and Condition \ref{con:Q_GN}. Therefore, an interesting future direction is to design more flexible feedback learning rules that aim to satisfy Conditions \ref{con:Q_GN} and \ref{con:local_stability} without targeting one specific configuration.
Furthermore, DFC needs two separate phases for training the forward weights and feedback weights. Interestingly, if the feedback plasticity rule \eqref{eq:Q_dynamics} uses a high-passed filtered version of the presynaptic input $\ve{u}$, both phases can be merged into one, with plasticity always on for both forward and feedback weights (see App. \ref{app:fb_learning_simultaneously}).
Finally, as DFC is dynamical in nature, it is costly to simulate on commonly used hardware for deep learning, prohibiting us from testing DFC on large-scale problems such as those considered by \citet{bartunov2018assessing}. 
A promising alternative is to implement DFC on analog hardware, where the dynamics of DFC can correspond to real physical processes on a chip. This would not only make DFC resource-efficient, but also position DFC as an interesting training method for analog implementations of deep neural networks, commonly used in Edge AI and other applications where low energy consumption is key \cite{xiao2020analog,misra2010artificial}. 

To conclude, we show that DFC can provide principled CA in deep neural networks by actively using error feedback to drive neural activations. The flexible requirements for feedback mappings combined with the strong link between DFC and GN, underline that it is possible to do principled CA in neural networks without adhering to the symmetric layer-wise feedback structure imposed by BP.

\clearpage

\begin{ack}
This work was supported by the Swiss National Science Foundation (B.F.G. CRSII5-173721 and 315230\_189251), ETH project funding (B.F.G. ETH-20 19-01), the Human Frontiers Science Program (RGY0072/2019) and funding from the Swiss Data Science Center (B.F.G, C17-18, J. v. O. P18-03). João Sacramento was supported by an Ambizione grant (PZ00P3\_186027) from the Swiss National Science Foundation. Pau Vilimelis Aceituno was supported by an ETH Zürich Postdoc fellowship. Javier García Ordóñez received support from La Caixa Foundation through the Postgraduate Studies in Europe scholarship. We would like to thank Anh Duong Vo and Nicolas Zucchet for feedback, William Podlaski, Jean-Pascal Pfister and Aditya Gilra for insightful discussions, and Simone Surace for his detailed feedback on Appendix \ref{app:fb_learning}.1.
\end{ack}

\bibliography{main.bib}


\newpage
\setcounter{page}{1}
\setcounter{figure}{0} \renewcommand{\thefigure}{S\arabic{figure}}
\setcounter{theorem}{0} \renewcommand{\thetheorem}{S\arabic{theorem}}
\setcounter{definition}{0} \renewcommand{\thedefinition}{S\arabic{definition}}
\setcounter{condition}{0} \renewcommand{\thecondition}{S\arabic{condition}}
\setcounter{table}{0} \renewcommand{\thetable}{S\arabic{table}}

\appendix

\section*{\Large{Supplementary Material}}
\textbf{Alexander Meulemans$^*$, Matilde Tristany Farinha$^*$, Javier García Ordóñez,}\\
  \textbf{Pau Vilimelis Aceituno, João Sacramento, Benjamin F. Grewe}\\
  \\
  Institute of Neuroinformatics, University of Zürich and ETH Zürich\\
  \texttt{ameulema@ethz.ch}  
\vspace{-1.6cm}
\addcontentsline{toc}{section}{} 
\part{} 
\parttoc 

\section{Proofs and extra information for Section \ref{sec:learning_TPDI}: Learning theory} \label{app:learning_theory}

\subsection{Linearized dynamics and fixed points}\label{app:linearized_dyn}
In this section, we linearize the network dynamics around the feedforward voltage levels $\ve{v}_i^-$ (i.e., the equilibrium of the network when no feedback is present) and study the equilibrium points resulting from the feedback input from the controller. 

\paragraph{Notation.} First, we introduce some shorthand notations:
\begin{align}\label{eq:notation_aux}
    \ve{v} &\triangleq [\ve{v}_1^T ... \ve{v}_L^T]^T \\
    Q &\triangleq [Q_1^T ... Q_L^T]^T\\
    f_i(\ve{v}_{i-1}) &\triangleq W_i \phi(\ve{v}_{i-1})\\
    \Delta^-\ve{v}_i &\triangleq \ve{v}_i - \ve{v}_i^- \\
    \Delta \ve{v}_i &\triangleq \ve{v}_i - W_i\phi(\ve{v}_{i-1}) = \ve{v}_i - \ve{v}_i^{\mathrm{ff}} \\
    \Delta \ve{v} &\triangleq [\Delta\ve{v}_1^T ... \Delta\ve{v}_L^T]^T\\
    J_{i,k} &\triangleq \frac{\partial \ve{v}_i}{\partial \ve{v}_k}\bigg\rvert_{\ve{v}_k=\ve{v}_k^-} = \frac{\partial f_i(..f_{k+1}(\ve{v}_k)..)}{\partial \ve{v}_k}\bigg\rvert_{\ve{v}_k=\ve{v}_k^-}\\
    J_i &\triangleq \frac{\partial \ve{r}_L}{\partial \ve{v}_i}\bigg\rvert_{\ve{v}_i=\ve{v}_i^-}\\
    J &\triangleq [J_1 ... J_L] = \frac{\partial \ve{r}_L}{\partial \Delta \ve{v}}\bigg\rvert_{\Delta \ve{v}=0} \label{eq:app_full_jacobian_def}\\
    \boldsymbol{\delta}_L &\triangleq -\lambda\frac{\partial \Lagr}{\partial \ve{r}_L}\bigg\rvert_{\ve{r}_L=\ve{r}_L^{-}} \\
    \ve{r}_L^* &\triangleq \ve{r}_L^- + \boldsymbol{\delta}_L
\end{align}

To investigate the steady state of the network and controller dynamics, we start by proving Lemma \ref{lemma:steady_state_solution_system}, which we restate here for convenience. 

\begin{lemma}\label{lemma_app:steady_state_solution_system}
Assuming stable dynamics, a small target stepsize $\lambda$, and $W_i$ and $Q_i$ fixed, the steady-state solutions of the dynamical systems \eqref{eq:network_dynamics} and \eqref{eq:controller_dynamics} can be approximated by 
    \begin{align}
        \ve{u}_{\mathrm{ss}} = (JQ + \tilde{\alpha} I)^{-1}\ves{\delta}_L + \mathcal{O}(\lambda^2), \quad
        \ve{v}_{\mathrm{ss}} = \ve{v}^{\mathrm{ff}}_{\mathrm{ss}} +  Q (JQ + \tilde{\alpha} I)^{-1}\ves{\delta}_L + \mathcal{O}(\lambda^2),
    \end{align}
    with $J \triangleq \frac{\partial \ve{r}^-_L}{\partial \ve{v}}\big\rvert_{\ve{v}=\ve{v}^-}$ the Jacobian of the network output w.r.t. $\ve{v}$, evaluated at the network equilibrium without feedback, $\ves{\delta}_L$ the output error as defined in \eqref{eq:output_target}, $\ve{v}^{\mathrm{ff}}_{i,ss} = W_i \phi(\ve{v}_{i-1,\mathrm{ss}})$, and $\tilde{\alpha} = \alpha/(1+\alpha k_p)$.
\end{lemma}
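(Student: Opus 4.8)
The plan is to compute the joint fixed point of the network dynamics \eqref{eq:network_dynamics} and the controller dynamics \eqref{eq:controller_dynamics} directly, and then linearize the only nonlinear ingredient---the forward pass---around the feedforward equilibrium $\ve{v}^-$. First I would set all time derivatives to zero. The network equation gives, layer by layer, $\ve{v}_{i,\mathrm{ss}} = W_i\phi(\ve{v}_{i-1,\mathrm{ss}}) + Q_i\ve{u}_{\mathrm{ss}} = \ve{v}^{\mathrm{ff}}_{i,\mathrm{ss}} + Q_i\ve{u}_{\mathrm{ss}}$, i.e.\ in stacked form $\ve{v}_{\mathrm{ss}} = \ve{v}^{\mathrm{ff}}_{\mathrm{ss}} + Q\ve{u}_{\mathrm{ss}}$; this already reproduces the second formula of the lemma once $\ve{u}_{\mathrm{ss}}$ is known. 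The controller equation $\ve{e}_{\mathrm{ss}} = \alpha\ve{u}^{\mathrm{int}}_{\mathrm{ss}}$ together with $\ve{u}_{\mathrm{ss}} = \ve{u}^{\mathrm{int}}_{\mathrm{ss}} + k_p\ve{e}_{\mathrm{ss}}$ eliminates $\ve{u}^{\mathrm{int}}$ and yields the clean relation $\ve{e}_{\mathrm{ss}} = \tilde{\alpha}\ve{u}_{\mathrm{ss}}$ with $\tilde{\alpha} = \alpha/(1+\alpha k_p)$, which is exactly where the effective damping constant originates.

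The conceptual heart of the argument is to express the steady-state output error $\ve{e}_{\mathrm{ss}} = \ve{r}^*_L - \ve{r}_{L,\mathrm{ss}} = \ves{\delta}_L - (\ve{r}_{L,\mathrm{ss}} - \ve{r}^-_L)$ in terms of $\ve{u}_{\mathrm{ss}}$. I would view $\ve{r}_L$ as a smooth function $g$ of the per-layer injected deviations $\Delta\ve{v}_i \triangleq \ve{v}_{i,\mathrm{ss}} - \ve{v}^{\mathrm{ff}}_{i,\mathrm{ss}} = Q_i\ve{u}_{\mathrm{ss}}$, defined implicitly through the feedforward recursion $\ve{v}_i = W_i\phi(\ve{v}_{i-1}) + \Delta\ve{v}_i$ and $\ve{r}_L = \phi(\ve{v}_L)$. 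At $\Delta\ve{v}=0$ this map returns $\ve{r}^-_L$, and perturbing a single $\Delta\ve{v}_i$ shifts $\ve{v}_i$ by the same amount and then propagates forward, so the $i$-th block of its Jacobian is precisely the feedforward sensitivity $J_i = \partial\ve{r}_L/\partial\ve{v}_i|_{\ve{v}^-}$. Hence $g$ has Jacobian $J = [J_1\ \ldots\ J_L]$ at the origin, giving the first-order expansion $\ve{r}_{L,\mathrm{ss}} - \ve{r}^-_L = J\Delta\ve{v} + \mathcal{O}(\|\Delta\ve{v}\|^2) = JQ\ve{u}_{\mathrm{ss}} + \mathcal{O}(\|\ve{u}_{\mathrm{ss}}\|^2)$. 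Substituting this and $\ve{e}_{\mathrm{ss}} = \tilde{\alpha}\ve{u}_{\mathrm{ss}}$ into the error identity produces the closed linear system $(JQ + \tilde{\alpha}I)\ve{u}_{\mathrm{ss}} = \ves{\delta}_L + \mathcal{O}(\|\ve{u}_{\mathrm{ss}}\|^2)$.

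Solving this system and feeding the result back into $\ve{v}_{\mathrm{ss}} = \ve{v}^{\mathrm{ff}}_{\mathrm{ss}} + Q\ve{u}_{\mathrm{ss}}$ yields both displayed formulas, with the stability assumption guaranteeing that $(JQ+\tilde{\alpha}I)$ is invertible. The remaining step is the error bookkeeping, which I expect to be the main technical nuisance rather than a deep obstacle: since $\ves{\delta}_L = -\lambda\,\partial\mathcal{L}/\partial\ve{r}_L = \mathcal{O}(\lambda)$, the leading solution $(JQ+\tilde{\alpha}I)^{-1}\ves{\delta}_L$ is $\mathcal{O}(\lambda)$, which self-consistently forces $\|\ve{u}_{\mathrm{ss}}\| = \mathcal{O}(\lambda)$; therefore the neglected quadratic remainder is $\mathcal{O}(\|\ve{u}_{\mathrm{ss}}\|^2) = \mathcal{O}(\lambda^2)$, and $\ve{u}_{\mathrm{ss}} = (JQ+\tilde{\alpha}I)^{-1}\ves{\delta}_L + \mathcal{O}(\lambda^2)$ as claimed. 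The one point requiring explicit care is justifying that evaluating $J$ at $\ve{v}^-$ rather than at $\ve{v}_{\mathrm{ss}}$ is harmless: the discrepancy between the two Jacobians is itself $\mathcal{O}(\lambda)$ and multiplies the $\mathcal{O}(\lambda)$ quantity $\Delta\ve{v}$, so it is absorbed into the $\mathcal{O}(\lambda^2)$ term.
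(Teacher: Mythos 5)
Your proposal is correct and follows essentially the same route as the paper's proof: set the derivatives to zero to obtain $\Delta\ve{v}_{\mathrm{ss}}=Q\ve{u}_{\mathrm{ss}}$ and $\ve{e}_{\mathrm{ss}}=\tilde{\alpha}\ve{u}_{\mathrm{ss}}$, linearize the forward map to get $\ve{e}=\ves{\delta}_L-J\Delta\ve{v}+\mathcal{O}(\lambda^2)$, and solve the resulting linear system. The only cosmetic difference is that you package the layer-by-layer recursion the paper writes out explicitly as the Jacobian of an implicitly defined map $g$, and you are somewhat more explicit about the self-consistent $\mathcal{O}(\lambda^2)$ bookkeeping.
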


\begin{proof}
The proof is ordered as follows: first, we linearize the network dynamics around the feedforward equilibrium of \eqref{eq:feedforward_equilibrium}. Then, we solve the algebraic set of linear equilibrium equations.

With the introduced shorthand notation, we can combine \eqref{eq:network_dynamics} for $i=1, ..., L$ into a single dynamical equation for the whole network:
\begin{align}\label{eq:whole_network_dynamics}
    \tau \frac{\dd \ve{v}}{\dd t} = -\Delta \ve{v} + Q\ve{u}.
\end{align}

By linearizing the dynamics, we can derive the control error $\ve{e}(t) \triangleq \ve{r}_L^* - \ve{r}_L(t)$ as an affine transformation of $\Delta \ve{v}$. First, note that
\begin{align}
    \Delta^-\ve{v}_i &= \ve{v}_i - W_i\phi(\ve{v}_{i-1}) + W_i\phi(\ve{v}_{i-1}) - \ve{v}_i^- \\
    &= \Delta \ve{v}_i + W_i\phi(\ve{v}_{i-1}^- + \Delta^-\ve{v}_{i-1}) - \ve{v}_i^- \\
    &= \Delta \ve{v}_i + J_{i,i-1}\Delta^-\ve{v}_{i-1} + \mathcal{O}(\lambda^2).
\end{align}
By recursion, we have that 
\begin{align}
    \Delta^-\ve{v}_i &= \Delta \ve{v}_i + \sum_{k=1}^{i-1}J_{i,k}\Delta\ve{v}_k + \mathcal{O}(\lambda^2),
\end{align}
with $\Delta \ve{v}_1 = \Delta^- \ve{v}_1 = \ve{v}_1 - \ve{v}_1^-$ because the input to the network is not influenced by the controller, i.e., $\ve{v}_0 = \ve{v}_0^-$. 

The control error given by
\begin{align}\label{eq_app:linearized_control_error}
    \ve{e} &\triangleq \ve{r}_L^* - \ve{r}_L = \ve{r}_L^* - \ve{r}_L^- + \ve{r}_L^- - \ve{r}_L \\
    &= \boldsymbol{\delta}_L - J_L\Delta^-\ve{v}_L + \mathcal{O}(\lambda^2)\\
    &= \boldsymbol{\delta}_L - J_L\big(\Delta \ve{v}_L + \sum_{k=1}^{L-1}J_{L,k}\Delta\ve{v}_k\big) + \mathcal{O}(\lambda^2) \\
    &= \boldsymbol{\delta}_L - \sum_{k=1}^LJ_k\Delta \ve{v}_k + \mathcal{O}(\lambda^2)\\
    &= \boldsymbol{\delta}_L - J \Delta \ve{v} + \mathcal{O}(\lambda^2).
\end{align}

The controller dynamics are given by 
\begin{align} \label{eq_app:controller_dynamics}
    \ve{u}(t) = \ve{u}^{\text{int}}(t) + k_p \ve{e}(t) \\
    \tau_u \ddt \ve{u}^{\text{int}}(t) = \ve{e}(t) - \alpha \ve{u}^{\text{int}}(t)
\end{align}
By differentiating \eqref{eq_app:controller_dynamics} and using $\ve{u}^{\text{int}}=\ve{u} - k_p \ve{e}$ we get the following controller dynamics for $\ve{u}$:
\begin{align} \label{eq_app:total_controller_dynamics}
    \tau_u \ddt \ve{u}(t) = (1+\alpha k_p)\ve{e}(t) + k_p \tau_u \ddt \ve{e}(t) - \alpha \ve{u}(t).
\end{align}

The system of equations \eqref{eq:whole_network_dynamics} and \eqref{eq_app:total_controller_dynamics} can be solved in steady state as follows. From \eqref{eq:whole_network_dynamics} at steady state, we have
\begin{align}
    \Delta \ve{v}_{\mathrm{ss}} = Q\ve{u}_{\mathrm{ss}}.
\end{align}
Substituting $\Delta \ve{v}_{\mathrm{ss}}$ into the steady state of \eqref{eq_app:total_controller_dynamics} while using the linearized control error \eqref{eq_app:linearized_control_error} gives
\begin{align}
    \ve{u}_{\mathrm{ss}} &= (JQ + \tilde{\alpha} I)^{-1}\boldsymbol{\delta}_L + \mathcal{O}(\lambda^2)\\
    \Delta \ve{v}_{\mathrm{ss}} &= Q(JQ + \tilde{\alpha} I)^{-1}\boldsymbol{\delta}_L + \mathcal{O}(\lambda^2) \label{eq:ss_v},
\end{align}
with $\tilde{\alpha} \triangleq \frac{\alpha}{1 + \alpha k_p}$. Using $\ve{v} = \ve{v}^{\text{ff}} + \Delta \ve{v}$ concludes the proof.
\end{proof}
In the next section, we will investigate how this steady-state solution can result in useful weight updates (plasticity) for the forward weights $W_i$. 

\subsection{DFC approximates Gauss-Newton optimization} \label{app:DFC_GN}
In this subsection, we will investigate the steady state \eqref{eq:ss_v} for $\alpha \rightarrow 0$ and link the resulting weight updates to Gauss-Newton (GN) optimization. Before proving Theorem \ref{theorem:GN_TPDI}, we need to introduce and prove some lemmas. First, we need to show that $\lim_{\alpha \rightarrow 0} \Delta \ve{v}_{\mathrm{ss}} = J^{\dagger}\boldsymbol{\delta}_L$ under Condition \ref{con:Q_GN}, with $J^{\dagger}$ the Moore-Penrose pseudoinverse of $J$ \citep{moore1920reciprocal, penrose1955generalized}. This is done in the following Lemma.
\begin{lemma} \label{lemma:penrose_conditions_Q}
Assuming $J$ has full rank,
\begin{align} \label{eq_app:lemma_s3}
    \lim_{\alpha \rightarrow 0} Q(JQ + \alpha I)^{-1} = J^{\dagger}
\end{align}
iff Condition \ref{con:Q_GN} holds, i.e., $\text{Col}(Q) = \text{Row}(J)$. 
\end{lemma}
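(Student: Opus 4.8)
The plan is to treat the biconditional as two separate implications and to lean throughout on two standard facts about a full-row-rank Jacobian $J$ (which here is $n_L \times n$ with rank $n_L$, where $n_L$ is the output dimension and $n$ the total number of hidden units, so $Q$ is $n \times n_L$). First, $J^{\dagger} = J^T(JJ^T)^{-1}$ is a right inverse, i.e. $JJ^{\dagger} = I_{n_L}$, and its column space is $\text{Col}(J^T) = \text{Row}(J)$. Second, $J^{\dagger}J = J^T(JJ^T)^{-1}J$ is symmetric and idempotent with range $\text{Row}(J)$, hence it is the orthogonal projector onto $\text{Row}(J)$; consequently $J^{\dagger}J\,X = X$ for any matrix $X$ whose columns already lie in $\text{Row}(J)$. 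I would establish these two facts up front, since they do essentially all the work.

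For the ``if'' direction, I would first show that Condition \ref{con:Q_GN} makes the $n_L \times n_L$ matrix $JQ$ invertible, so that the damping becomes unnecessary in the limit and $\lim_{\alpha \to 0} Q(JQ + \alpha I)^{-1} = Q(JQ)^{-1}$. Invertibility follows because $Q$ is injective on $\mathbb{R}^{n_L}$ (full column rank, its column space being the $n_L$-dimensional $\text{Row}(J)$) and $J$ restricted to $\text{Row}(J)$ is a bijection onto $\mathbb{R}^{n_L}$ (its kernel is $\text{Row}(J)^{\perp}$), so $JQ$ is a composition of two bijections of $\mathbb{R}^{n_L}$. I would then close the direction with the projector identity: since $\text{Col}(Q(JQ)^{-1}) = \text{Col}(Q) = \text{Row}(J)$ (right-multiplying by the invertible $(JQ)^{-1}$ preserves the column space), applying the orthogonal projector gives $J^{\dagger}J\,Q(JQ)^{-1} = Q(JQ)^{-1}$, while the same left-hand side equals $J^{\dagger}(JQ)(JQ)^{-1} = J^{\dagger}$. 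Hence $Q(JQ)^{-1} = J^{\dagger}$, which is the claim.

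For the ``only if'' direction I would use a dimension count. For every $\alpha$ at which the inverse exists (all but the finitely many $\alpha$ equal to minus an eigenvalue of $JQ$, so certainly all small $\alpha > 0$), each column of $Q(JQ + \alpha I)^{-1}$ is a linear combination of the columns of $Q$, whence $\text{Col}(Q(JQ + \alpha I)^{-1}) \subseteq \text{Col}(Q)$. Because $\text{Col}(Q)$ is a finite-dimensional, hence closed, subspace, the limit inherits this inclusion, so the assumed identity $\lim_{\alpha\to 0}Q(JQ+\alpha I)^{-1} = J^{\dagger}$ yields $\text{Col}(J^{\dagger}) \subseteq \text{Col}(Q)$, i.e. $\text{Row}(J) \subseteq \text{Col}(Q)$. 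Since $\dim \text{Row}(J) = n_L$ and $\dim \text{Col}(Q) \le n_L$ (as $Q$ has only $n_L$ columns), the inclusion forces equality of dimensions and therefore $\text{Col}(Q) = \text{Row}(J)$, which is Condition \ref{con:Q_GN}.

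The main obstacle is the forward direction's algebraic identity $Q(JQ)^{-1} = J^{\dagger}$: done naively it would require checking all four Penrose conditions, and the fourth one (symmetry of $XJ$) is exactly where the column-space hypothesis must enter. The way to avoid that grind is the projector identity $J^{\dagger}J = P_{\text{Row}(J)}$ combined with $\text{Col}(Q) = \text{Row}(J)$, which collapses the verification to a single line, so I would prioritize setting up that projector fact cleanly. The only other point requiring care is the interaction of the limit with the column-space inclusion in the reverse direction, which is handled by the closedness of $\text{Col}(Q)$ and by noting the inverse exists for all sufficiently small positive $\alpha$.
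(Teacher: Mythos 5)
Your proposal is correct, and it takes a genuinely different route from the paper's proof. The paper verifies the four Moore--Penrose conditions for $S = \lim_{\alpha\to 0}Q(JQ+\alpha I)^{-1}$ directly: conditions 1--3 follow from $JS = I$, and the fourth (symmetry of $SJ$) is analyzed by decomposing $Q = U_J P_Q + \tilde{U}_J \tilde{P}_Q$ into components inside and outside $\text{Row}(J)$ and showing the off-space component must vanish. You sidestep the Penrose conditions entirely: in the forward direction you identify $J^{\dagger}J$ as the orthogonal projector onto $\text{Row}(J)$ and apply it to $Q(JQ)^{-1}$, whose column space equals $\text{Col}(Q) = \text{Row}(J)$ by hypothesis, which collapses the verification to one line; in the reverse direction you pass the inclusion $\text{Col}(Q(JQ+\alpha I)^{-1}) \subseteq \text{Col}(Q)$ through the limit (valid, as the matrices with columns in a fixed subspace form a closed linear subspace) and finish with a dimension count. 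Your route is cleaner in two respects: it avoids the four-condition bookkeeping, and your necessity argument subsumes the paper's separate treatment of rank-deficient $Q$ (its case (i) and the footnote requiring $P_Q$ to be full rank) because the dimension count forces $Q$ to have full column rank automatically. What the paper's decomposition buys in exchange is an explicit formula for $SJ$ when the condition fails, exhibiting exactly which non-symmetric term obstructs the pseudoinverse property.
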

\begin{proof}
We begin by stating the Moore-Penrose conditions \citep{penrose1955generalized}:
\begin{condition}\label{con:moore_penrose}
$B=A^{\dagger}$ iff
\begin{enumerate}
    \item $ABA = A$
    \item $BAB=B$
    \item $AB = (AB)^T$
    \item $BA = (BA)^T$
\end{enumerate}
\end{condition}
In this proof, we need to consider 2 general cases: (i) $\J$ has full rank and $Q$ does not and (ii) $Q$ and $\J$ have both full rank. As $\J^T$ and $\Q$ have much more rows than columns, they will almost always be of full rank, however, we consider both cases for completeness. 

In case (i), where $Q$ has lower rank than $J$, we have that $\rank \big(\Q (\J\Q + \alpha I)^{-1}\big) \leq \rank (\Q)$. As $\rank (\J^{\dagger}) = \rank (\J^T)$, $\Q (\J\Q + \alpha I)^{-1}$ can never be the pseudoinverse of $\J$, thereby proving that a necessary condition for \eqref{eq_app:lemma_s3} is that $\rank (Q) \geq \rank (\J)$ (note that this condition is satisfied by Condition \ref{con:Q_GN}). Now, that we showed that it is a necessary condition that $\Q$ is full rank (as $\J$ is full rank by assumption of the lemma) for eq. \eqref{eq_app:lemma_s3} to hold, we proceed with the second case.

In case (ii), where $Q$ and $\J^T$ have both full rank, we need to prove under which conditions on $Q$, $S\triangleq \lim_{\alpha \rightarrow 0} Q(JQ + \alpha I)^{-1}$ is equal to $J^{\dagger}$. As $Q$ and $\J^T$ have both full rank, $\J \Q$ is of full rank and we have 
\begin{align}
    \J S = \lim_{\alpha \rightarrow 0} JQ(JQ + \alpha I)^{-1} = I.
\end{align}

Hence, conditions \ref{con:moore_penrose}.1, \ref{con:moore_penrose}.2 and \ref{con:moore_penrose}.3 are trivially satisfied:
\begin{enumerate}
    \item $JSJ = I J = J$
    \item $SJS = SI = S$
    \item $JS = I = I^T = (JS)^T$
\end{enumerate}
Condition \ref{con:moore_penrose}.4 will only be satisfied under certain constraints on $Q$. We first assume Condition \ref{con:Q_GN} holds to show its sufficiency after which we continue to show its necessity. 

Consider $U_{\J}$ as an orthogonal basis of the column space of $\J^T$. Then, we can write 
\begin{align} \label{eq_app:l3_J_orthogonal_decomp}
    \J = M_{\J} U_{\J}^T
\end{align}
for some full rank square matrix $M_{\J}$. As we assume Condition \ref{con:Q_GN} holds, we can similarly write $\Q$ as
\begin{align}
    \Q = U_{\J} M_{\Q}
\end{align}
for some full rank square matrix $M_{\Q}$. Condition \ref{con:moore_penrose}.4 can now be written as
\begin{align}
    S\J &= \Q (\J \Q)^{-1} \J \\
    &= U_{\J} M_{\Q} ( M_{\J} M_{\Q})^{-1} M_{\J} U_{\J}^T \\
    &= U_{\J}U_{\J}^T \\
    &= (S\J)^T,
\end{align}
showing that S is indeed the pseudoinverse of $\J$ if Condition \ref{con:Q_GN} holds, proving its sufficiency. 

For showing the necessity of Condition \ref{con:Q_GN}, we use a proof by contradiction. We now assume that Condition \ref{con:Q_GN} does not hold and hence the column space of $\Q$ is not equal to that of $\J$. Similar as before, consider $U_{\Q}$ and orthogonal basis of the column space of $\Q$. Furthermore, consider the square orthogonal matrix $\bar{U}_{\J} \triangleq [U_{\J} \tilde{U}_{\J}]$ with $U_{\J}$ as defined in \eqref{eq_app:l3_J_orthogonal_decomp} and $\tilde{U}_{\J}$ orthogonal on $U_{\J}$. We can now decompose $\Q$ into a part inside the column space of $\J^T$ and outside of that column space:
\begin{align}\label{eq_app:l3_Q_decomp}
    Q &= U_{\Q} M_{\Q} \\
    &= \bar{U}_{\J} \bar{U}_{\J}^T U_{\Q} M_{\Q} \\
    &= U_{\J} P_{\Q} + \tilde{U}_{\J} \tilde{P}_{\Q},
\end{align}
with $M_{\Q}$ a square full rank matrix, $P_{\Q} \triangleq U_{\J}^T U_{\Q} M_{\Q}$, and $\tilde{P}_{\Q} \triangleq \tilde{U}_{\J}^T U_{\Q} M_{\Q}$. The first part of \eqref{eq_app:l3_Q_decomp} represents the part of $\Q$ inside the column space of $\J^T$ and the second part represents the part of $\Q$ outside of this column space. For clarity, we assume that $P_{\Q}$ is full rank\footnote{
If $P_Q$ is not of full rank, $JQ$ is not of full rank and hence $\lim_{\alpha \rightarrow 0} (JQ + \alpha I)$ also not. Consequently, $\lim_{\alpha \rightarrow 0} Q(JQ + \alpha I)^{-1}$ will project $Q$ onto something of lower rank, making it impossible for $S$ to approximate $J^{\dagger}$, thereby showing that it is necessary that $P_{\Q}$ is full rank.
}, which is true in all but degenerate cases. Note that $\tilde{P}_{\Q}$ is different from zero, as we assume Condition \ref{con:Q_GN} does not hold in this proof by contradiction. Using this decomposition of $\Q$, we can write $S\J$ used in Condition \ref{con:moore_penrose}.4 as
\begin{align}
    S\J &= \Q (\J \Q)^{-1}\J \\
    &= (U_{\J}P_{\Q} + \tilde{U}_{\J} \tilde{P}_{\Q}) (M_{\J}P_{\Q})^{-1} (M_{\J} U_{\J}^T) \\
    &= U_{\J}U_{\J}^T + \tilde{U}_{\J}\tilde{P}_{\Q} P_{\Q}^{-1}U_{\J}^T.
\end{align}
The first part of the last equation is always symmetric, hence Condition \ref{con:moore_penrose}.4 boils down to the second part being symmetric:
\begin{align}
    \tilde{U}_{\J}\tilde{P}_{\Q} P_{\Q}^{-1}U_{\J}^T &= U_{\J}P_{\Q}^{-T} \tilde{P}_{\Q}^T\tilde{U}_{\J}^T \\
    \Rightarrow \quad \tilde{U}_{\J}^T\tilde{U}_{\J}\tilde{P}_{\Q} P_{\Q}^{-1}U_{\J}^T &= \tilde{U}_{\J}^TU_{\J}P_{\Q}^{-T} \tilde{P}_{\Q}^T\tilde{U}_{\J}^T \\
    \Rightarrow \quad \tilde{P}_{\Q} P_{\Q}^{-1}U_{\J}^T &=0 \\
    \Rightarrow \quad U_{\J}  P_{\Q}^{-T} \tilde{P}_{\Q}^T &=0.
\end{align}
As $U_{\J}$ has a zero-dimensional null space and $P_{\Q}$ is full rank, \ref{con:moore_penrose}.4 can only hold when $\tilde{P}_{\Q}=0$. This contradicts with our initial assumption in this proof by contradiction, stating that Condition \ref{con:Q_GN} does not hold and consequently $\Q$ has components outside of the column space of $\J$, thereby proving that Condition \ref{con:Q_GN} is necessary. 

\end{proof}

Theorem \ref{theorem:GN_TPDI} states that the updates for $W_i$ in DFC at steady-state align with the updates $W_i$ prescribed by the GN optimization method for a feedforward neural network. We first formalize a \textit{feedforward fully connected neural network}. 

\begin{definition}\label{def:feedforward_nn}
A feedforward fully connected neural network with $L$ layers, input dimension $n_0$, output dimension $n_L$ and hidden layer dimensions $n_i$, $0<i<L$ is defined by the following sequence of mappings: 
\begin{align}
    \ve{r}_i &= \phi(W_i\ve{r}_{i-1}), \quad 0<i<L \\
    \ve{r}_L &= \phi_L(W_i\ve{r}_{L-1}),
\end{align}
with $\phi$ and $\phi_L$ activation functions, $\ve{r}_0$ the input of the network, and $\ve{r}_L$ the output of the network.
\end{definition}
The Lemma below shows that the network dynamics \eqref{eq:network_dynamics} at steady-state are equal to a feedforward neural network corresponding to Definition \ref{def:feedforward_nn} in the absence of feedback. 
\begin{lemma}\label{lemma:equivalence_ff_network}
In the absence of feedback ($\ve{u}(t)=0$), the system dynamics \eqref{eq:network_dynamics} at steady-state are equivalent to a feedforward neural network defined by Definition \ref{def:feedforward_nn}. 
\end{lemma}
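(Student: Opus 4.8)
The plan is to evaluate the network dynamics \eqref{eq:network_dynamics} at a steady state with the feedback turned off and verify directly that the resulting fixed-point equations coincide with the feedforward recursion of Definition \ref{def:feedforward_nn}. First I would impose the steady-state condition $\ddt \ve{v}_i(t) = 0$ together with $\ve{u}(t) = 0$, which kills the feedback term $Q_i \ve{u}(t)$ and reduces \eqref{eq:network_dynamics} to the algebraic relation
\begin{align}
    0 = -\ve{v}_i + W_i \phi(\ve{v}_{i-1}),
\end{align}
so that $\ve{v}_i = W_i \phi(\ve{v}_{i-1})$ for each $1 \le i \le L$.

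The second step is to translate this into the post-nonlinearity variables $\ve{r}_i = \phi(\ve{v}_i)$ and argue by induction on the layer index $i$. The base case is the input $\ve{r}_0$, which is held fixed throughout the dynamics. For the inductive step, substituting $\ve{r}_{i-1} = \phi(\ve{v}_{i-1})$ into the fixed-point relation yields $\ve{v}_i = W_i \ve{r}_{i-1}$, and hence $\ve{r}_i = \phi(\ve{v}_i) = \phi(W_i \ve{r}_{i-1})$. This is exactly the mapping defining a feedforward fully connected network in Definition \ref{def:feedforward_nn}, where for the output layer the network nonlinearity is identified with $\phi_L$. These steady-state activations are precisely the quantities denoted with the superscript `$-$' in \eqref{eq:feedforward_equilibrium}, so the equivalence is with the feedforward equilibrium already introduced there.

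Since the argument is essentially immediate once the feedback term vanishes, there is no substantive obstacle; the only point that merits a remark is the well-definedness of the steady-state map. Here I would observe that the forward recursion determines $\ve{r}_i$ uniquely given a fixed input $\ve{r}_0$, so the fixed point exists and is unique, and the steady-state evaluation of \eqref{eq:network_dynamics} agrees layerwise with evaluating the feedforward network of Definition \ref{def:feedforward_nn} on the same input. This completes the proof of equivalence.
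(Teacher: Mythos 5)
Your proof is correct and follows exactly the same route as the paper's (one-line) argument: set $\ddt\ve{v}_i=0$ with $Q_i\ve{u}=0$, read off $\ve{v}_i = W_i\phi(\ve{v}_{i-1})$, and identify $\ve{r}_i=\phi(\ve{v}_i)$ with the feedforward recursion of Definition \ref{def:feedforward_nn}. Your added remarks on induction and uniqueness of the forward recursion are harmless elaborations of what the paper calls a trivial computation.
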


\begin{proof}
The proof is trivial upon noting that $Q\ve{u} =0$ without feedback and computing the steady-state of \eqref{eq:network_dynamics} using $\ve{r}_i \triangleq \phi(\ve{v}_i)$.
\end{proof}
Following the notation of eq. \eqref{eq:feedforward_equilibrium}, we denote with $\ve{r}_i^-$ the firing rates of the network in steady-state when feedback is absent, hence corresponding to the activations of a conventional feedforward neural network. The following Lemma investigates what the GN parameter updates are for a feedforward neural network. Later, we then show that the updates at equilibrium of DFC approximate these GN updates. For clarity, we assume that the network has only weights and no biases in all the following theorems and proofs, however, all proofs can be easily extended to comprise both weights and biases. First, we need to introduce some new notation for vectorized matrices.
\begin{align}
    \vec{W}_i &\triangleq \text{vec}(W_i)\\
    \bar{W} &\triangleq [\vec{W}_1^T ... \vec{W}_L^T]^T,
\end{align}
where $\text{vec}(W_i)$ denotes the concatenation of the columns of $W_i$ in a column vector.
\begin{lemma}\label{lemma:GN_update_feedforward}
Assuming an $L^2$ task loss and Condition \ref{con:magnitude_r} holds, the Gauss-Newton parameter updates for the weights of a feedforward network defined by Definition \ref{def:feedforward_nn} for a minibatch size of 1 is given by
\begin{align}\label{eq:GN_step}
    \Delta \bar{W}^{GN} = \frac{1}{2\lambda\|\ve{r}\|^2_2} R J^{\dagger}\boldsymbol{\delta}_L,
\end{align}
with $R$ defined in eq. \eqref{eq:R_matrix}.
\end{lemma}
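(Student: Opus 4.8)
The plan is to expand the Gauss-Newton step \eqref{eq:damped_gn_iteration} for the full stacked weight vector $\bar{W}$, reduce the pseudoinverse $J_\theta^\dagger$ to a Gram-matrix form, and then let Condition \ref{con:magnitude_r} collapse the layerwise activation norms so that the pseudoinverse $J^\dagger$ of the output-w.r.t.-$\ve{v}$ Jacobian reappears, with the per-layer outer-product structure packaged exactly by $R$.

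First I would compute the parameter Jacobian $J_\theta \triangleq \partial\ve{r}_L/\partial\bar{W}$ by the chain rule. Since $\ve{r}_L$ depends on $W_i$ only through $\ve{v}_i = W_i\ve{r}_{i-1}$, and the vectorization identity gives $\ve{v}_i = (\ve{r}_{i-1}^T\otimes I)\,\text{vec}(W_i)$, the layer-$i$ block of $J_\theta$ is $J_i(\ve{r}_{i-1}^T\otimes I)$, where $J_i = \partial\ve{r}_L/\partial\ve{v}_i$. Concatenating over layers and using $J=[J_1\,\dots\,J_L]$, this is compactly $J_\theta = J R^T$, where $R$ is the block-diagonal matrix whose $i$-th block is $\ve{r}_{i-1}\otimes I$; I would then verify this $R$ matches the definition in \eqref{eq:R_matrix}.

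Next, because for a minibatch of size $1$ the parameter count dwarfs the output dimension, $J_\theta$ is a wide matrix, and I would compute its Gram matrix $J_\theta J_\theta^T = J R^T R J^T$. Here $R^TR$ is block-diagonal with blocks $(\ve{r}_{i-1}^T\otimes I)(\ve{r}_{i-1}\otimes I) = \|\ve{r}_{i-1}\|_2^2\, I$, and this is the one place Condition \ref{con:magnitude_r} is essential: it makes all these norms equal to the common value $\|\ve{r}\|_2$, so $R^TR = \|\ve{r}\|_2^2\, I$ and therefore $J_\theta J_\theta^T = \|\ve{r}\|_2^2\, J J^T$. Since $J$ is full rank, $J J^T$ is invertible, hence $J_\theta$ is full row rank and $J_\theta^\dagger = J_\theta^T(J_\theta J_\theta^T)^{-1} = \frac{1}{\|\ve{r}\|_2^2} R J^T(J J^T)^{-1} = \frac{1}{\|\ve{r}\|_2^2} R J^\dagger$. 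Plugging into \eqref{eq:damped_gn_iteration} gives $\Delta\bar{W}^{GN} = \frac{1}{\|\ve{r}\|_2^2} R J^\dagger \ve{e}_L$, and converting the output error through the $L^2$ loss via $\boldsymbol{\delta}_L = -\lambda\,\partial\mathcal{L}/\partial\ve{r}_L = 2\lambda\,\ve{e}_L$, i.e. $\ve{e}_L = \frac{1}{2\lambda}\boldsymbol{\delta}_L$, yields the claimed $\Delta\bar{W}^{GN} = \frac{1}{2\lambda\|\ve{r}\|_2^2} R J^\dagger \boldsymbol{\delta}_L$.

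The main obstacle is keeping the Kronecker/vectorization bookkeeping consistent — in particular, identifying the block-diagonal $R$ from \eqref{eq:R_matrix} so that $J_\theta = J R^T$ and $R^TR = \|\ve{r}\|_2^2\, I$ hold exactly, and confirming $(\ve{r}_{i-1}^T\otimes I)^T = \ve{r}_{i-1}\otimes I$. Everything else is routine linear algebra; the only genuinely load-bearing hypotheses are that $J$ is full rank (so $J J^T$ is invertible and the thin-pseudoinverse formula $J_\theta^\dagger = J_\theta^T(J_\theta J_\theta^T)^{-1}$ is valid) and Condition \ref{con:magnitude_r} (so the mixed weighting $\|\ve{r}_{i-1}\|_2^2$ factors out as a single scalar, rather than leaving a \emph{weighted} pseudoinverse of $J$ instead of the clean $J^\dagger$).
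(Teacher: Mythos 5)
Your proof is correct and follows the same overall strategy as the paper: factor the parameter Jacobian as $J_{\bar{W}} = J R^T$ via the Kronecker identity, use Condition \ref{con:magnitude_r} to collapse $R^T R$ to $\|\ve{r}\|_2^2 I$, conclude $J_{\bar{W}}^{\dagger} = \frac{1}{\|\ve{r}\|_2^2} R J^{\dagger}$, and convert $\ve{e}_L = \frac{1}{2\lambda}\ves{\delta}_L$. The one place you diverge is the justification of the pseudoinverse factorization: you invoke the closed form $J_{\bar{W}}^{\dagger} = J_{\bar{W}}^T (J_{\bar{W}} J_{\bar{W}}^T)^{-1}$, which requires $J$ (hence $J_{\bar{W}}$) to have full row rank --- an assumption stated in Theorem \ref{theorem:GN_TPDI} but not in the lemma itself --- whereas the paper appeals to the general factorization criterion $(AB)^{\dagger} = B^{\dagger} A^{\dagger}$ when $B$ has orthonormal rows (applied to $\frac{1}{\|\ve{r}\|_2}R^T$) together with an explicit SVD computation of $(R^T)^{\dagger}$, which does not require $J$ to be full rank. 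Your route is more elementary and perfectly adequate in the setting where the lemma is used; the paper's is marginally more general. Since you flagged the full-rank hypothesis as load-bearing, this is a stylistic rather than a substantive gap.
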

\begin{proof}
Consider the Jacobian of the output w.r.t. the network weights $W$ (in vectorized form as defined above), evaluated at the feedforward activation:
\begin{align}
    J_{\bar{W}} \triangleq \frac{\partial \ve{r}_L}{\partial \bar{W}}\bigg\rvert_{\ve{r}_L=\ve{r}_L^{-}}.
\end{align}
For a minibatch size of 1, the GN update for the parameters $\bar{W}$, assuming an $L^2$ output loss, is given by \citep{gauss1809theoria, levenberg1944method}
\begin{align}\label{eq:GN_step_original}
    \Delta \bar{W}^{GN} = J_{\bar{W}}^{\dagger}(\ve{r}_L^{\text{true}} - \ve{r}^-_L) = \frac{1}{2\lambda} J_{\bar{W}}^{\dagger} \boldsymbol{\delta}_L,
\end{align}
with $\ve{r}_L^{\text{true}}$ the true supervised output (e.g., the class label). The remainder of this proof will manipulate expression \eqref{eq:GN_step_original} in order to reach \eqref{eq:GN_step}. Using $J_{\vec{W}_i} \triangleq \frac{\partial \ve{r}_L}{\partial \vec{W}_i}\big\rvert_{\ve{r}_L=\ve{r}_L^{-}}$, $J_{\bar{W}}$ can be restructured as:
\begin{align}
    J_{\bar{W}} = [J_{\vec{W}_1} ... J_{\vec{W}_L}].
\end{align}
Moreover, $J_{\vec{W}_i} = J_i \frac{\partial \ve{v}_i}{\partial \vec{W}_i}\big\rvert_{\ve{v}_i=\ve{v}_i^{-}}$. Using Kronecker products, this becomes\footnote{The Kronecker product leads to the following equality: $\text{vec}(ABC) = (C^T\otimes A)\text{vec}(B)$. Applied to our situation, this leads to the following equality: $\ve{v}_i = W_i\ve{r}_{i-1} = (\ve{r}_{i-1}^T \otimes I)\vec{W}_i$}
\begin{align}
    J_{\vec{W}_i} = J_i \big((\ve{r}_{i-1}^-)^T \otimes I\big).
\end{align}
Using the structure of $J_{\bar{W}}$, this leads to 
\begin{align}\label{eq:R_matrix}
    J_{\bar{W}} &= J R^T \\
    R^T &\triangleq \begin{bmatrix} 
        (\ve{r}_{0}^-)^T \otimes I & 0 & \hdots & 0 \\
        0 & (\ve{r}_{1}^-)^T \otimes I & \hdots & 0 \\
        \vdots & 0 & \ddots & 0 \\
        0 & \hdots &0 & (\ve{r}_{L-1}^-)^T \otimes I
        \end{bmatrix}
\end{align}
with the dimensions of $I$ such that the equality $J_{\bar{W}} = J R^T$ holds. What remains to be proven is that $J_{\bar{W}}^{\dagger} = \frac{1}{\|\ve{r}\|_2^2}R J^{\dagger}$, assuming that Condition \ref{con:magnitude_r} holds and knowing that $J_{\bar{W}} = J R^T$. To prove this, we need to know under which conditions $(J R^T)^{\dagger} = (R^T)^{\dagger} J^{\dagger}$. The following condition specifies when a pseudoinverse of a matrix product can be factorized \citep{campbell2009generalized}.
\begin{condition}\label{con:pseudo_inverse_factorization}
The Moore-Penrose pseudoinverse of a matrix product $(AB)^{\dagger}$ can be factorized as $(AB)^{\dagger} = B^{\dagger}A^{\dagger}$ if one of the following conditions hold:
\begin{enumerate}
    \item $A$ has orthonormal columns
    \item $B$ has orthonormal rows
    \item $B = A^T$ 
    \item $A$ has all columns linearly independent and $B$ has all rows linearly independent
\end{enumerate}
\end{condition}
In our case, J has more columns than rows, hence conditions \ref{con:pseudo_inverse_factorization}.1 and \ref{con:pseudo_inverse_factorization}.4 can never be satisfied.
Furthermore, condition \ref{con:pseudo_inverse_factorization}.3 does not hold, which leaves us with condition \ref{con:pseudo_inverse_factorization}.2. To investigate whether $R^T$ has orthonormal rows, we compute $R^TR$:
\begin{align}
    R^TR = \begin{bmatrix} \|\ve{r}^-_0\|_2^2 I  & \hdots & 0 \\ \vdots & \ddots & \vdots\\ 0 & \hdots & \|\ve{r}^-_{L-1}\|_2^2 I \end{bmatrix}
\end{align}
If Condition \ref{con:magnitude_r} holds, we have $\|\ve{r}^-_0\|_2^2 = \ldots = \|\ve{r}^-_{L-1}\|_2^2 \triangleq \|\ve{r}\|_2^2$ such that:
\begin{align}
    R^T R = \|\ve{r}\|_2^2 I.
\end{align}
Hence, $\frac{1}{\|\ve{r}\|_2} R^T$ has orthonormal rows iff Condition \ref{con:magnitude_r} holds. From now on, we assume that Condition \ref{con:magnitude_r} holds. Next, we will compute $(R^T)^{\dagger}$. Consider $R^T = U \Sigma V^T$, the singular value decomposition (SVD) of $R^T$. Its pseudoinverse is given by $(R^T)^{\dagger} = V \Sigma^{\dagger} U^T$. As the SVD is unique and $\frac{1}{\|\ve{r}\|_2}R^T$ has orthonormal rows, we can construct the SVD manually:
\begin{align}
    R^T = \underbrace{I}_{=U} \underbrace{\begin{bmatrix} \|\ve{r}\|_2I & 0 \end{bmatrix}}_{=\Sigma} \underbrace{\begin{bmatrix} \frac{1}{\|\ve{r}\|_2} R^T \\ \tilde{V}^T \end{bmatrix}}_{=V^T},
\end{align}
with $\tilde{V}^T$ being a basis orthonormal to $\frac{1}{\|\ve{r}\|_2} R^T$. Hence, we have that 
\begin{align}
    (R^T)^{\dagger} = V \Sigma^{\dagger} U^T = \frac{1}{\|\ve{r}\|_2^2} R.
\end{align}
Putting everything together and assuming that Condition \ref{con:magnitude_r} holds, we have that 
\begin{align}
    \Delta \bar{W}^{GN} = \frac{1}{2\lambda} J_{\bar{W}}^{\dagger} \boldsymbol{\delta}_L = \frac{1}{2\lambda\|\ve{r}\|_2^2}R J^{\dagger} \boldsymbol{\delta}_L,
\end{align}
thereby concluding the proof.
\end{proof}

Now, we are ready to prove Theorem \ref{theorem:GN_TPDI}.

\begin{theorem}[Theorem \ref{theorem:GN_TPDI} in main manuscript]\label{theorem:SM_GN_TPDI}
Assuming Conditions \ref{con:magnitude_r} and \ref{con:Q_GN} hold, $\J$ is full rank, the task loss $\mathcal{L}$ is a $L^2$ loss, and $\lambda,\alpha \rightarrow 0$ 
, then the following steady-state (ss) updates for the forward weights
\begin{align}\label{eq:SM_update_W_ss}
    \Delta W_i = \eta (\ve{v}_{i,\mathrm{ss}} - \ve{v}_{i,\mathrm{ss}}^{\text{ff}})\ve{r}_{i-1,\mathrm{ss}}^T,
\end{align}
with $\eta$ a stepsize parameter, align with the weight updates for $W_i$ for the feedforward network \eqref{eq:feedforward_equilibrium} prescribed by the GN optimization method with a minibatch size of 1.

\end{theorem}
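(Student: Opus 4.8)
The plan is to reduce the DFC steady-state update to the Gauss-Newton update of Lemma \ref{lemma:GN_update_feedforward} by vectorizing the outer-product form and then taking the two limits $\alpha\to0$ and $\lambda\to0$. First I would rewrite the proposed update \eqref{eq:SM_update_W_ss} as the outer product $\Delta W_i = \eta\,\Delta\ve{v}_{i,\mathrm{ss}}\,\ve{r}_{i-1,\mathrm{ss}}^T$, where $\Delta\ve{v}_{i,\mathrm{ss}} = \ve{v}_{i,\mathrm{ss}} - \ve{v}_{i,\mathrm{ss}}^{\mathrm{ff}}$ is exactly the quantity for which Lemma \ref{lemma:steady_state_solution_system} supplies a closed form. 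Vectorizing via $\mathrm{vec}(ab^T)=b\otimes a$ gives $\mathrm{vec}(\Delta W_i) = \eta\,(\ve{r}_{i-1,\mathrm{ss}}\otimes\Delta\ve{v}_{i,\mathrm{ss}})$, so the concatenated update $\Delta\bar W$ is assembled blockwise from the layer components of $\Delta\ve{v}_{\mathrm{ss}}$.

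Next I would take $\alpha\to0$. Since $\tilde\alpha = \alpha/(1+\alpha k_p)\to0$, Lemma \ref{lemma:steady_state_solution_system} gives $\Delta\ve{v}_{\mathrm{ss}} = Q(JQ+\tilde\alpha I)^{-1}\boldsymbol{\delta}_L + \mathcal{O}(\lambda^2)$, and Condition \ref{con:Q_GN} together with $J$ full rank lets me invoke Lemma \ref{lemma:penrose_conditions_Q} to replace $Q(JQ+\tilde\alpha I)^{-1}$ by $J^\dagger$. Hence $\Delta\ve{v}_{i,\mathrm{ss}}\to(J^\dagger\boldsymbol{\delta}_L)_i$, the $i$-th layer block. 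Taking $\lambda\to0$ then does two things: the control nudge is $O(\lambda)$, so the steady-state activations relax to their feedforward values $\ve{r}_{i-1,\mathrm{ss}}\to\ve{r}_{i-1}^-$, and the $\mathcal{O}(\lambda^2)$ remainder becomes negligible relative to the leading $O(\lambda)$ contribution carried by $\boldsymbol{\delta}_L$.

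The matching step is to compare against the GN update $\Delta\bar W^{GN} = \frac{1}{2\lambda\|\ve{r}\|_2^2}R J^\dagger\boldsymbol{\delta}_L$ from Lemma \ref{lemma:GN_update_feedforward}. Using the block-diagonal structure of $R$ (blocks $\ve{r}_{i-1}^-\otimes I$) and the Kronecker identity $(\ve{r}_{i-1}^-\otimes I)\,v = \ve{r}_{i-1}^-\otimes v$, the $i$-th block of $R J^\dagger\boldsymbol{\delta}_L$ equals $\ve{r}_{i-1}^-\otimes(J^\dagger\boldsymbol{\delta}_L)_i$, which is precisely $\mathrm{vec}(\Delta W_i)$ from the first step up to a scalar. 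I would then conclude $\Delta\bar W = 2\lambda\|\ve{r}\|_2^2\,\eta\,\Delta\bar W^{GN}$, so the two concatenated updates are parallel and the angle between them is zero, i.e.\ they align.

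The step I expect to be the main obstacle is guaranteeing that the proportionality constant is the \emph{same} for every layer, so that the concatenated vectors are genuinely aligned rather than merely blockwise proportional. This is exactly where Condition \ref{con:magnitude_r} is essential: it is what allows Lemma \ref{lemma:GN_update_feedforward} to factor $(J R^T)^\dagger = (R^T)^\dagger J^\dagger$ and to evaluate $(R^T)^\dagger = \frac{1}{\|\ve{r}\|_2^2}R$ with a single layer-independent norm; without equal layer norms each block would carry its own $\|\ve{r}_{i-1}\|_2^2$ and the global alignment would fail. The remaining work is bookkeeping in the limits—verifying that the order of the $\alpha\to0$ and $\lambda\to0$ limits is immaterial and that the $\mathcal{O}(\lambda^2)$ term is subleading—which is routine once the Kronecker algebra above is set up.
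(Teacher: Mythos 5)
Your proposal is correct and follows essentially the same route as the paper's own proof: vectorize the outer-product update via the Kronecker identity to get $\Delta\bar W = \eta R_{\mathrm{ss}}\Delta\ve{v}_{\mathrm{ss}}$, apply Lemma \ref{lemma:penrose_conditions_Q} under Condition \ref{con:Q_GN} to obtain $\lim_{\alpha\to0}\Delta\ve{v}_{\mathrm{ss}} = J^{\dagger}\ves{\delta}_L + \mathcal{O}(\lambda^2)$, let $\lambda\to0$ so that $R_{\mathrm{ss}}\to R$, and match against the GN update of Lemma \ref{lemma:GN_update_feedforward}, with Condition \ref{con:magnitude_r} doing exactly the work you identify in the pseudoinverse factorization. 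You also correctly note that the result is alignment (proportionality up to the scalar $\eta$) rather than equality, which is precisely how the paper concludes.
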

\begin{proof}
Lemma \ref{lemma:equivalence_ff_network} shows that the dynamical network $\eqref{eq:network_dynamics}$ at equilibrium in the absence of feedback is equivalent to a feedforward neural network. Lemma \ref{lemma:GN_update_feedforward} provides the GN update step for such a feedforward network, and hence also for our dynamical network. To prove Theorem \ref{theorem:GN_TPDI}, we have to show that $\lim_{\alpha, \lambda \rightarrow 0} \eta (\ve{v}_{i,\mathrm{ss}}^S - \ve{v}_{i,\mathrm{ss}}^B)\ve{r}_{i-1,\mathrm{ss}}^T$ is aligned with the GN update. First, we combine the updates $\Delta W_i$ into their concatenated vectorized form:
\begin{align}
    \Delta W_i &= \eta\Delta \ve{v}_{i,\mathrm{ss}} \ve{r}_{i-1,\mathrm{ss}}^T \\
    \Delta \vec{W}_i &= (\ve{r}_{i-1,\mathrm{ss}} \otimes I) \Delta \ve{v}_{i,\mathrm{ss}} \\
    \Delta \bar{W} &= \begin{bmatrix} \Delta \vec{W}_1 \\ \vdots \\ \Delta \vec{W}_L \end{bmatrix} = \eta R_{\mathrm{ss}} \Delta \ve{v}_{\mathrm{ss}} \label{eq:cor6_delta_w_bar}
\end{align}
with $R_{\mathrm{ss}}$ as defined in \eqref{eq:R_matrix}, but then with $\ve{r}_{i,\mathrm{ss}}$ instead of $\ve{r}_i^-$. From the linearized dynamics \eqref{eq:ss_v}, combined with Lemma \ref{lemma:penrose_conditions_Q} while assuming $\J$ is of full rank, we have that
\begin{align}\label{eq:cor6_lim_vss}
    \lim_{\alpha \rightarrow 0} \Delta \ve{v}_{\mathrm{ss}} = J^{\dagger}\boldsymbol{\delta}_L + \mathcal{O}(\lambda^2)
\end{align}
iff Condition \ref{con:Q_GN} holds. Taking $\eta = \frac{1}{2\lambda\|\ve{r}\|_2^2}$ and assuming an $L^2$ task loss, we have (using Lemma \ref{lemma:GN_update_feedforward}):
\begin{align}
    \lim_{\alpha \rightarrow 0} \Delta \bar{W} &= \frac{1}{2\lambda\|\ve{r}\|_2^2} R_{\mathrm{ss}} \Delta \ve{v}_{\mathrm{ss}} = \frac{1}{2\lambda\|\ve{r}\|_2^2} R_{\mathrm{ss}} J^{\dagger} 2\lambda(\ve{r}_L^{\text{true}} - \ve{r}^-_L) + \mathcal{O}(\lambda^2) \\
    \lim_{\alpha, \lambda \rightarrow 0} \Delta \bar{W} &= \frac{1}{\|\ve{r}\|_2^2} R  J^{\dagger} (\ve{r}_L^{\text{true}} - \ve{r}^-_L)\label{eq:limit_delta_W_bar}
\end{align}
where we used that $\lim_{\lambda \rightarrow 0} R_{\mathrm{ss}} = R$. By comparing $\lim_{\alpha, \lambda \rightarrow 0} \Delta \bar{W}$ to Lemma \ref{lemma:GN_update_feedforward}, we see that it is equal to the GN update for $\bar{W}$ for a minibatchsize of 1, iff Condition \ref{con:magnitude_r} and \ref{con:Q_GN} hold and for an appropriate learning rate $\eta = \frac{1}{2\lambda\|\ve{r}\|_2^2}$. As $\eta$ is a scalar, we have that for arbitrary $\eta$, $\lim_{\alpha, \lambda \rightarrow 0} \Delta \bar{W}$ is proportional to the Gauss-Newton parameter update, thereby concluding the proof.
\end{proof}
This theorem shows that for tasks with an $L^2$ loss and when Conditions \ref{con:magnitude_r} and \ref{con:Q_GN} hold, DFC approximates Gauss-Newton updates with a minibatch size of 1, which becomes an exact equivalence in the limit of $\alpha$ and $\lambda$ to zero. 

\subsection{DFC uses minimum norm updates}
To remove the need for Condition \ref{con:magnitude_r} and a L2 task loss,\footnote{The Gauss-Newton method can be generalized to other loss functions by using the Generalized Gauss-Newton method \citep{schraudolph2002fast}.} we show that the learning behavior of our network is mathematically sound under more relaxed conditions. Theorem \ref{theorem:MN_DFC} (restated below for convenience) shows that for arbitrary loss functions and without the need for Condition \ref{con:magnitude_r}, our synaptic plasticity rule can be interpreted as a weighted minimum norm (MN) parameter update for reaching the output target, assuming linearized dynamics (which becomes exact in the limit of $\lambda\rightarrow 0$).
\begin{theorem}\label{theorem:SM_MN_DFC}
Assuming stable dynamics, Condition \ref{con:Q_GN} holds and $\lambda, \alpha \rightarrow 0$, the steady-state weight updates \eqref{eq:update_W_ss} are proportional to the weighted MN updates of $W_i$ for letting the feedforward output $\ve{r}^-_L$ reach $\ve{r}_L^*$, i.e., the solution to the following optimization problem:
\begin{align}
    \argmin_{\Delta W_i, i \in [1,..,L]}\quad \sum_{i=1}^{L}\|\ve{r}_{i-1}^{-(m)}\|_2^2\|\Delta W_i\|_F^2 \qquad \text{s.t.}\quad \ve{r}^{-(m+1)}_L = \ve{r}_L^{*(m)},
\end{align}
with $m$ the iteration and $\ve{r}^{-(m+1)}_L$ the network output without feedback after the weight update.
\end{theorem}

\begin{proof}
Rewriting the optimization problem using
\begin{align}
    M = \begin{bmatrix}\label{eq_app:M_matrix}
    \|\ve{r}^-_0\|_2 I & \hdots & 0 \\
    & \ddots & \\
    0 & \hdots & \|\ve{r}^-_{L-1}\|_2 I \end{bmatrix}
\end{align}
and the concatenated vectorized weights $\bar{W}$, we get:
\begin{align}
    \argmin_{\Delta \bar{W}}\quad & \|M\Delta \bar{W}\|_2^2 \\
    \text{s.t.}\qquad &\ve{r}^{-(m+1)}_L = \ve{r}_L^{*(m)} \label{eq:cor6_constraints}
\end{align}
Linearizing the feedforward dynamics around the current parameter values $\bar{W}^{(m)}$ and using Lemma \ref{lemma:equivalence_ff_network}, we get:
\begin{align}\label{eq:cor6_linearized_output}
    \ve{r}_L^{-(m+1)} = \ve{r}_L^{-(m)} + J_{\bar{W}}\Delta\bar{W} + \mathcal{O}(\|\Delta \bar{W}\|_2^2).
\end{align}
We will now assume that $\mathcal{O}(\|\Delta \bar{W}\|_2^2)$ vanishes in the limit of $\lambda \rightarrow 0$, relative to the other terms in this Taylor expansion, and check this assumption at the end of the proof. 
Using \eqref{eq:cor6_linearized_output} to rewrite the constraints \eqref{eq:cor6_constraints}, we get:
\begin{align}
    \ve{r}^{-(m+1)}_L &= \ve{r}_L^{*(m)} \\
    \Leftrightarrow \quad J_{\bar{W}}\Delta\bar{W}  &= \boldsymbol{\delta}_L. 
\end{align}
To solve the optimization problem, we construct its Lagrangian:
\begin{align}
    \mathbb{L} = \|M\Delta \bar{W}\|_2^2 + \ves{\mu}^T (J_{\bar{W}} \Delta \bar{W} - \vdl),
\end{align}
with $\ves{\mu}$ the Lagrange multipliers. As this is a convex optimization problem, the optimal solution can be found by solving the following set of equations:
\begin{align}
    \bigg(\frac{\partial \mathbb{L}}{\partial \ves{\mu}}\bigg)^T &= J_{\bar{W}} \Delta \bar{W}^* -\vdl = 0\\
    \bigg(\frac{\partial \mathbb{L}}{\partial\Delta \bar{W}}\bigg)^T &= 2 M^2\Delta \bar{W}^* + J_{\bar{W}}^T\ves{\mu}^* = 0 \quad \Rightarrow \Delta \bar{W}^* = -\frac{1}{2}M^{-2} J_{\bar{W}}^T\ves{\mu}^*\\
    \Rightarrow \quad \ves{\mu}^* &= -2(J_{\bar{W}}M^{-2}J_{\bar{W}}^T)^{-1}\vdl \\
    \Rightarrow \quad \Delta \bar{W}^* &=  M^{-2}J_{\bar{W}}^T\big(J_{\bar{W}} M^{-2} J_{\bar{W}}^T\big)^{-1}\vdl =   M^{-1} \big(J_{\bar{W}}M^{-1}\big)^{\dagger} \vdl,
\end{align}
assuming $J_{\bar{W}}M^{-2}J_{\bar{W}}^T$ is invertible, which is highly likely, as $J_{\bar{W}}$ is a skinny horizontal matrix and $M$ full rank. As $\mathcal{O}(\|\Delta \bar{W}\|_2) = \mathcal{O}(\lambda)$ and $\mathcal{O}(\|\Delta \bar{W}\|^2_2) = \mathcal{O}(\lambda^2)$, the Taylor expansion error $\mathcal{O}(\|\Delta \bar{W}\|^2_2)$ vanishes in the limit of $\lambda \rightarrow 0$, relative to the zeroth and first order terms, thereby confirming our assumption. 

Now, we proceed by factorizing $\big(J_{\bar{W}}M^{-1}\big)^{\dagger}$ into $\J^{\dagger}$ and some other term, similar as in Lemma \ref{lemma:GN_update_feedforward}. First, we note that $J_{\bar{W}} M^{-1} = \J R^T M^{-1}$, with $R^T$ defined in eq. \eqref{eq:R_matrix}. Furthermore, we have that $\big(R^TM^{-1}\big)\big(R^TM^{-1}\big)^T = I$, hence $R^TM^{-1}$ has orthonormal rows. Following Condition \ref{con:pseudo_inverse_factorization}, we can factorize $\big(J_{\bar{W}}M^{-1}\big)^{\dagger}$ as follows:
\begin{align} \label{eq_app:t7_W_mn_i}
    \big(J_{\bar{W}}M^{-1}\big)^{\dagger} &= \big(\J R^T M^{-1}\big)^{\dagger} = \big(R^TM^{-1}\big)^{\dagger} \J^{\dagger} = M^{-1} R \J^{\dagger} \\
   \Rightarrow \quad \Delta \bar{W}^* &= M^{-2} R \J^{\dagger} \vdl \\
   \Rightarrow \quad \Delta W_i^* &= \frac{1}{\|\ve{r}_{i-1}\|_2^2} \big[\J^{\dagger} \vdl\big]_i \ve{r}_{i-1}^T,
\end{align}
with $\big[\J^{\dagger} \vdl\big]_i$ the entries of the vector $\J^{\dagger} \vdl$ corresponding to $\ve{v}_i$.
We used $\big(R^TM^{-1}\big)^{\dagger} = M^{-1} R$, which has a similar derivation as the one used for $\big(R^T\big)^{\dagger}$ in Lemma \ref{lemma:GN_update_feedforward}.

We continue by showing that the weight update at equilibrium of DFC aligns with the MN solutions $\Delta W^*_i$. Adapting \eqref{eq:limit_delta_W_bar} from Theorem \ref{theorem:GN_TPDI} to arbitrary loss functions, assuming \ref{con:Q_GN} holds, and taking a layer-specific learning rate $\eta_i = \frac{1}{\|\ve{r}_{i-1}\|_2^2}$, we get that
\begin{align}\label{eq_app:t7_W_dfc_i}
    \lim_{\alpha, \lambda \rightarrow 0} \Delta W_i &= \frac{1}{\|\ve{r}_{i-1}\|_2^2}\big[\J^{\dagger} \vdl\big]_i \ve{r}_{i-1}^T,
\end{align}
for which we used the same notation as in eq. \eqref{eq_app:t7_W_mn_i} to divide the vector $\J^{\dagger} \vdl$ in layerwise components. As the DFC update \eqref{eq_app:t7_W_dfc_i} is equal to the MN solution \eqref{eq_app:t7_W_mn_i}, we can conclude the proof. Note that because we used layer-specific learning rates $\eta_i = \frac{1}{\|\ve{r}_{i-1}\|_2^2}$ only the layerwise updates $\Delta W_i$ and $\Delta W_i^*$ align, not their concatenated versions $\Delta \bar{W}$ and $\Delta \bar{W}^*$.
\end{proof}

Finally, we will remove Condition \ref{con:Q_GN} and show in Proposition \ref{prop:descent_direction} (here repeated in Proposition \ref{prop_app:descent_direction} for convenience) that the weight updates still follow a descent direction for arbitrary feedback weights. Before proving Proposition \ref{prop:descent_direction}, we need to introduce and prove the following Lemma.

\begin{lemma}\label{lemma:v_ss}
Assuming $\tilde{J}_1$ is full rank,
\begin{align}
    \lim_{\alpha \rightarrow 0} Q(JQ + \alpha I)^{-1} = U_{Q} \begin{bmatrix}
\tilde{J}_1^{-1} \\
0
\end{bmatrix} V_{Q}^T,
\end{align}
with $U_{Q}$, $V_{Q}$ the left and right singular vectors of $\Q$ and $\tilde{J}_1$ as defined as follows: consider $\tilde{J} = V_{Q}^T J U_{Q}$, the linear transformation of $\J$ by the singular vectors of $Q$ which can be written in blockmatrix form $\tilde{J} = [\tilde{J}_1 \tilde{J}_2]$ with $\tilde{J}_1$ a square matrix.
\end{lemma}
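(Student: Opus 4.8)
The plan is to diagonalize $Q$ by its singular value decomposition and push it through the resolvent $(JQ+\alpha I)^{-1}$, which turns the whole statement into a routine $m\times m$ limit, where $m=n_L$ is the output dimension. Concretely, write the full SVD $Q = U_Q \Sigma_Q V_Q^T$. Since $Q$ is a tall matrix of full column rank (the generic case, consistent with the full-rank setting of Lemma \ref{lemma:penrose_conditions_Q}), $U_Q$ is $N\times N$ orthogonal, $V_Q$ is $m\times m$ orthogonal, and $\Sigma_Q = \begin{bmatrix} S_Q \\ 0\end{bmatrix}$ with $S_Q$ square, diagonal and invertible. Using $\tilde J = V_Q^T J U_Q$ from the lemma statement I would write $J = V_Q \tilde J U_Q^T$, split $\tilde J = [\tilde J_1\ \tilde J_2]$ into its first $m$ columns and the rest, and exploit $U_Q^T U_Q = I$ to collapse the product:
\[
JQ = V_Q \tilde J \,U_Q^T U_Q\, \Sigma_Q V_Q^T = V_Q \tilde J \Sigma_Q V_Q^T = V_Q \tilde J_1 S_Q V_Q^T,
\]
where the zero block of $\Sigma_Q$ annihilates the $\tilde J_2$ contribution.

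Next I would use $\alpha I = V_Q (\alpha I) V_Q^T$ to absorb the damping term into the same orthogonal conjugation, giving $JQ + \alpha I = V_Q(\tilde J_1 S_Q + \alpha I) V_Q^T$ and hence
\[
(JQ+\alpha I)^{-1} = V_Q(\tilde J_1 S_Q + \alpha I)^{-1}V_Q^T.
\]
Left-multiplying by $Q = U_Q \Sigma_Q V_Q^T$ and cancelling $V_Q^T V_Q = I$ yields $Q(JQ+\alpha I)^{-1} = U_Q \Sigma_Q(\tilde J_1 S_Q+\alpha I)^{-1}V_Q^T$. Because $\tilde J_1$ is full rank by hypothesis and $S_Q$ is invertible, $\tilde J_1 S_Q$ is invertible, so matrix inversion is continuous at $\alpha=0$ and the limit is just evaluation there:
\[
\lim_{\alpha\to 0} S_Q(\tilde J_1 S_Q + \alpha I)^{-1} = S_Q(\tilde J_1 S_Q)^{-1} = S_Q S_Q^{-1}\tilde J_1^{-1} = \tilde J_1^{-1}.
\]
Carrying this through the block structure of $\Sigma_Q$ (the bottom block staying zero) gives precisely $U_Q\begin{bmatrix}\tilde J_1^{-1}\\0\end{bmatrix}V_Q^T$, which is the claim.

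The argument is essentially bookkeeping, so I do not expect a genuine obstacle; the only points requiring care are (i) keeping the block partition of $\Sigma_Q$ aligned with the partition $\tilde J = [\tilde J_1\ \tilde J_2]$ so that the products $\tilde J\Sigma_Q = \tilde J_1 S_Q$ and $\Sigma_Q(\,\cdot\,)^{-1}$ come out with the correct dimensions, and (ii) justifying that the $\alpha\to 0$ limit reduces to evaluation at $\alpha=0$, which is where the full-rank hypotheses on $\tilde J_1$ (and the full column rank of $Q$, ensuring $S_Q^{-1}$ exists) are genuinely used. If one wished to drop the full-column-rank assumption on $Q$, the same computation still applies but with a smaller invertible block of $\Sigma_Q$, and the conclusion would need the corresponding submatrix of $\tilde J$ to be invertible; I would flag this as the natural boundary of the result rather than pursue it, since the stated lemma assumes the nondegenerate configuration.
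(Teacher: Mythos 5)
Your proof is correct and follows essentially the same route as the paper: both take the SVD $Q = U_Q\Sigma_Q V_Q^T$, conjugate the resolvent to get $Q(JQ+\alpha I)^{-1} = U_Q\Sigma_Q(\tilde J\Sigma_Q+\alpha I)^{-1}V_Q^T$, and evaluate the limit using invertibility of $\tilde J_1\Sigma_Q^D$. The paper likewise assumes no zero singular values of $Q$ (your full-column-rank caveat), so there is nothing to add.
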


\begin{proof}
For the proof, we use the singular value decomposition (SVD) of $Q$ and use it to rewrite $Q(JQ + \tilde{\alpha} I)^{-1}$. The SVD is given by $Q = U_{Q} \Sigma_{Q} V_{Q}^T$, with $V_{Q}$ and $U_{Q}$ square orthogonal matrices and $\Sigma_{Q}$ a rectangular diagonal matrix: 
\begin{align}
    \Sigma_{Q} = \begin{bmatrix} \Sigma_{Q}^D \\ 0 \end{bmatrix},
\end{align}
with $\Sigma_{Q}^D$ a square diagonal matrix, containing the singular values of $Q$. Now, let us define $\tilde{J}$ as 
\begin{align}
    \tilde{J} \triangleq V_{Q}^T J U_{Q},
\end{align}
such that $J = V_{Q} \tilde{J} U_{Q}^T$. $\tilde{J}$ can be structured into $\tilde{J} = [\tilde{J}_1 \tilde{J}_2]$ with $\tilde{J}_1$ a square matrix. Now, we can rewrite $Q(JQ + \alpha I)^{-1}$ as 
\begin{align}
    Q(JQ + \alpha I)^{-1} &= U_{Q} \Sigma_{Q} V_{Q}^T\big(V_{Q} \tilde{J} U_{Q}^TU_{Q} \Sigma_{Q} V_{Q}^T + \alpha I\big)^{-1} \\
    &= U_{Q} \Sigma_{Q} \big(\tilde{J}\Sigma_{Q} + \alpha I)^{-1}V_{Q}^T \\
    &=  U_{Q} \begin{bmatrix} \Sigma_{Q}^D \\ 0 \end{bmatrix} \big(\tilde{J}_1\Sigma_{Q}^D + \alpha I \big)^{-1}V_{Q}^T.
\end{align}
Assuming $\tilde{J}_1$ and $\Sigma_{Q}^D$ to be invertible (i.e., no zero singular values), this leads to: 
\begin{align}
    \lim_{\alpha \rightarrow 0} Q(JQ + \alpha I)^{-1} =U_{Q} \begin{bmatrix}
\tilde{J}_1^{-1} \\
0
\end{bmatrix} V_{Q}^T,
\end{align}
thereby concluding the proof.
\end{proof}
This lemma shows clearly that $\lim_{\alpha \rightarrow 0} Q(JQ + \alpha I)^{-1}$ is a generalized inverse of the forward Jacobian $J$, constrained by the column space of $Q$, which is represented by $U_{Q}$.

\begin{proposition}\label{prop_app:descent_direction}
Assuming stable network dynamics and $\lambda, \alpha \rightarrow 0$, the steady-state weight updates $\Delta W_{i,\mathrm{ss}}$ \eqref{eq:update_W_ss} with a layer-specific learning rate $\eta_i = \eta/\|r_{i-1}\|_2^2$ lie always within 90 degrees of the loss gradient direction. 
\end{proposition}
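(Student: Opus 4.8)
The plan is to show that the concatenated, vectorized update $\Delta\bar W$ makes an acute angle with the steepest-descent direction $-\nabla_{\bar W}\mathcal L$, i.e. that $\langle\Delta\bar W,\,-\nabla_{\bar W}\mathcal L\rangle\ge 0$, with equality only when the output error $\boldsymbol{\delta}_L$ vanishes. First I would express both vectors in the concatenated coordinates used in Lemma \ref{lemma:GN_update_feedforward}. Vectorizing the steady-state update \eqref{eq:update_W_ss} with the layer-specific rates $\eta_i=\eta/\|\ve r_{i-1}\|_2^2$ gives $\Delta\bar W=\eta\,M^{-2}R_{\mathrm{ss}}\Delta\ve v_{\mathrm{ss}}$, where $R$ is the block-diagonal matrix of \eqref{eq:R_matrix} and $M$ the block-diagonal norm matrix of \eqref{eq_app:M_matrix}. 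Since $J_{\bar W}=\J R^T$, the loss gradient is $\nabla_{\bar W}\mathcal L=J_{\bar W}^T\,\partial_{\ve r_L}\mathcal L=-\tfrac1\lambda R\J^T\boldsymbol{\delta}_L$, so the descent direction we must align with is $-\nabla_{\bar W}\mathcal L=\tfrac1\lambda R\J^T\boldsymbol{\delta}_L$.

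The crucial algebraic simplification is that the layer-specific learning rates are exactly chosen so that the geometry factor $R^TM^{-2}R$ collapses: each diagonal block equals $\|\ve r_{i-1}\|_2^{-2}(\ve r_{i-1}^T\ve r_{i-1}\otimes I)=I$, hence $R^TM^{-2}R=I$. Forming the inner product and using $R_{\mathrm{ss}}\to R$, $M_{\mathrm{ss}}\to M$ as $\lambda\to0$, I obtain
\begin{align}
\langle\Delta\bar W,\,-\nabla_{\bar W}\mathcal L\rangle
= \tfrac{\eta}{\lambda}\,(\Delta\ve v_{\mathrm{ss}})^T R_{\mathrm{ss}}^T M^{-2} R\,\J^T\boldsymbol{\delta}_L
\;\longrightarrow\; \tfrac{\eta}{\lambda}\,(\J\Delta\ve v_{\mathrm{ss}})^T\boldsymbol{\delta}_L .
\end{align}
Thus the whole problem reduces to controlling the single quantity $\J\Delta\ve v_{\mathrm{ss}}$, i.e. how faithfully the controller drives the linearized output toward its target.

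Finally I would invoke Lemma \ref{lemma_app:steady_state_solution_system}, which gives $\J\Delta\ve v_{\mathrm{ss}}=\J Q(\J Q+\tilde\alpha I)^{-1}\boldsymbol{\delta}_L+\mathcal O(\lambda^2)$. Taking $\alpha\to0$ and assuming $\J Q$ nonsingular, $\J Q(\J Q+\tilde\alpha I)^{-1}\to I$; equivalently, by Lemma \ref{lemma:v_ss} the limiting constrained inverse satisfies $\J\,\hat J^{+}=V_{\Q}V_{\Q}^T=I$ because $V_{\Q}$ is a square orthogonal factor of the SVD of $\Q$. Either route yields $\J\Delta\ve v_{\mathrm{ss}}\to\boldsymbol{\delta}_L$, so the inner product tends to $\tfrac{\eta}{\lambda}\|\boldsymbol{\delta}_L\|_2^2\ge0$, strictly positive whenever $\boldsymbol{\delta}_L\neq0$. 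Since $\boldsymbol{\delta}_L=-\lambda\,\partial_{\ve r_L}\mathcal L$, this is exactly the statement that $\Delta W_{i,\mathrm{ss}}$ lies within $90^\circ$ of the descent direction.

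The main obstacle is the step $\J Q(\J Q+\tilde\alpha I)^{-1}\to I$: its whole point is that it holds \emph{regardless} of $\mathrm{Col}(\Q)$ (so that Condition \ref{con:Q_GN} is not needed), but it requires $\J Q$ to be invertible. I expect to justify invertibility from the stability hypothesis via Condition \ref{con:local_stability}, whose eigenvalue condition (real parts $>-\alpha\to0^{+}$) rules out a zero eigenvalue of $\J Q$ in the limit; the delicate case to discuss is the marginal/degenerate one where $\J Q$ is singular, together with ensuring the $\mathcal O(\lambda^2)$ remainder is genuinely negligible relative to the first-order term as $\lambda\to0$.
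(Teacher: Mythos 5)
Your proposal is correct and follows essentially the same route as the paper's proof: vectorize the update as $\eta M^{-2}R_{\mathrm{ss}}\Delta\ve{v}_{\mathrm{ss}}$, use $R^TM^{-2}R_{\mathrm{ss}}\to I$ to reduce the inner product with the negative gradient to $\ves{\delta}_L^T J\Delta\ve{v}_{\mathrm{ss}}$, and then show $J\Delta\ve{v}_{\mathrm{ss}}\to\ves{\delta}_L$ via Lemma \ref{lemma:v_ss} (the paper takes exactly this route, writing $\ves{\delta}_L^T V_Q\tilde{J}\bigl[\tilde{J}_1^{-1};0\bigr]V_Q^T\ves{\delta}_L=\|\ves{\delta}_L\|_2^2$). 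Your closing remark about the invertibility of $JQ$ is a fair caveat that the paper handles only implicitly through the full-rank hypothesis on $\tilde{J}_1$ in Lemma \ref{lemma:v_ss}.
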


\begin{proof}
First, we show that the steady-state weight update lies within 90 degrees of the loss gradient, after which we continue to prove convergence for linear networks.
We define $\Delta \ve{v}_{\mathrm{ss}} \triangleq \ve{v}_{\mathrm{ss}} - \ve{v}^{\text{ff}}_{\mathrm{ss}}$, which allows us to rewrite the steady-state update \eqref{eq:update_W_ss} as
\begin{align}
    \Delta \bar{W}_{\mathrm{ss}} = \eta M^{-2} R_{\mathrm{ss}} \Delta \ve{v}_{\mathrm{ss}},
\end{align}
where we use the vectorized notation, $R_{\mathrm{ss}}$ defined in eq. \eqref{eq:R_matrix} with steady-state activations, and $M$ defined in eq. \eqref{eq_app:M_matrix} to represent the layer-specific learning rate $\eta_i = \eta/\|r_{i-1}\|_2^2$. 
Using Lemma \ref{lemma:steady_state_solution_system} and \ref{lemma:v_ss}, we have that 
\begin{align}
    \lim_{\alpha \rightarrow 0} \Delta \ve{v}_{\mathrm{ss}} = U_{Q} \begin{bmatrix}
\tilde{J}_1^{-1} \\
0
\end{bmatrix} V_{Q}^T \ves{\delta}_L.
\end{align}

Using the same vectorized notation, the negative gradient of the loss with respect to the network weights (i.e., the BP updates) can be written as:
\begin{align}
    \Delta \bar{W}^{BP} = \eta R \J^T \ves{\delta}_L.
\end{align}

To show that the steady-state weight update lies within 90 degrees of the loss gradient, we prove that their inner product is greater than zero in the limit of $\lambda,\alpha \rightarrow 0$:
\begin{align}
    \lim_{\lambda,\alpha \rightarrow 0} \langle \Delta \bar{W}^{BP}, \Delta \bar{W}_{\mathrm{ss}}\rangle  &= \lim_{\lambda,\alpha \rightarrow 0}\eta^2 \ves{\delta}_L^T \J R^T M^{-2} R_{\mathrm{ss}} \Delta \ve{v}_{\mathrm{ss}} \\
    &= \lim_{\lambda \rightarrow 0}\eta^2 \ves{\delta}_L^T V_{Q} \tilde{J} \begin{bmatrix}
\tilde{J}_1^{-1} \\
0
\end{bmatrix} V_{Q}^T \ves{\delta}_L \\
&= \lim_{\lambda \rightarrow 0}\eta^2 \ves{\delta}_L^T \ves{\delta}_L >0,
\end{align}
where we used that $\lim_{\lambda \rightarrow 0} R^T M^{-2} R_{\mathrm{ss}} = I$ and took $\eta \propto 1/\lambda$ to have a limit different from zero, as $\ves{\delta}_L$ scales with $\lambda$.

\end{proof}

\subsection{An intuitive interpretation of Condition \ref{con:Q_GN}} \label{app:intuitive_condition2}
In the previous sections, we showed that Condition \ref{con:Q_GN} is needed to enable precise CA through GN or MN optimization. Here, we discuss a more intuitive interpretation of why Condition \ref{con:Q_GN} is needed. 

DFC has three main components that influence the feedback signals given to each neuron. First, we have the network dynamics \eqref{eq:network_dynamics} (here repeated for convenience).
\begin{align}
    \tau_v \ddt \ve{v}_i(t) &= -\ve{v}_i(t) + W_i\phi\big(\ve{v}_{i-1}(t)\big) + Q_i\ve{u}(t) \quad 1\leq i \leq L.
\end{align}
The first two terms $-\ve{v}_i(t) + W_i\phi\big(\ve{v}_{i-1}(t)\big)$ pull the neural activation $\ve{v}_i$ close to its feedforward compartment $\vff_i$, while the third term $Q_i\ve{u}(t)$ provides an extra push such that the network output is driven to its target. This interplay between pulling and pushing is important, as it makes sure that $\ve{v}_i$ and $\vff_i$ remain as close as possible together, while driving the output towards its target.

Second, we have the feedback weights $Q$. As $Q$ is of dimensions $\sum_{i=1}^L n_i \times n_L$, with $n_i$ the layer size, it has always much more rows than columns. Hence, the few but long columns of $Q$ can be seen as the `modes' that the controller $\ve{u}$ can use to change network activations $\ve{v}$. Due to the low-dimensionality of $\ve{u}$ compared to $\ve{v}$, $Q\ve{u}$ cannot change the activations $\ve{v}$ in arbitrary directions, but is constrained by the column space of $Q$, i.e., the `modes' of $Q$.

Third, we have the feedback controller, that through its own dynamics, combined with the network dynamics \eqref{eq:network_dynamics} and $Q$, selects an `optimal' configuration for $\ve{u}$, i.e., $\ve{u}_{\sss} = (JQ)^{-1}\ves{\delta}_L$, that selects and weights the different modes (columns) of $Q$ to push the output to its target in the `most efficient manner'. 

To make `most efficient manner' more concrete, we need to define the \textit{nullspace} of the network. As the dimension of $\ve{v}$ is much bigger than the output dimension, there exist changes in activation $\Delta \ve{v}$ that do not result in a change of output $\Delta \ve{r}_L$, because they lie in the nullspace of the network. In a linearized network, this is reflected by the network Jacobian $J$, as we have that $\Delta \ve{r}_L = J\Delta \ve{v}$. As J is of dimensions $n_L \times \sum_{i=1}^L n_i$, it has many more columns than rows and thus a non-zero nullspace. When $\Delta \ve{v}$ lies inside the nullspace of $J$, it will result in $\Delta \ve{r}_L=0$. Now, if the column space of $Q$ overlaps partially with the nullspace of $J$, one could make $\ve{u}$, and hence $\Delta \ve{v} = Q\ve{u}$, arbitrarily big, while still making sure that the output is pushed exactly to its target, when the `arbitrarily big' parts of $\Delta \ve{v}$ lie inside the nullspace of $J$ and hence do not influence $\ve{r}_L$. Importantly, the feedback controller combined with the network dynamics ensure that this does not happen, as $\ve{u}_{\sss} = (JQ)^{-1}\ves{\delta}_L$ selects the smallest possible $\ve{u}_{\sss}$ to push the output to its target.

However, when the column space of $Q$ partially overlaps with the nullspace of $J$, there will inevitably be parts of $\Delta \ve{v}$ that lie inside the nullspace of $J$, even though the controller selects the smallest possible $\ve{u}_{\sss}$. This can easily be seen as in general, each column of $Q$ overlaps partially with the nullspace of $J$, so $\Delta \ve{v} = Q\ve{u}$, which is a linear combination of the columns of $Q$, will also overlap partially with the nullspace of $J$. This is where Condition \ref{con:Q_GN} comes into play. 

Condition \ref{con:Q_GN} states that the column space of $Q$ is equal to the row space of $J$. When this condition is fulfilled, the column space of $Q$ does not overlap with the nullspace of $J$. Hence, all the feedback $Q\ve{u}$ produces a change in the network output and no unnecessary changes in activations $\Delta \ve{v}$ take place. With Condition \ref{con:Q_GN} satisfied, the occurring changes in activations $\Delta \ve{v}$ are MN, as they lie fully in the row-space of $J$ and push the output exactly to its target. This interpretation lies at the basis of Theorem \ref{theorem:MN_DFC} and is also an important part of Theorem \ref{theorem:GN_TPDI}.

\subsection{Gauss-Newton optimization with a mini-batch size of 1} \label{app:GN_minibatch1}
In this section, we review the GN optimization method and discuss the unique properties that arise when a mini-batch size of 1 is taken. 
\paragraph{Review of GN optimization.}
Gauss-Newton (GN) optimization is an iterative optimization method used for non-linear regression problems with an $L^2$ output loss, defined as follows:
\begin{align}\label{eq:least_squares_regression}
    \argmin_{\ve{\theta}} \quad \mathcal{L} &= \frac{1}{2}\sum_{b=1}^B \|\ves{\delta}^{(b)}\|_2^2\\
    \ves{\delta}^{(b)} &\triangleq \ve{y}^{(b)}-\ve{r}^{(b)},
\end{align}
with B the minibatch size, $\ves{\delta}$ the regression error, $\ve{r}$ the model output, and $\ve{y}$ the corresponding regression target. 
There exist two main derivations of the GN optimization method: (i) through an approximation of the Newton-Raphson method and (ii) through linearizing the parametric model that is being optimized. We focus on the latter, as this derivation is closely connected to DFC. 

GN is an iterative optimization method and hence aims to find a parameter update $\Delta \ves{\theta}$ that leads to a lower regression loss: 
\begin{align}
    \ves{\theta}^{(m+1)} \leftarrow \ves{\theta}^{(m)} + \Delta \ves{\theta},
\end{align}
with $m$ indicating the iteration number. The end goal of the optimization scheme is to find a local minimum of $\mathcal{L}$, hence, finding $\ves{\theta}^*$ for which holds
\begin{align}
    0 & \overset{!}{=} \frac{\partial \mathcal{L}}{\partial \vt} \Big\rvert_{\vt = \vt^*}^T = J_{\vt}^T\vd \label{eq:4GNLossDerivative}\\
    J_{\vt} &\triangleq \frac{\partial \ve{r}}{\partial \vt}\Big\rvert_{\vt = \vt^*},
\end{align}
with $\vd$ and $\vr$ the concatenation of all $\vd^{(b)}$ and $\vr^{(b)}$, respectively. To obtain a closed-form expression for $\vt^*$ that fulfills eq. \eqref{eq:4GNLossDerivative} approximately, one can make a first-order Taylor approximation of the parameterize model around the current parameter setting $\vt^{(m)}$:
\begin{align}
    \ve{r}^{(m+1)} &\approx \ve{r}^{(m)} + J_{\vt}\Delta \vt \label{eq_app:gn_linearized_model}\\
    \ves{\delta}^{(m+1)} &= \ve{y} - \vr^{(m+1)} \approx \vd^{(m)} - J_{\vt}\Delta \vt.
\end{align}
Filling this approximation into eq. \eqref{eq:4GNLossDerivative}, we get:
\begin{align}
    &\frac{\partial \mathcal{L}}{\partial \vt} \approx J_{\vt}^T\big(\vd^{(m)} - J_{\vt}\Delta \vt\big) = 0\\
    \Leftrightarrow \quad &J_{\vt}^T J_{\vt} \Delta \vt = J_{\vt}^T \vd^{(m)}. \label{eq:4GNleastsquares1}
\end{align}
In an under-parameterized setting, i.e., the dimension of $\vd$ is bigger than the dimension of $\vt$, $J_{\vt}^T J_{\vt}$ can be interpreted as an approximation of the loss Hessian matrix used in the Newton-Raphson method and is known as the \textit{Gauss-Newton curvature matrix}. In the under-parameterized setting, $J_{\vt}^T J_{\vt}$ is invertible, leading to the update
\begin{align}
    \Delta \vt &= \big(J_{\vt}^T J_{\vt}\big)^{-1} J_{\vt}^T \vd^{(m)}\\
    \Delta \vt &= J_{\vt}^{\dagger}\vd^{(m)} \label{eq_app:gn_pinv_solution},
\end{align}
with $J_{\vt}^{\dagger}$ the Moore-Penrose pseudoinverse of $J_{\vt}$. In the under-parameterized setting, eq. \eqref{eq:4GNleastsquares1} can be interpreted as a linear least-squares regression for finding a parameter update $\Delta \vt$ that results in a least-squares solution on the linearized parametric model \eqref{eq_app:gn_linearized_model}. Until now we considered the under-parameterized case. However, DFC is related to GN optimization with a mini-batch size of 1, which concerns the over-parameterized case.

\paragraph{GN optimization with a mini-batch size of 1.}
When the minibatch size $B=1$, the dimension of $\vd$ is smaller than the dimension of $\vt$ in neural networks, hence we need to consider the over-parameterized case of GN \citep{cai2019gram, zhang2019fast}. Now, the matrix $J_{\vt}^T J_{\vt}$ is not of full rank and hence an infinite amount of solutions exist for eq. \eqref{eq:4GNleastsquares1}. To enforce a unique solution for the parameter update $\Delta \vt$, a common approach is to take the MN solution, i.e., the smallest possible solution $\Delta \vt$ that satisfies \eqref{eq:4GNleastsquares1}. Using the MN properties of the Moore-Penrose pseudoinverse, this results in:
\begin{align}\label{eq_app:gn_mn_solution}
    \Delta \vt &= J_{\vt}^{\dagger}\vd^{(m)}.
\end{align}
Although the solution has the same form as before \eqref{eq_app:gn_pinv_solution}, its interpretation is fundamentally different, as we did not use a linear least-squares solution, but a MN solution instead. In the under-parameterized case considered before, the parameter update $\Delta \vt$ will not be able to drive $\vd^{(m+1)}$ to zero (in the linearized model). In the over-parameterized case however, there exist many solutions for $\Delta \vt$ that drive $\vd^{(m+1)}$ exactly to zero, and GN picks the MN solution \eqref{eq_app:gn_mn_solution}. 

With this interpretation, we see clearly the connection to DFC. In DFC, the feedback controller drives the network activations (i.e., finds an `activation update' solution) such that the output of the network reaches its target $\ve{r}_L^*$ (i.e., the error $\vd^{(m+1)}$ is driven to zero). When Condition \ref{con:Q_GN} holds, this activation update solution is the MN solution. Furthermore, when Condition \ref{con:magnitude_r} holds, this MN activation update results also in a MN parameter update $\Delta W_{i,\sss}$ \eqref{eq:update_W_ss}. 



\paragraph{DFC updates with larger batch sizes.} 
For computational efficiency, we average the DFC updates over a minibatch size bigger than 1. However, this averaging over a minibatch is distinct from doing Gauss-Newton optimization on a minibatch. The GN iteration with minibatch size $B$ is given by
\begin{align}\label{eq:gn_update_batch}
    \Delta \bar{W}^{GN} &= \lim_{\gamma \xrightarrow{} 0}\Big[\sum_{b=1}^B J_{\bar{W}}^{(b)T}J_{\bar{W}}^{(b)} + \gamma I \Big]^{-1} \Big[\sum_{b=1}^B J_{\bar{W}}^{(b)T}\ves{\delta}_L^{(b)}\Big],
\end{align}
with $J_{\bar{W}}^{(b)}$ the Jacobian of the output w.r.t. the concatenated weights $\bar{W}$ for batch sample $b$, and $\gamma$ a damping parameter. Note that we accumulate the GN curvature $J_{\bar{W}}^{(b)T}J_{\bar{W}}^{(b)}$ over all minibatch samples before taking the inverse. 

When the assumptions of Theorem \ref{theorem:GN_TPDI} hold, the DFC updates with a minibatch size $B$ can be written by
\begin{align}\label{eq:dfc_update_batch}
    \Delta \bar{W} &= \lim_{\gamma \xrightarrow{} 0} \sum_{b=1}^B \Big[\big( J_{\bar{W}}^{(b)T}J_{\bar{W}}^{(b)} + \gamma I \big)^{-1} J_{\bar{W}}^{(b)T}\ves{\delta}_L^{(b)}\Big] \\
    &= \sum_{b=1}^B \Big[J_{\bar{W}}^{(b)\dagger}\ves{\delta}_L^{(b)}\Big].
\end{align}
For $B=1$, the DFC update \eqref{eq:dfc_update_batch} overlaps with the GN update \eqref{eq:gn_update_batch}. However, for $B>1$ these are not equal anymore, due to the order of summation and inversion being reversed.

\subsection{Effects of the nonlinearity $\phi$ in the weight update}\label{app:linear_undamped_lr}

In this section, we study in detail the experimental consequences of using the nonlinear learning rule \eqref{section:plasticity_rules} instead of the linear learning rule \eqref{eq:update_W_ss}. First, we investigate the case where the assumptions in Theorem \ref{theorem:MN_DFC} are perfectly satisfied and then we investigate the more realistic case where the assumptions are not perfectly satisfied.

When considering the ideal case where Condition \ref{con:Q_GN} is perfectly satisfied and in the limit of $\lambda$ and $\alpha$ to zero, MN updates \eqref{eq_app:MN_updates} are obtained if the linear learning rule is used, and the following updates are obtained when the nonlinear learning rule is used:
\begin{align}
    \Delta \bar{W} = RDJ^T(JJ^T)^{-1}\boldsymbol{\delta}_L,
\end{align}
with $D$ a diagonal matrix with $\partial \phi(v_j) / \partial (v_j)$ for each neuron in the network on its diagonal and $R$ as defined in eq. \eqref{eq_app:MN_updates}. For this ideal case, we performed experiments on MNIST comparing the linear to the nonlinear learning rules, and obtained a test error of $2.18^{\pm 0.14}\%$ and $2.11^{\pm 0.10}\%$, respectively. These experiments demonstrate that for this ideal case the nonlinear learning rule \eqref{section:plasticity_rules} has no significant benefit over the linear learning rule \eqref{eq:update_W_ss}.

On the other hand, to investigate the influence of the nonlinear learning rule for the practical case where Condition \ref{con:Q_GN} is not perfectly satisfied, we performed a new hyperparameter search on MNIST for DFC-SSA with the linear learning rule \eqref{eq:update_W_ss}. This resulted in a test error of $5.28^{\pm 0.14}\%$. Comparing this result with the corresponding test performance in Table \ref{tab:test_results} ($2.29^{\pm 0.097}\%$ test error), we conclude that DFC benefits from the introduction of the chosen nonlinearities in the learning rule \eqref{section:plasticity_rules}, as the results improve significantly. Hence, we can infer that this increase in performance is due to the way the introduction of the nonlinearity in the learning rule compensates for when the feedback weights do not perfectly satisfy Condition \ref{con:Q_GN}.

Lastly, to investigate where this performance gap originates from, we performed another toy experiment similar to Fig. \ref{fig:toy_experiment_tanh} (see Fig. \ref{fig:toy_experiment_lienar_vs_nonlinear}) for the linear versus nonlinear learning rule in DFC. The new results show that the updates resulting from the nonlinear learning rule are much better aligned with the MN and GN updates, compared to the linear learning rule, explaining its better performance. Overall, we conclude that introducing the nonlinearity in the learning rule, which prevents saturated neurons from updating their weights, is a useful heuristic to improve the alignment of DFC with the MN and GN updates and consequently improve its performance, when Condition \ref{con:Q_GN} is not perfectly satisfied.

\begin{figure}[h!]
    \centering
    \includegraphics[width=0.97\linewidth]{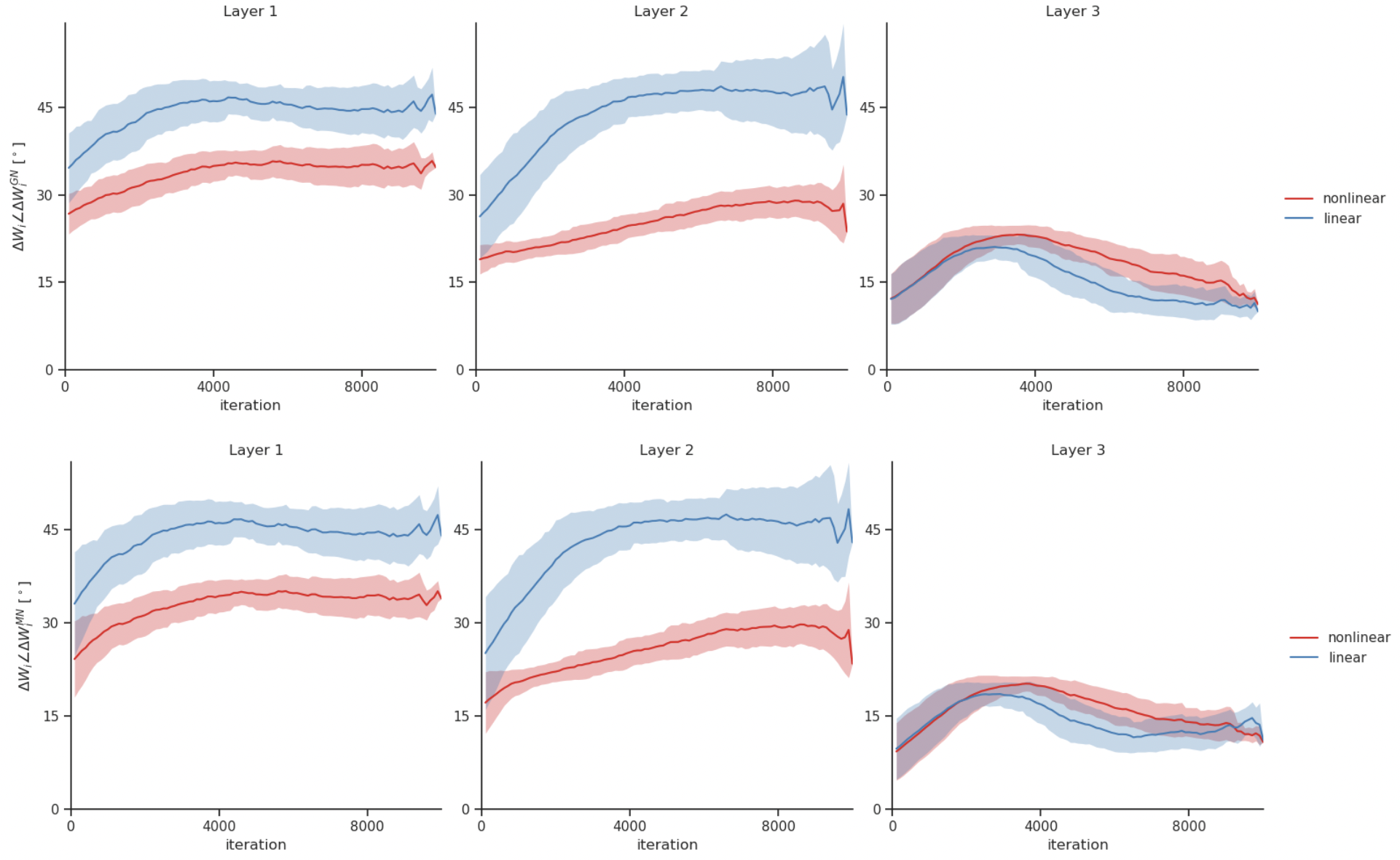}
  \vspace{-0.2cm}
  \caption[Results]{Layer-wise comparison of the angle alignment between the DFC updates and the GN and MN updates, for the linear and nonlinear learning rule variants, when performing nonlinear student-teacher regression task.} \label{fig:toy_experiment_lienar_vs_nonlinear}
\end{figure}

\subsection{Relation between continuous DFC weight updates and steady-state DFC weight updates} \label{app:relation_ss_cont}

All developed learning theory in section \ref{sec:learning_TPDI} considers an update $\Delta W_i$ at the steady-state of the network \eqref{eq:network_dynamics} and controller \eqref{eq:controller_dynamics} dynamics instead of a continuous update as defined in \eqref{eq:W_dynamics}. Fig. \ref{fig:toy_experiment_tanh}F shows that the accumulated continuous updates \eqref{eq:W_dynamics} of DFC align well with the analytical steady-state updates. Here, we indicate why this steady-state update is a good approximation of the accumulated continuous updates \eqref{eq:W_dynamics}. We consider two main reasons: (i) the network and controller dynamics settle quickly to their steady-state and (ii) when the dynamics are not settled yet, they oscillate around the steady-state, thereby causing oscillations to cancel each other out approximately. 

Addressing the first reason, consider an input that is presented to the network from time $T_1$ until $T_2$ and that the network and controller dynamics converge at $T_{ss}<T_2$. The change in weight prescribed by \eqref{eq:W_dynamics} is then equal to 
\begin{align}\label{eq_app:integral_W_dynamics}
    \int_{T_1}^{T_2}\dd W_i = \int_{T_1}^{T_{ss}}\dd W_i + \frac{T_2 - T_{ss}}{\tau_W} \big(\phi(\ve{v}_{i,\mathrm{ss}}) - \phi(\ve{v}_{i,\mathrm{ss}}^{\mathrm{ff}})\big)\ve{r}_{i-1,\mathrm{ss}}^T,
\end{align}
where we assumed a separation of timescales, i.e., $W_i$ is considered constant over the integration interval. If the dynamics settle quickly, i.e., $T_{ss}-T_1 \ll T_2 - T_{ss}$, the second term in the right-hand-side (RHS) dominates and we have that $\int_{T_1}^{T_2}\dd W_i \approx \eta \big(\phi(\ve{v}_{i,\mathrm{ss}}) - \phi(\ve{v}_{i,\mathrm{ss}}^{\mathrm{ff}})\big)\ve{r}_{i-1,\mathrm{ss}}^T$ with $\eta = (T_2 - T_{ss})/\tau_W$. 

Addressing the second reason, note that Fig. \ref{fig:DFC_schematics}D shows qualitatively that the dynamics oscillate briefly around the steady-state before settling. The first integral term in the RHS of eq. \eqref{eq_app:integral_W_dynamics} hence integrates over oscillating variables around the steady-state value. These oscillations will partially cancel each other out in the integration, causing $\int_{T_1}^{T_{ss}}\dd W_i$ to be approximately equal to $\eta \big(\phi(\ve{v}_{i,\mathrm{ss}}) - \phi(\ve{v}_{i,\mathrm{ss}}^{\mathrm{ff}})\big)\ve{r}_{i-1,\mathrm{ss}}^T$ for some $\eta$.

\subsection{DFC is compatible with various controller types}\label{app:controller_type}

Throughout the main manuscript, we focused on a proportional-integral (PI) controller. However, the DFC framework is compatible with various other controller types. In the following, we show that the results on learning theory (Section \ref{sec:learning_TPDI} can be generalized to pure integral control, pure proportional control or any combination thereof with derivative control added. Note that for each new controller type, a new stability analysis is needed and whether the feedback learning rule is still compatible with the controller also needs to be checked, which we leave to future work. 

\subsubsection{Pure integral control}
For pure integral control, the steady-state solutions of Lemma \ref{lemma:steady_state_solution_system} still apply, with $\tilde{\alpha}=\alpha$. Hence, all learning theory results of Section \ref{sec:learning_TPDI} directly apply to this case. Furthermore, Proposition \ref{prop:local_stability} and Theorem \ref{theorem:fb_learning_simplified} are already designed for pure integral control.

\subsubsection{Pure proportional control}

By making a first-order Taylor approximation of the network dynamics with only proportional control (putting $K_I=0$ in eq. \eqref{eq:controller_dynamics}), we obtain the following steady-state solution:
\begin{align}
    \mathbf{v}_{\mathrm{ss}} = \mathbf{v}_{\mathrm{ss}}^{\mathrm{ff}} + (QJ + \frac{1}{k_p}  I)^{-1}Q\boldsymbol{\delta}_L + \mathcal{O}(\lambda^2),
\end{align}
with $k_p$ the proportional gain, i.e., $\mathbf{u} = k_p \mathbf{e}$. When comparing with eq. \eqref{eq:l1_dynamical_inversion}, we see that the steady-state solution only with proportional control has a similar structure to the one with PI control, with $\tilde{\alpha}$ replaced by $\frac{1}{k_p}$. Note, however, that the damped inverse of $QJ$ is taken instead of $JQ$. By using a similar proof technique as for Lemma \ref{lemma:penrose_conditions_Q}, we can show that $\lim_{k_p\rightarrow \infty} (QJ + \frac{1}{k_p}  I)^{-1}Q = J^{\dagger}$ iff Condition \ref{con:Q_GN} holds.\footnote{We leave the proof as an exercise for the interested reader. The proof follows the same approach as Lemma \ref{lemma:penrose_conditions_Q} and uses l'Hôpital's rule for taking the correct limit of $k\rightarrow \infty$ .} Consequently, Theorems \ref{theorem:GN_TPDI} and \ref{theorem:MN_DFC} and Proposition \ref{prop:descent_direction} hold also for proportional control, if the limit of $\alpha$ to zero is replaced by the limit of $k_p$ to infinity. Furthermore, the main intuitions of Theorem \ref{theorem:fb_learning_simplified} for training the feedback can be applied to proportional control, given that one finds a way to keep the network stable during the initial feedback weights training phase.

Despite these theoretical similarities between proportional and PI control in DFC, there are some significant practical differences. First, for finite $k_p$ in proportional control, there is always a residual error that remains and hence the output target will never be exactly reached. Second, if noise is present in the network, it gets amplified by the same factor $k_p$. Hence, using a high $k_p$ in proportional control makes the controlled network sensitive to noise. Adding an integral control component can alleviate these issues by replacing the need for a large gain, $k_p$, with the need for a good integrator circuit (i.e., low $\alpha$) \citep{franklin2015feedback}, for which a rich neuroscience literature exists \citep{seung1996brain, koulakov2002model, goldman2003robust, goldman2010neural, lim2013balanced}. This way, we can use a smaller gain, $k_p$, without increasing the residual error and consequently make the network less sensitive to noise. This is also interesting from a biological point of view since biological networks are considered to be substantially noisy.

\subsubsection{Adding derivative control}
Proportional, integral or proportional-integral control can be combined with derivative control. As the derivative term disappears at the steady state, the steady-state solutions of Lemma \ref{lemma:steady_state_solution_system} remain unaltered and the learning theory results can be directly applied. However, note that the derivative control term can significantly impact the stability and feedback learning of the network.

\section{Proofs and extra information for Section \ref{sec:stability_TPDI}: Stability of DFC} \label{app:stability}

\subsection{Stability analysis with instantaneous system dynamics} \label{app:stability_inst_system}

In this section, we first derive eq. \eqref{eq:linearization_u}, which corresponds to the dynamics of the controller obtained when assuming a separation of timescales between the controller and the network ($\tau_u \gg \tau_v$), and only having integrative control ($k_p=0$).

Let us recall that $\ve{v}_{\mathrm{ss}}$ and $\ve{v}^{-}$ are the steady-state solutions of the dynamical system \eqref{eq:network_dynamics} with and without control, respectively. Now, by linearizing the network dynamics \eqref{eq:network_dynamics} around the feedforward steady-state, $\ve{v}^{-}$, we can write
    \begin{align}\label{eq:r_L_approx}
        \ve{r}_{L} = \ve{r}_{L}^{-} + JQ\ve{u} + \mathcal{O}(\lambda^2),
    \end{align}
    with $J \triangleq \left.\left[\frac{\partial \ve{r}^-_L}{\partial \ve{v}_1},...,\frac{\partial \ve{r}^-_L}{\partial \ve{v}_L}\right]\right\rvert_{\ve{v}=\ve{v}^-}$ the network Jacobian evaluated at the steady state, and where we dropped the time dependence $(t)$ for conciseness.
    
Taking into account the results of equations \eqref{eq:output_target} and \eqref{eq:r_L_approx}, the control error can then be rewritten as
\begin{align}
    \ve{e}= \ve{r}_{L}^{*} - \ve{r}_{L} = (\ve{r}_{L}^{*} - \ve{r}_{L}^{-}) -  (\ve{r}_{L} - \ve{r}_{L}^{-})= \delta_{L} - JQ\ve{u} + \mathcal{O}(\lambda^2).
\end{align}
Consequently, eq. \eqref{eq:linearization_u} follows:
\begin{align}\label{eq:J_u}
    \tau_u\dot{\ve{u}} = \ve{e} - \alpha\ve{u} = \delta_{L} - JQ\ve{u} - \alpha\ve{u} + \mathcal{O}(\lambda^2) = \delta_{L} - (JQ + \alpha I)\ve{u} + \mathcal{O}(\lambda^2),
\end{align}
where we changed the notation $\ddt \ve{u}$ to $\dot{\ve{u}}$ for conciseness. Now, we continue by proving Proposition \ref{prop:local_stability}, restated below for convenience.

\begin{proposition}[Proposition \ref{prop:local_stability} in main manuscript]\label{prop:SM_local_stability}
    Assuming $\tau_u \gg \tau_v$ and $k_p=0$, the network and controller dynamics are locally asymptotically stable around its equilibrium iff Condition \ref{con:local_stability} holds.
\end{proposition}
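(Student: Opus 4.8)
The plan is to use the assumed timescale separation to collapse the coupled network--controller system into a single linear system governing $\ve{u}$, and then read off stability from Lyapunov's Indirect Method. First I would linearize the full dynamics \eqref{eq:network_dynamics}--\eqref{eq:controller_dynamics} (with $k_p=0$, so that $\ve{u}=\ve{u}^{\mathrm{int}}$) around the controlled equilibrium $(\ve{v}_{\mathrm{ss}},\ve{u}_{\mathrm{ss}})$, writing $\tilde{\ve{v}}=\ve{v}-\ve{v}_{\mathrm{ss}}$ and $\tilde{\ve{u}}=\ve{u}-\ve{u}_{\mathrm{ss}}$. The linearized network equation becomes $\tau_v\dot{\tilde{\ve{v}}}=(-I+D)\tilde{\ve{v}}+Q\tilde{\ve{u}}$, where $D$ is the strictly block-subdiagonal matrix with blocks $W_i\,\mathrm{diag}\big(\phi'(\ve{v}_{i-1,\mathrm{ss}})\big)$, while the controller equation linearizes to $\tau_u\dot{\tilde{\ve{u}}}=-J_{\mathrm{ss}}\tilde{\ve{v}}-\alpha\tilde{\ve{u}}$, using $\ve{e}=\ve{r}_L^*-\ve{r}_L$ together with $\ve{r}_L=\ve{r}_{L,\mathrm{ss}}+\tfrac{\partial\phi(\ve{v}_L)}{\partial\ve{v}_L}\tilde{\ve{v}}_L$.

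The key algebraic step is eliminating the fast variable. Since $\tau_u\gg\tau_v$, the network relaxes essentially instantaneously, so I would impose its quasi-steady state $\tilde{\ve{v}}=(I-D)^{-1}Q\tilde{\ve{u}}$ (the matrix $-I+D$ is invertible because $D$ is nilpotent). The $(i,k)$ block of $(I-D)^{-1}$ is exactly the feedforward Jacobian $J_{i,k}^{\mathrm{ss}}=\partial\ve{v}_i/\partial\ve{v}_k$ evaluated at $\ve{v}_{\mathrm{ss}}$, so substituting into the output linearization reconstructs precisely the manifold Jacobian of the notation section: $\tfrac{\partial\phi(\ve{v}_L)}{\partial\ve{v}_L}\tilde{\ve{v}}_L=J_{\mathrm{ss}}Q\tilde{\ve{u}}$ with $J_{\mathrm{ss}}=[J_1^{\mathrm{ss}}\ \dots\ J_L^{\mathrm{ss}}]$. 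This reduces the controller dynamics to the single linear system $\tau_u\dot{\tilde{\ve{u}}}=-(J_{\mathrm{ss}}Q+\alpha I)\tilde{\ve{u}}$, which is the homogeneous part of \eqref{eq:J_u} with $J$ replaced by $J_{\mathrm{ss}}$ (the replacement being legitimate because local stability must be assessed at the equilibrium $\ve{v}_{\mathrm{ss}}$, not at $\ve{v}^-$).

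With the reduced system matrix $A=-\tfrac{1}{\tau_u}(J_{\mathrm{ss}}Q+\alpha I)$ in hand, the conclusion is immediate: its eigenvalues are $-\tfrac{1}{\tau_u}(\mu+\alpha)$ for $\mu\in\mathrm{spec}(J_{\mathrm{ss}}Q)$, so $\mathrm{Re}\big(-\tfrac{1}{\tau_u}(\mu+\alpha)\big)<0$ for every $\mu$ iff $\mathrm{Re}(\mu)>-\alpha$ for every $\mu$, which is exactly Condition \ref{con:local_stability}. Because the reduced dynamics are linear, the equilibrium is asymptotically stable iff all these eigenvalues have strictly negative real part, delivering both directions of the ``iff''; the boundary case $\mathrm{Re}(\mu)=-\alpha$ (a purely imaginary or zero eigenvalue of $A$) gives a non-asymptotically-stable linear system and is excluded precisely by the strict inequality in Condition \ref{con:local_stability}. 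Invoking Lyapunov's Indirect Method then transfers the statement to the original nonlinear system. I would also observe that the fast network block contributes only eigenvalues clustered near $-1/\tau_v<0$ (the spectrum of $-I+D$ is $\{-1\}$), so it never endangers stability and the criterion rests entirely on the slow $J_{\mathrm{ss}}Q$ block.

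The main obstacle is making the timescale-separation reduction rigorous rather than heuristic: justifying that the spectrum of the full block linearization splits cleanly into a fast part (always stable, of order $\mathcal{O}(1/\tau_v)$) governed by $-I+D$ and a slow part (of order $\mathcal{O}(1/\tau_u)$) governed by $-(J_{\mathrm{ss}}Q+\alpha I)$, so that stability in the regime $\tau_u/\tau_v\to\infty$ is decided solely by $J_{\mathrm{ss}}Q$; this is a singular-perturbation argument. A closely related subtle point is reconciling the two meanings of $J$ — the trivial full-state-space derivative $\partial\ve{r}_L/\partial\ve{v}$, which is nonzero only in its last block, versus the feedforward-manifold Jacobian $J_{\mathrm{ss}}$ of the notation section — which is exactly what the identity $(I-D)^{-1}=[J_{i,k}^{\mathrm{ss}}]$ resolves. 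Everything else is routine linear algebra.
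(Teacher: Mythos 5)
Your proposal is correct and follows essentially the same route as the paper's proof: use the timescale separation to reduce the system to the controller dynamics with Jacobian $-(J_{\mathrm{ss}}Q+\alpha I)$ at the controlled equilibrium, then apply Lyapunov's Indirect Method to read off Condition \ref{con:local_stability}. You are in fact more explicit than the paper about how the quasi-steady-state elimination reconstructs $J_{\mathrm{ss}}Q$ via the nilpotent block structure of $(I-D)^{-1}$ and about the fast block's spectrum, but these are elaborations of the same argument rather than a different one.
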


\begin{proof}
    Assuming instantaneous system dynamics ($\tau_u \gg \tau_v$), then the stability of the system is entirely up to the controller dynamics. To prove that the system's equilibrium is locally asymptotically stable, we need to guarantee that the Jacobian associated to the controller dynamics evaluated at its steady-state solution, $\ve{v}_{\mathrm{ss}}$, has only eigenvalues with a strictly negative real part \citep{doi:10.1080/00207179208934253}. This Jacobian can be obtained in a similar fashion to that of eq. \eqref{eq:linearization_u}, and is given by
    \begin{align}\label{eq:inst_dyn_stability}
        J_{u} = - (J_{\mathrm{ss}}Q+\alpha I), \,\, \text{ with } \,\, J_{\mathrm{ss}} \triangleq \left.\left[\frac{\partial \ve{r}^-_L}{\partial \ve{v}_1},...,\frac{\partial \ve{r}^-_L}{\partial \ve{v}_L}\right]\right\rvert_{\ve{v}=\ve{v}_{\mathrm{ss}}}.
    \end{align}
    
    To fulfill the local asymptotic stability condition, $J_{\mathrm{ss}}Q + \alpha I$ can only have eigenvalues with strictly positive real parts. 
    As adding $\alpha I$ to $J_{\mathrm{ss}}Q$ results in adding $\alpha$ to the eigenvalues of $J_{\mathrm{ss}}Q$, the local asymptotic stability condition requires that the real parts of the eigenvalues of $J_{\mathrm{ss}}Q$ are all greater than $-\alpha$, corresponding to Condition \ref{con:local_stability}.
\end{proof}

\subsection{Stability of the full system}

In this section, we derive a concise representation of the full dynamics of the network \eqref{eq:network_dynamics} and controller dynamics \eqref{eq:controller_dynamics} in the general case where the timescale of the neuronal dynamics, $\tau_v$, is not negligible and we have proportional control ($k_p>0$). Proposition \ref{prop:SM_full_system_local_stability} provides the abstract conditions that guarantee local asymptotic stability of the steady states of the full dynamical system.

\begin{proposition}\label{prop:SM_full_system_local_stability}
    The network and controller dynamics are locally asymptotically stable around its equilibrium iff the following matrix has strictly negative eigenvalues:
    
    \begin{align}\label{eq_app:a_pi}
        A_{PI} =
        \begin{bmatrix}
            -\frac{1}{\tau_v}(I-\hat{J}_{\mathrm{ss}}) & \frac{1}{\tau_v}(I-\hat{J}_{\mathrm{ss}})Q \\
            J_{\mathrm{ss}}\big((\frac{k_p}{\tau_v}-\frac{1}{\tilde{\tau}_u})I - \frac{k_p}{\tau_v}\hat{J}_{\mathrm{ss}}\big) & -\frac{k_p}{\tau_v}J_{\mathrm{ss}}(I-\hat{J}_{\mathrm{ss}})Q - \frac{\tilde{\alpha}}{\tilde{\tau}_u}I
        \end{bmatrix}
    \end{align}
    
    with $\tilde{\alpha} = \frac{\alpha}{1 + k_p \alpha}$, $\tilde{\tau}_u = \frac{\alpha}{1 + k_p \alpha}$, $J_{\mathrm{ss}} = \frac{\partial \ve{r}_L}{\partial \ve{v}}\big\rvert_{\ve{v}=\ve{v}_{\mathrm{ss}}}$ and $\hat{J}_{\mathrm{ss}}$ defined in equations \eqref{eq_app:J_hat} and \eqref{eq:rewrite_network_dyn_v1}.
    
    \begin{proof}
        Recall that the controller is given by \eqref{eq:controller_dynamics}
        \begin{align}
            \ve{u} = \ve{u}^{\text{int}} + k_p\ve{e},
        \end{align}
        where $\tau_u\dot{\ve{u}}^{\text{int}} = \ve{e} - \alpha\ve{u}^{\text{int}}$.
        Then, the controller dynamics can be written as
        \begin{align}\label{eq:controller_dyn}
        \begin{split}
            \dot{\ve{u}} &= \frac{1}{\tau_u}\big(\ve{e} - \alpha(\ve{u}-k_p\ve{e})\big) + k_p\dot{\ve{e}} \\
            \Leftrightarrow \tau_u\dot{\ve{u}} &= (1 + \alpha k_p)\ve{e} + k_p\tau_u\dot{\ve{e}} - \alpha\ve{u} \\
            \Leftrightarrow \tilde{\tau}_u\dot{\ve{u}} &= \ve{e} + k_p\tilde{\tau}_u\dot{\ve{e}} - \tilde{\alpha}\ve{u},
        \end{split}
        \end{align}
        with $\tilde{\tau}_u=\tau_u/(1+\alpha k_p)$ and $\tilde{\alpha}=\alpha/(1+\alpha k_p)$.
        
        Recall that the network dynamics are given by \eqref{eq:network_dynamics}
        \begin{align}
            \tau_v\dot{\ve{v}}_{i} = -\ve{v}_{i} + W_{i}\phi(\ve{v}_{i-1}) + Q_{i}\ve{u} = -\Delta\ve{v}_{i} + Q_{i}\ve{u},
        \end{align}
        with $\Delta\ve{v}_{i} = \ve{v}_{i}-W_{i}\phi(\ve{v}_{i-1})$. Which allows us to write
        \begin{align}
            \Delta\dot{\ve{v}}_{i} = \dot{\ve{v}}_{i} - W_{i}D(\ve{v}_{i-1})\dot{\ve{v}}_{i-1}, \,\, \text{ with } \,\, D(\ve{v}_{i-1}) \triangleq \frac{\partial\phi(\ve{v}_{i-1})}{\partial\ve{v}_{i-1}}\big\rvert_{\ve{v}_{i-1}=\ve{v}_{i-1}(t)}.
        \end{align}
        We can now obtain the network dynamics in terms of $\Delta\dot{\ve{v}}$ as
        \begin{align}
        \begin{split}
            \tau_v\Delta\dot{\ve{v}}_{i} &= -\Delta\ve{v}_{i} + Q_{i}\ve{u} - W_{i}D(\ve{v}_{i-1})(\Delta\ve{v}_{i-1} + Q_{i-1}\ve{u}) \\
            &= -\Delta\ve{v}_{i} + W_{i}D(\ve{v}_{i-1})\Delta\ve{v}_{i-1} + (Q_{i} - W_{i}\Delta\ve{v}_{i-1}Q_{i-1})\ve{u},
        \end{split}
        \end{align}
        which for the entire system is
        \begin{align}\label{eq:rewrite_network_dyn}
            \tau_v\Delta\dot{\ve{v}} = -(I-\hat{J}(\ve{v}))\Delta\ve{v} + (I-\hat{J}(\ve{v}))Q\ve{u},
        \end{align}
        with 
        \begin{align} \label{eq_app:J_hat}
        \hat{J}(\ve{v}) \triangleq
        \begin{bmatrix}
            0 & 0 & 0 & \ldots & 0 \\
            W_{2}D(\ve{v}_{1}) & 0 & 0 & \ldots & 0 \\
            0 & W_{3}D(\ve{v}_{2}) & 0 & \ldots & 0 \\
            \vdots & \ddots & \ddots & \ddots & \vdots \\
            0 & \ldots & 0 & W_{L}D(\ve{v}_{L-1}) & 0 
            \end{bmatrix}
        \end{align}
        
        Let us now proceed to linearize the network and controller dynamical systems by defining
        \begin{align}\label{eq:useful_deltas}
            \tilde{\Delta}\ve{v}=\Delta\ve{v}-\Delta\ve{v}_{\mathrm{ss}} \,\,\, \text{  and  } \,\,\, \tilde{\Delta}\ve{u}=\ve{u}-\ve{u}_{\mathrm{ss}},
        \end{align}
        with $\ve{u}_{\sss}$ and $\Delta \vv_{\sss}$ the steady states of the network and controller (c.f. Lemma \ref{lemma:steady_state_solution_system}).
        With a first order Taylor approximation, we can write $\ve{r}_{L} \approx \ve{r}_{L,\mathrm{ss}} + J_{\mathrm{ss}}\tilde{\Delta}\ve{v}$, where $J_{\mathrm{ss}}$ is as in eq. \eqref{eq:inst_dyn_stability}.
        
        The controller dynamics \eqref{eq:controller_dyn} can now be rewritten as
        \begin{align}\label{eq:rewrite_controller_dyn}
            \tilde{\tau}_u\tilde{\Delta}\dot{\ve{u}} = \ve{r}^{*}_{L} - \ve{r}_{L,\mathrm{ss}} - J_{\mathrm{ss}}\tilde{\Delta}\ve{v} - \tilde{\tau}_u k_{p}J_{\mathrm{ss}}\tilde{\Delta}\dot{\ve{v}} - \tilde{\alpha}\ve{u}_{\mathrm{ss}} - \tilde{\alpha}\tilde{\Delta}\ve{u}.
        \end{align}
        When the network and the controller are at equilibrium, eq. \eqref{eq:controller_dyn} yields
        \begin{align}
            0 = \ve{e}_{\mathrm{ss}} - \tilde{\alpha}\ve{u}_{\mathrm{ss}} = \ve{r}^{*}_{L} - \ve{r}_{L,\mathrm{ss}} - \tilde{\alpha}\ve{u}_{\mathrm{ss}},
        \end{align}
        and we can rewrite eq. \eqref{eq:rewrite_controller_dyn} as
        \begin{align}\label{eq:rewrite_controller_dyn_v1}
            \tilde{\tau}_u\tilde{\Delta}\dot{\ve{u}} = - J_{\mathrm{ss}}\tilde{\Delta}\ve{v} - \tilde{\tau}_u k_{p}J_{\mathrm{ss}}\tilde{\Delta}\dot{\ve{v}} - \tilde{\alpha}\tilde{\Delta}\ve{u}.
        \end{align}
        
        Once again, when the network and the controller are at equilibrium, incorporating the definitions in \eqref{eq:useful_deltas} into eq. \eqref{eq:rewrite_network_dyn}, it follows that
        \begin{align}\label{eq:rewrite_network_dyn_v1}
            \tau_v\tilde{\Delta}\dot{\ve{v}} = -(I-\hat{J}_{\mathrm{ss}})(\Delta\ve{v}_{\mathrm{ss}} + \tilde{\Delta}\ve{v}) + (I-\hat{J}_{\mathrm{ss}})Q(\ve{u}_{\mathrm{ss}} + \tilde{\Delta}\ve{u}),\,\,\,\hat{J}_{\mathrm{ss}}\triangleq\hat{J}(\ve{v}_{\mathrm{ss}}).
        \end{align}
        At steady-state, eq. \eqref{eq:rewrite_network_dyn} yields
        \begin{align}
            0 = -(I-\hat{J}_{\mathrm{ss}})\Delta\ve{v}_{\mathrm{ss}} + (I-\hat{J}_{\mathrm{ss}})Q\ve{u}_{\mathrm{ss}},
        \end{align}
        which allows us to rewrite eq. \eqref{eq:rewrite_network_dyn_v1} as
        \begin{align}\label{eq:network_dyn_final}
            \tau_v\tilde{\Delta}\dot{\ve{v}} = -(I-\hat{J}_{\mathrm{ss}})\tilde{\Delta}\ve{v} + (I-\hat{J}_{\mathrm{ss}})Q\tilde{\Delta}\ve{u}.
        \end{align}
        Using the results from eq. \eqref{eq:network_dyn_final}, we can write eq. \eqref{eq:rewrite_controller_dyn_v1} as
        \begin{align}\label{eq:controller_dyn_final}
            \tilde{\tau}_u\tilde{\Delta}\dot{\ve{u}} = - J_{\mathrm{ss}}\tilde{\Delta}\ve{v} - \frac{\tilde{\tau}_u}{\tau_v} k_{p}J_{\mathrm{ss}}\Big(-(I-\hat{J}_{\mathrm{ss}})\tilde{\Delta}\ve{v} + (I-\hat{J}_{\mathrm{ss}})Q\tilde{\Delta}\ve{u}\Big) - \tilde{\alpha}\tilde{\Delta}\ve{u}.
        \end{align}
        Finally, as $\tilde{\Delta}\dot{\ve{v}} = \Delta\dot{\ve{v}} = \dot{\ve{v}}$ and $\tilde{\Delta}\dot{\ve{u}}=\dot{\ve{u}}$ \eqref{eq:useful_deltas}, this allows us to to infer local stability results for the full system dynamics by looking into the dynamics of $\tilde{\Delta}\dot{\ve{v}}$ and $\tilde{\Delta}\dot{\ve{u}}$ around the steady state:
        \begin{align}
            \begin{bmatrix}
                \tilde{\Delta}\dot{\ve{v}} \\
                \tilde{\Delta}\dot{\ve{u}}
            \end{bmatrix}
            = 
            \begin{bmatrix}
                -\frac{1}{\tau_v}(I-\hat{J}_{\mathrm{ss}}) & \frac{1}{\tau_v}(I-\hat{J}_{\mathrm{ss}})Q \\
                J_{\mathrm{ss}}\big((\frac{k_p}{\tau_v}-\frac{1}{\tilde{\tau}_u})I - \frac{k_p}{\tau_v}\hat{J}_{\mathrm{ss}}\big) & -\frac{k_p}{\tau_v}J_{\mathrm{ss}}(I-\hat{J}_{\mathrm{ss}})Q - \frac{\tilde{\alpha}}{\tilde{\tau}_u}I
            \end{bmatrix}
            \begin{bmatrix}
                \tilde{\Delta}\ve{v} \\
                \tilde{\Delta}\ve{u}
            \end{bmatrix}
            \triangleq A_{PI}
            \begin{bmatrix}
                \tilde{\Delta}\ve{v} \\
                \tilde{\Delta}\ve{u}
            \end{bmatrix}
        \end{align}
        Now, to guarantee local asymptotic stability of the system's equilibrium, then the eigenvalues of $A_{PI}$ must have strictly negative real parts \citep{doi:10.1080/00207179208934253}.
    \end{proof}
    
\end{proposition}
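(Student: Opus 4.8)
The plan is to establish the proposition by the standard route for stability of a nonlinear ODE: linearize the coupled network--controller system about its equilibrium and invoke Lyapunov's Indirect Method \citep{doi:10.1080/00207179208934253}, exactly as in the simplified Proposition \ref{prop:SM_local_stability}. The entire content is to reduce the dynamics to a genuine first-order linear system whose state matrix is precisely $A_{PI}$; the Indirect Method then yields the stated equivalence between strictly-negative-real-part eigenvalues and local asymptotic stability.

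First I would choose convenient coordinates. Rather than the raw state $(\ve{v}, \ve{u}^{\text{int}})$, the network term $-\ve{v}_i + W_i\phi(\ve{v}_{i-1})$ and the $PI$ structure make $(\Delta\ve{v}, \ve{u})$ natural, with $\Delta\ve{v}_i \triangleq \ve{v}_i - W_i\phi(\ve{v}_{i-1})$. For the controller I differentiate $\ve{u} = \ve{u}^{\text{int}} + k_p\ve{e}$ and substitute $\ve{u}^{\text{int}} = \ve{u} - k_p\ve{e}$ into $\tau_u\dot{\ve{u}}^{\text{int}} = \ve{e} - \alpha\ve{u}^{\text{int}}$, eliminating the internal variable and giving $\tilde{\tau}_u\dot{\ve{u}} = \ve{e} + k_p\tilde{\tau}_u\dot{\ve{e}} - \tilde{\alpha}\ve{u}$ with $\tilde{\tau}_u = \tau_u/(1+\alpha k_p)$ and $\tilde{\alpha} = \alpha/(1+\alpha k_p)$. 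For the network I differentiate $\Delta\ve{v}_i$ through the chain rule using $D(\ve{v}_{i-1}) \triangleq \partial\phi(\ve{v}_{i-1})/\partial\ve{v}_{i-1}$, which collapses to the compact form $\tau_v\Delta\dot{\ve{v}} = -(I-\hat{J})\Delta\ve{v} + (I-\hat{J})Q\ve{u}$, where $\hat{J}$ is the strictly lower block-bidiagonal matrix whose nonzero blocks are the $W_iD(\ve{v}_{i-1})$.

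Next I linearize about the steady state $(\Delta\ve{v}_{\mathrm{ss}}, \ve{u}_{\mathrm{ss}})$, whose existence is guaranteed by Lemma \ref{lemma:steady_state_solution_system}. Writing $\tilde{\Delta}\ve{v} = \Delta\ve{v} - \Delta\ve{v}_{\mathrm{ss}}$ and $\tilde{\Delta}\ve{u} = \ve{u} - \ve{u}_{\mathrm{ss}}$, and using the first-order expansion $\ve{r}_L \approx \ve{r}_{L,\mathrm{ss}} + J_{\mathrm{ss}}\tilde{\Delta}\ve{v}$ so that $\ve{e} \approx \ve{e}_{\mathrm{ss}} - J_{\mathrm{ss}}\tilde{\Delta}\ve{v}$, I subtract the steady-state identities (both right-hand sides vanish at equilibrium) to obtain the linear network equation $\tau_v\tilde{\Delta}\dot{\ve{v}} = -(I-\hat{J}_{\mathrm{ss}})\tilde{\Delta}\ve{v} + (I-\hat{J}_{\mathrm{ss}})Q\tilde{\Delta}\ve{u}$ and the linear controller equation $\tilde{\tau}_u\tilde{\Delta}\dot{\ve{u}} = -J_{\mathrm{ss}}\tilde{\Delta}\ve{v} - \tilde{\tau}_u k_p J_{\mathrm{ss}}\tilde{\Delta}\dot{\ve{v}} - \tilde{\alpha}\tilde{\Delta}\ve{u}$. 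The controller equation still carries the derivative $\tilde{\Delta}\dot{\ve{v}}$, so the pivotal algebraic step is to substitute the linearized network equation for it, producing a true first-order system $(\tilde{\Delta}\dot{\ve{v}}, \tilde{\Delta}\dot{\ve{u}})^T = A_{PI}(\tilde{\Delta}\ve{v}, \tilde{\Delta}\ve{u})^T$; reading off the four blocks recovers \eqref{eq_app:a_pi} exactly. Lyapunov's Indirect Method then gives local asymptotic stability precisely when every eigenvalue of $A_{PI}$ has strictly negative real part.

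I expect the main obstacle to be twofold. The heavier part is bookkeeping: deriving the block structure of $\hat{J}_{\mathrm{ss}}$ from the recursion for $\Delta\ve{v}_i$ and cleanly eliminating the embedded $\tilde{\Delta}\dot{\ve{v}}$ term, which is what makes the proportional gain $k_p$ appear multiplicatively throughout the bottom row of $A_{PI}$. The subtler point is the ``iff'': Lyapunov's Indirect Method is only conclusive away from the marginal case of eigenvalues on the imaginary axis, so the equivalence should be read as holding up to that non-generic boundary, the same caveat implicit in Proposition \ref{prop:SM_local_stability}.
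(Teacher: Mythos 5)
Your proposal follows essentially the same route as the paper's proof: the same change of coordinates to $(\Delta\ve{v},\ve{u})$, the same elimination of $\ve{u}^{\mathrm{int}}$ yielding the $\tilde{\tau}_u,\tilde{\alpha}$ form, the same block-bidiagonal $\hat{J}$, and the same substitution of the linearized network equation to remove $\tilde{\Delta}\dot{\ve{v}}$ from the controller row before reading off $A_{PI}$. Your closing caveat about the marginal (imaginary-axis) case of Lyapunov's Indirect Method is a fair point that the paper leaves implicit, but it does not change the argument.
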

The current form of the system matrix $A_{PI}$ provides no straightforward intuition on finding interpretable conditions for the feedback weights $Q$ such that local stability is reached. One can apply Gershgoring's circle theorem to infer sufficient restrictions on $J$ and $Q$ to ensure local asymptotic stability \cite{bejarano2018stability}. However, the resulting conditions are too conservative and do not provide intuition in which types of feedback learning rules are needed to ensure stability.

\subsection{Toy experiments for relation of Condition \ref{con:local_stability} and full system dynamics}
To investigate whether Condition \ref{con:local_stability} is a good proxy for the local stability of the actual dynamics, we plotted the maximum real parts of the eigenvalues of $JQ + \alpha I$ (Condition \ref{con:local_stability}, see Fig. \ref{fig:stability_toy_exp}.a) and of $A_{PI}$ (the actual dynamics, see eq. \eqref{eq_app:a_pi} and Fig. \ref{fig:stability_toy_exp}.b). We used the same student-teacher regression setting and configuration as in the toy experiments of Fig. \ref{fig:toy_experiment_tanh}. 

Fig. \ref{fig:stability_toy_exp} shows that the maximum real part of the eigenvalues of $A_{PI}$ follow the same trend as the eigenvalues of $JQ + \alpha I$. Although they differ in exact value, both eigenvalue trajectories are slowly decreasing during training and are strictly negative, thereby indicating that Condition \ref{con:local_stability} is a good proxy for the local stability of the actual dynamics.

When we only consider leaky integral control ($k_p=0$, see Fig. \ref{fig:stability_toy_exp}.c), the dynamics become unstable during late training, highlighting that adding proportional control is crucial for the stability of the dynamics. Interestingly, training the feedback weights (blue curve) does not help in this case for making the system stable, on the contrary, it pushes the network to become unstable more quickly. These leaky integral control dynamics are equal to the simplified dynamics used in Condition \ref{con:local_stability} in the limit of $\tau_v/\tau_u \rightarrow 0$, which are stable (see Fig. \ref{fig:stability_toy_exp}.a). Hence, slower network dynamics (finite time constant $\tau_v$) cause the leaky integral control to become unstable, due to a communication delay between controller and network, causing unstable oscillations. For this toy experiment, we used $\tau_v/\tau_u = 0.2$.

\begin{figure}[h!]
    \centering
    \includegraphics[width=0.9\linewidth]{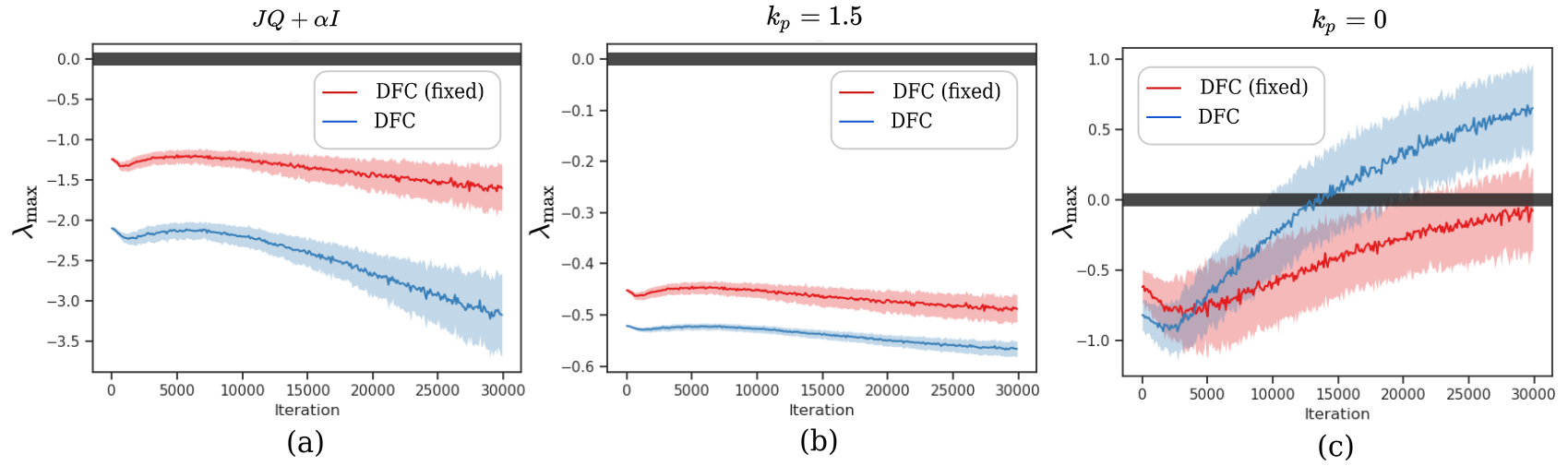}
    \caption{Visualization of the maximum real parts of the eigenvalues of the system matrices during the student teacher regression for DFC with the same configuration as in Fig. \ref{fig:toy_experiment_tanh}. (a) The maximum real part of the eigenvalues of $JQ+\alpha I$, hence representing Condition \ref{con:local_stability}. (b) The maximum real part of the eigenvalues of the system dynamics matrix $A_{PI}$ \eqref{eq_app:a_pi}, representing the actual local stability of DFC. (c) The maximum real part of the eigenvalues of the system dynamics matrix $A_{PI}$ \eqref{eq_app:a_pi} without proportional control ($k_p=0$), representing the local stability of DFC with only integral control. The black bar represents the local stability threshold.}
    \label{fig:stability_toy_exp}
\end{figure}

\section{Proofs and extra information for Section \ref{sec:fb_learning}: Learning the feedback weights} \label{app:fb_learning}
\subsection{Learning the feedback weights in a sleep phase}
In this section, we show that the plasticity rule for the apical synapses \eqref{eq:Q_dynamics} drives the feedback weights to fulfill Conditions \ref{con:Q_GN} and \ref{con:local_stability}. We first sketch an intuitive argument on why the feedback learning rule works. Next, we state the full Theorem and give its proof.
\subsubsection{Intuition behind the feedback learning rule}
Inspired by the Weight Mirroring Method \citep{akrout2019deep} we use white noise in the network to carry information from the network Jacobian $J$ into the output $\ve{r}_L$. To gain intuition, we first consider a normal feedforward neural network 
\begin{align}
    \ve{r}_i^- = \phi(\ve{v}^-_i) = \phi(W_i \ve{r}_{i-1}^-), \quad 1\leq i \leq L.
\end{align}
Now, we perturb each layer's pre-nonlinearity activation with white noise $\ves{\xi}_i$ and propagate the perturbations forward:
\begin{align}
    \tilde{\ve{v}}_i^- = W_i \phi(\tilde{\ve{v}}_{i-1}^-) + \sigma \ves{\xi}_i, \quad 1\leq i \leq L,
\end{align}
with $\tilde{\ve{r}}_0^- = \ve{r}_0^-$. For small $\sigma$, a first-order Taylor approximation of the perturbed output gives
\begin{align}
    \tilde{\ve{r}}_L^- = \ve{r}_L^- + \sigma J \ves{\xi} + \mathcal{O}(\sigma^2),
\end{align}
with $\ves{\xi}$ the concatenated vector of all $\ves{\xi}_i$. If we now take as output target $\ve{r}_L^* = \ve{r}_L^-$, the output error is equal to 
\begin{align}
    \ve{e} = \ve{r}_L^* -\tilde{\ve{r}}_L^- = -\sigma J \ves{\xi} + \mathcal{O}(\sigma^2).
\end{align}
We now define a simple learning rule $\Delta Q = -\sigma \ves{\xi} \ve{e}^T - \beta Q$, which is a simple anti-Hebbian rule with as presynaptic signal the output error $\ve{e}$ and as postsynaptic signal the noise inside the neuron $\sigma\ves{\xi}$, combined with weight decay. If $\ves{\xi}$ is uncorrelated white noise with correlation matrix equal to the identity matrix, the expectation of this learning rule is
\begin{align}
    \mathbb{E}[\Delta Q] = \sigma^2 J^T - \beta Q.
\end{align}
We see that this learning rule lets the feedback weights $Q$ align with the transpose of the networks Jacobian $J$ and has a weight decay term to prevent $Q$ from diverging. 

There are three important differences between this simplified intuitive argumentation for the feedback learning rule and the actual feedback learning rule \eqref{eq:Q_dynamics} used by DFC, which we will address in the next section.
\begin{enumerate}
    \item DFC considers continuous dynamics, hence, the incorporation of noise leads to stochastic differential equations (SDEs) instead of a discrete perturbation of the network layers. The handling of SDEs needs special care, leading to the use of exponentially filtered white noise instead of purely white noise (see next section).
    \item The postsynaptic part of the feedback learning rule \eqref{eq:Q_dynamics} for DFC is the control signal $\ve{u}$ instead of the output error $\ve{e}$. The control signal integrates the output error over time, causing correlations over time to arise in the feedback learning rule. 
    \item The presynaptic part of the feedback learning rule \eqref{eq:Q_dynamics} for DFC is the feedback compartment $\ve{v}^{\text{fb}}$ which consists of both the controller input $Q\ve{u}$ and the noise instead of only the noise. This will lead to an extra term in the expectation $\mathbb{E}[\Delta Q]$, which results in $Q$ aligning with $J^T(JJ^T + \gamma I)^{-1}$, $\gamma > 0$ instead of $J^T$.
\end{enumerate}

\subsubsection{Theorem and proof}
\paragraph{Noise dynamics.} For simplicity of the argument and proof, we assume that the noise only enters through the feedback compartment of the neuron. The proposed theorem below and its proof also holds when extra noise is added in the feedforward and central compartment, as long as it is independent from the noise in the feedback compartment. Throughout the main manuscript, we assumed instantaneous dynamics of the feedback compartment, i.e., $\ve{v}^{\text{fb}}_i(t) = Q_i \ve{u}(t)$. If we now add white noise to the feedback compartment, the limit to instantaneous dynamics is not well defined anymore ($\lim_{\tau_{v^{\text{fb}}}\rightarrow 0}$ in eq. \eqref{eq_app:dynamics_feedback_compartment}). Hence, we introduce the following dynamics for the feedback compartment
\begin{align}\label{eq_app:dynamics_feedback_compartment}
    \tau_{v^{\text{fb}}} \ddt \vfb_i(t) = - \vfb_i(t) + Q_i \ve{u}(t) + \sigma \ves{\xi}_i(t).
\end{align}
The network dynamics \eqref{eq:network_dynamics} are now given by 
\begin{align} \label{eq_app:network_dynamics_vfb}
    \tau_v \frac{\dd \ve{v}_i(t)}{\dd t} &= -\ve{v}_i(t) + W_i\phi(\ve{v}_{i-1}(t)) + \vfb_i(t).
\end{align}
When we remove the noise $\ves{\xi}_i$ and take $\lim_{\tau_{v^{\text{fb}}}\rightarrow 0}$, we recover the original network dynamics \eqref{eq:network_dynamics} of the main manuscript. If we assume that $\ve{u}$ is independent from $\vfb_i$, eq. \eqref{eq_app:dynamics_feedback_compartment} is a linear time-invariant stochastic differential equation that can be solved with the Variation of Constants method, without a special treatment for the white noise \citep{sarkka2019applied}, leading to the following solution for $\vfb_i$:
\begin{align}\label{eq_app:solution_v_fb}
    \vfb_i(t) = \frac{1}{\taufb} \int_{-\infty}^{t} \exp\big(-\frac{1}{\taufb}(t-\tau)\big) Q_i \ve{u}(\tau) \dd \tau  + \frac{\sigma}{\taufb} \int_{-\infty}^{t} \exp\big(-\frac{1}{\taufb}(t-\tau)\big) \ves{\xi}_i(\tau) \dd \tau.
\end{align}
If we now assume that $\taufb \ll \tau_u$, and hence the dynamics of the feedback compartment is much faster than $\ve{u}$, $\vfb$ can be approximated by
\begin{align}\label{eq_app:solution_v_fb_approx}
    \vfb_i(t) &\approx  Q_i \ve{u}(t)  + \frac{\sigma}{\taufb} \int_{-\infty}^{t} \exp\big(-\frac{1}{\taufb}(t-\tau)\big) \ves{\xi}_i(\tau) \dd \tau \\
    &\triangleq Q_i \ve{u}(t) + \sigma \ves{\epsilon}_i(t),
\end{align}
with $\ves{\epsilon}_i(t)$ exponentially filtered white noise, i.e., an Ornstein-Uhlenbeck process \citep{sarkka2019applied} with zero mean and covariance equal to \footnote{Note that in $\lim_{\taufb \rightarrow 0}$, $\ves{\epsilon}_i(t)$ has infinite variance, and is hence not well defined, explaining why we need to make this detour of assuming non-instantaneous dynamics for the feedback compartment.}
\begin{align}\label{eq_app:covariance_eps}
    \mathbb{E}\big[\ves{\epsilon}_i(t) \ves{\epsilon}_i(t+\Delta t)^T\big] = \frac{1}{2 \taufb} \exp \big(-\frac{1}{\taufb} |\Delta t|\big).
\end{align} 
In the remainder of the section, we assume this approximation to be exact. The network dynamics \eqref{eq_app:network_dynamics_vfb} can then be written as
\begin{align} \label{eq_app:network_dynamics_noisy}
    \tau_v \frac{\dd \ve{v}_i(t)}{\dd t} &= -\ve{v}_i(t) + W_i\phi(\ve{v}_{i-1}(t)) + Q_i \ve{u}(t) + \sigma \ves{\epsilon}_i(t).
\end{align}

Now, we are ready to state and prove the main theorem of this section, which shows that the feedback weight plasticity rule \eqref{eq:Q_dynamics} pushes the feedback weights to align with a damped pseudoinverse of the forward Jacobian $J$ of the network.

\begin{theorem}\label{theorem:fb_learning_full}
Assuming stable dynamics, a separation of timescales $\taufb, \tau_v \ll \tau_u \ll \tau_Q$, $k_p=0$, $\alpha \gg |\lambda_{\text{max}}(JQ_M)|$ and $J$ is of full rank, and given $\ve{r}_L^*=\ve{r}_L^-$ and uncorrelated white noise $\ves{\xi}_i \sim \mathcal{N}(\ves{0}, I)$ entering the feedback compartment, the dynamics of the first moment $Q_M$ of the apical weight plasticity \eqref{eq:Q_dynamics} has the following approximate solution in the limit of $\sigma \rightarrow 0$:

\begin{align}
    \lim_{\sigma \rightarrow 0} \ddt Q_M \approx -\frac{1}{\tau_u \alpha} Q_M JJ^T + \frac{1}{2\tau_u} J^T - \beta Q_M,
\end{align}
and the first moment converges to:
\begin{align}
    Q_M^{\mathrm{ss}} \approx \frac{\alpha}{2}J^T(JJ^T + \gamma I)^{-1},
\end{align}
with $\gamma = \alpha \beta \tau_u$. Furthermore, $Q_M^{\mathrm{ss}}$ satisfies Conditions \ref{con:Q_GN} and \ref{con:local_stability}, even if $\alpha=0$ in the latter.
\end{theorem}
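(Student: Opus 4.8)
The plan is to reduce the noisy feedback-phase dynamics to a linear stochastic system whose stationary second-order statistics can be computed in closed form, and then feed those statistics into the first-moment dynamics of the plasticity rule \eqref{eq:Q_dynamics}. First I would linearize everything around the feedforward equilibrium. Because the target is set to $\ve{r}_L^* = \ve{r}_L^-$, the control error is driven purely by the injected noise: using $\tau_v \ll \tau_u$ to let the network \eqref{eq_app:network_dynamics_noisy} settle to its fast quasi-equilibrium $\Delta\ve{v} = Q\ve{u} + \sigma\ves{\epsilon}$, a first-order Taylor expansion exactly as in Lemma \ref{lemma:steady_state_solution_system} gives $\ve{e} = \ve{r}_L^- - \ve{r}_L \approx -JQ\ve{u} - \sigma J\ves{\epsilon}$. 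With $k_p=0$, the controller \eqref{eq:controller_dynamics} then reduces to the Ornstein--Uhlenbeck process $\tau_u\dot{\ve{u}} = -(JQ+\alpha I)\ve{u} - \sigma J\ves{\epsilon}$, coupled to the exponentially-filtered noise $\ves{\epsilon}$ of \eqref{eq_app:solution_v_fb_approx}.

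The technical core is to compute the stationary statistics $\Sigma_{uu} \triangleq \mathbb{E}[\ve{u}\ve{u}^T]$ and $\Sigma_{\epsilon u} \triangleq \mathbb{E}[\ves{\epsilon}\ve{u}^T]$ of this coupled system. I would treat $(\ves{\epsilon},\ve{u})$ jointly as a linear SDE and solve the associated Lyapunov equation $\mathbf{A}\ves{\Sigma} + \ves{\Sigma}\mathbf{A}^T + \mathbf{D} = 0$ blockwise, inserting the known equal-time covariance $\mathbb{E}[\ves{\epsilon}\ves{\epsilon}^T] = \tfrac{1}{2\taufb}I$ from \eqref{eq_app:covariance_eps}. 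The separation $\taufb \ll \tau_u$ lets me discard the $\mathcal{O}(\taufb/\tau_u)$ cross-block terms, yielding $\Sigma_{\epsilon u} \approx -\tfrac{\sigma}{2\tau_u}J^T$; the large-$\alpha$ assumption $\alpha \gg |\lambda_{\max}(JQ)|$ collapses the remaining block equation $(JQ+\alpha I)\Sigma_{uu} + \Sigma_{uu}(JQ+\alpha I)^T = \tfrac{\sigma^2}{\tau_u}JJ^T$ to $2\alpha\Sigma_{uu} \approx \tfrac{\sigma^2}{\tau_u}JJ^T$, hence $\Sigma_{uu} \approx \tfrac{\sigma^2}{2\alpha\tau_u}JJ^T$.

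I would then take the expectation of the plasticity rule \eqref{eq:Q_dynamics} with $\ve{v}^{\text{fb}} = Q\ve{u} + \sigma\ves{\epsilon}$. The slowest timescale $\tau_u \ll \tau_Q$ justifies an adiabatic mean-field closure, in which $Q$ is treated as constant while $(\ves{\epsilon},\ve{u})$ equilibrate, so $\mathbb{E}[\ve{v}^{\text{fb}}\ve{u}^T] \approx Q_M\Sigma_{uu} + \sigma\Sigma_{\epsilon u}$. Substituting the two statistics gives the first-moment ODE
\begin{equation}
\tau_Q\dot{Q}_M \approx -\tfrac{\sigma^2}{2\alpha\tau_u}Q_M JJ^T + \tfrac{\sigma^2}{2\tau_u}J^T - \beta Q_M,
\end{equation}
which matches the claimed dynamics up to the normalization that absorbs the common $\sigma^2$ factor into the effective learning rate and weight decay. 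Setting $\dot{Q}_M = 0$ and right-inverting the resulting linear equation $Q_M\big(\tfrac{\sigma^2}{2\alpha\tau_u}JJ^T + \beta I\big) = \tfrac{\sigma^2}{2\tau_u}J^T$ produces the damped pseudoinverse $Q_M^{\mathrm{ss}} \propto J^T(JJ^T + \gamma I)^{-1}$ with $\gamma \propto \alpha\beta\tau_u$.

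Verifying the two Conditions from this explicit form is then immediate. Since $J$ is full rank, $JJ^T + \gamma I$ is invertible, so $\mathrm{Col}(Q_M^{\mathrm{ss}}) = \mathrm{Col}\big(J^T(JJ^T+\gamma I)^{-1}\big) = \mathrm{Col}(J^T) = \mathrm{Row}(J)$, which is Condition \ref{con:Q_GN}. For Condition \ref{con:local_stability} I use $J_{\mathrm{ss}} \approx J$ at this steady state, so $J_{\mathrm{ss}}Q_M^{\mathrm{ss}} \propto JJ^T(JJ^T+\gamma I)^{-1}$ is a product of commuting symmetric positive-definite matrices with eigenvalues $\mu_k/(\mu_k+\gamma) \in (0,1)$ scaled by a positive constant; being strictly positive, these exceed $-\alpha$ for every $\alpha \ge 0$, explaining why stability holds even at $\alpha=0$. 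I expect Step 2 to be the main obstacle, since it requires carefully handling the colored noise $\ves{\epsilon}$ and the cross-correlation with the causally-driven $\ve{u}$ (where the one-sided integral against the sharply peaked covariance contributes the factor $\tfrac{1}{2}$ that distinguishes $\Sigma_{\epsilon u}$ from the two-sided $\Sigma_{uu}$), together with justifying the chain of timescale-separation and large-$\alpha$ approximations and the mean-field closure in a controlled manner.
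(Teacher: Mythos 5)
Your proposal is correct and follows essentially the same route as the paper's proof: linearize, use $\tau_v\ll\tau_u$ to collapse the network to $\Delta\ve{v}=Q\ve{u}+\sigma\ves{\epsilon}$, reduce the controller to the linear SDE $\tau_u\dot{\ve{u}}=-(JQ+\alpha I)\ve{u}-\sigma J\ves{\epsilon}$, compute the stationary statistics $\mathbb{E}[\ve{u}\ve{u}^T]$ and $\mathbb{E}[\ves{\epsilon}\ve{u}^T]$ under the $\taufb\ll\tau_u$ and large-$\alpha$ approximations, insert them into the quasi-static expectation of \eqref{eq:Q_dynamics}, and read off the damped pseudoinverse fixed point together with the two Conditions (the paper's Lemma on $J^T(JJ^T+\gamma I)^{-1}$ is exactly your final step). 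The only differences are cosmetic or minor: you package the second-moment computation as a blockwise Lyapunov equation where the paper evaluates the stochastic-convolution double integrals directly (equivalent calculations, with any residual factor-of-two discrepancies in $\gamma$ absorbed by the theorem's ``approximate'' statement), and you omit the paper's short final check that the linear ODE for $Q_M$ is actually convergent, which follows from the positive eigenvalues of $JJ^T\otimes I+\gamma I$ after vectorization.
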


\begin{proof}
Linearizing the system dynamics (which becomes exact in the limit of $\sigma \rightarrow 0$ and assuming stable dynamics), results in the following dynamical equation for the controller, recalling that $\ve{r}_L^*=\ve{r}_L^-$ (c.f. App. \ref{app:linearized_dyn}):
\begin{align}\label{eq:app_controller_dynamics_delta_v}
    \tau_u \ddt \ve{u}(t) = -J\Delta \ve{v}(t) - \alpha \ve{u}(t),
\end{align}
with $\Delta \ve{v}_i \triangleq \ve{v}_i - W_i \phi(\ve{v}_{i-1})$ and $\Delta \ve{v}$ the concatenation of all $\Delta \ve{v}_i$. When we have a separation of timescales between the network and controller, i.e., $\tau_v \ll \tau_u$, which corresponds with \textit{instant system dynamics} of the network \eqref{eq_app:network_dynamics_noisy}, we get
\begin{align}\label{eq:app_delta_v_instant_noise}
    \Delta \ve{v}_i &= Q_i\ve{u} + \sigma \ves{\epsilon}_i \\
    \Delta \ve{v} &= Q\ve{u} + \sigma \ves{\epsilon}
\end{align}
where the latter is the concatenated version of the former.
Combining this with eq. \eqref{eq:app_controller_dynamics_delta_v} gives the following stochastic differential equation for the controller dynamics:
\begin{align}\label{eq:app_controller_dynamics_noisy}
    \tau_u \ddt \ve{u}(t) = -(JQ + \alpha I) \ve{u}(t) - \sigma J \ves{\epsilon}(t).
\end{align}
When we have a separation of timescales between the synaptic plasticity and controller dynamics, i.e., $\tau_u \ll \tau_Q$, we can treat $Q$ as constant and therefore eq. \eqref{eq:app_controller_dynamics_noisy} represents a linear time-invariant stochastic differential equation, which has as solution \citep{sarkka2019applied}
\begin{align}
    \ve{u}(t) &=  -\frac{\sigma}{\tau_u}\int_{-\infty}^t e^{-\frac{1}{\tau_u} A (t-\tau)} J \ves{\epsilon}(\tau) \dd \tau  \\
    A &\triangleq JQ + \alpha I.
\end{align}
Using the approximate solution of the feedback compartment \eqref{eq_app:solution_v_fb_approx} (which we consider exact due to the separation of timescales $\taufb \ll \tau_u$), we can write the expectation of the first part of the feedback learning rule \eqref{eq:Q_dynamics} as 
\begin{align}
    &\mathbb{E}\big[-\vfb(t)\ve{u}(t)^T\big] = \mathbb{E}\Big[-\big(Q\ve{u}(t) + \sigma \ves{\epsilon}(t)\big)\ve{u}(t)^T\Big] \\
    &= \mathbb{E} \bigg[ \underbrace{-\frac{\sigma^2}{\tau_u^2}Q\int_{-\infty}^t \exp\big(-\frac{1}{\tau_u}A(t-\tau)\big) J \ves{\epsilon}(\tau)\dd \tau \int_{-\infty}^t \ves{\epsilon}(\tau)^T J^T \exp\big(-\frac{1}{\tau_u}A^T(t-\tau)\big) \dd \tau}_{\text{(a)}} \\
    &\quad ...+ \underbrace{\frac{\sigma^2}{\tau_u} \ves{\epsilon}(t)  \int_{-\infty}^t \ves{\epsilon}(\tau)^T J^T \exp\big(-\frac{1}{\tau_u}A^T(t-\tau)\big) \dd \tau}_{\text{(b)}}\bigg].
\end{align}

Focusing on (a) and using the covariance of $\ves{\epsilon}$ \eqref{eq_app:covariance_eps}, we get:
\begin{align}
    \mathbb{E}[\text{(a)}] &= -\frac{\sigma^2}{\tau_u^2}Q\int_{-\infty}^t e^{\frac{1}{\tau_u}A(t-\tau_1)} J \int_{-\infty}^t \frac{1}{2 \taufb} e^{ -\frac{1}{\taufb} |\tau_1 - \tau_2|} J^T e^{-\frac{1}{\tau_u}A^T(t-\tau_2)} \dd \tau_2\dd \tau_1 \\
    &=-\frac{\sigma^2}{\tau_u^2}Q\int_{-\infty}^t e^{-\frac{1}{\tau_u}A(t-\tau_1)} J \bigg(\int_{-\infty}^{\tau_1} \frac{1}{2 \taufb} J^T e^{ -\big(\frac{1}{\taufb}I + \frac{1}{\tau_u} A^T\big)(\tau_1 - \tau_2)} e^{-\frac{1}{\tau_u}A^T(t-\tau_1)} \dd \tau_2 \\
    &\qquad ... + \int_{\tau_1}^{t} \frac{1}{2 \taufb} J^T e^{ -\big(\frac{1}{\taufb}I - \frac{1}{\tau_u} A^T\big)(\tau_2 - \tau_1)} e^{-\frac{1}{\tau_u}A^T(t-\tau_1)}\dd \tau_2 \bigg) \dd \tau_1 \\
    &\approx -\frac{\sigma^2}{\tau_u^2}Q \int_{-\infty}^t e^{-\frac{1}{\tau_u}A(t-\tau_1)}JJ^T e^{-\frac{1}{\tau_u}A^T(t-\tau_1)}\dd \tau_1,
\end{align}
where we used in the last step that $\taufb \ll \tau_u$, hence $\frac{1}{\taufb}I - \frac{1}{\tau_u} A^T \approx \frac{1}{\taufb}I$ and $\frac{1}{\taufb}\int_{-t_1}^{0} e^{-\frac{1}{\taufb}\tau}\dd \tau \approx 1$ when $\taufb \ll t_1 $ for $t_1>0$. If we further assume that $\alpha \gg \max\big(\{|\lambda_i(JQ)|\}\big)$ with $\lambda_i(JQ)$ the eigenvalues of $JQ$, we have that
\begin{align}
    e^{A} = e^{JQ + \alpha I} \approx e^{\alpha},
\end{align}
and hence 
\begin{align}
    \mathbb{E}[\text{(a)}] &\approx -\frac{\sigma^2}{\tau_u^2}Q \int_{-\infty}^t e^{-\frac{2}{\tau_u}\alpha (t-\tau_1)}JJ^T \dd \tau_1\\
    &= -\frac{\sigma^2}{\tau_u \alpha} Q JJ^T.
\end{align}

Focusing on part (b), we get
\begin{align}
    \mathbb{E}[\text{(b)}] &= \frac{\sigma^2}{\tau_u}\int_{-\infty}^t  \frac{1}{2 \taufb} e^{ -\frac{1}{\taufb} (t-\tau)}J^T \exp\big(-\frac{1}{\tau_u}A^T(t-\tau)\big) \dd \tau \\
    &= \frac{\sigma^2}{\tau_u} J^T \int_{-\infty}^t \frac{1}{2 \taufb} e^{ -\big(\frac{1}{\taufb}I + \frac{1}{\tau_u} A^T\big) (t-\tau)} \dd \tau \\
    & \approx \frac{\sigma^2}{2\tau_u} J^T.
\end{align}

Taking everything together, we get the following approximate dynamics for the first moment of $Q$:
\begin{align}
    &\lim_{\sigma \rightarrow 0} \ddt \mathbb{E}\big[Q\big] = \lim_{\sigma \rightarrow 0} \ddt Q_M = \lim_{\sigma \rightarrow 0} \mathbb{E}\big[ -\frac{1}{\sigma^2}\vfb \ve{u}^T - \beta Q \big] \\
    &\approx -\frac{1}{\tau_u \alpha} Q_M JJ^T + \frac{1}{2\tau_u} J^T - \beta Q_M. \label{eq_app:Q_dynamics_approx}
\end{align}

Assuming the approximation exact and solving for the steady state, we get:
\begin{align}
    0 &= -\frac{1}{\tau_u \alpha} Q_M^{\mathrm{ss}} JJ^T + \frac{1}{2\tau_u} J^T - \beta Q_M^{\mathrm{ss}} \\
    \Rightarrow \quad Q_M^{\mathrm{ss}} &= \frac{\alpha}{2}J^T(JJ^T + \alpha \beta \tau_u I)^{-1}.
\end{align}
The only thing remaining to show is that the dynamics of $Q_M$ are convergent. By vectorizing eq. \eqref{eq_app:Q_dynamics_approx}, we get
\begin{align}
    \tau_Q \ddt \text{vec}\big(Q_M) = -\frac{1}{\tau_u \alpha} \big( JJ^T \otimes I + \alpha \beta \tau_u I\big)\text{vec}\big(Q_M\big) + \frac{1}{2\tau_u} \text{vec}\big(J^T\big).
\end{align}
As the eigenvalues of a Kronecker product $A\otimes B$ are equal to the products $\lambda^A_i \lambda^B_j$, the eigenvalues of $JJ^T \otimes I$ are equal to the eigenvalues of $JJ^T$ (in higher multiplicity) and hence all positive. This makes the above dynamical system convergent, thereby concluding the main part of the proof. Finally, Lemma \ref{lemma_app:Q_satisfies_conditions} shows that, if $J$ is full rank, $Q_M^{\mathrm{ss}} = \frac{\alpha}{2}J^T(JJ^T + \alpha \beta \tau_u I)^{-1}$ satisfies Conditions \ref{con:Q_GN} and \ref{con:local_stability}, even if $\alpha=0$ in the latter.

\end{proof}

\begin{lemma}\label{lemma_app:Q_satisfies_conditions}
$Q=J^T(JJ^T + \gamma I)^{-1}$ with $\gamma \geq 0$ satisfies Condition \ref{con:Q_GN} and the product $JJ^T(JJ^T + \gamma I)^{-1}$ with $\gamma \geq 0$ has strictly positive eigenvalues if $J$ is of full rank. 
\end{lemma}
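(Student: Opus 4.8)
The plan is to treat the two claims separately, since both reduce to elementary spectral properties of the symmetric positive semidefinite matrix $JJ^T$. Write $M \triangleq JJ^T$; this is symmetric and positive semidefinite, and when $J$ has full (row) rank it is in fact positive definite, with strictly positive eigenvalues $\mu_1, \ldots, \mu_{n_L} > 0$. The first thing I would establish is that $M + \gamma I$ is invertible for every $\gamma \geq 0$: for $\gamma > 0$ this is immediate since $M \succeq 0$ forces $M + \gamma I \succ 0$, and for $\gamma = 0$ it follows from $M$ being positive definite under the full-rank hypothesis. Hence $(JJ^T + \gamma I)^{-1}$ is a well-defined invertible matrix, which is the only structural fact the rest of the argument needs.

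For Condition \ref{con:Q_GN} I would use that right-multiplication by an invertible matrix leaves the column space unchanged. Concretely, $\text{Col}(Q) = \{\, J^T (M + \gamma I)^{-1} x : x \in \mathbb{R}^{n_L} \,\}$, and since $(M + \gamma I)^{-1}$ is a bijection of $\mathbb{R}^{n_L}$, the set $\{(M + \gamma I)^{-1} x : x \in \mathbb{R}^{n_L}\}$ is all of $\mathbb{R}^{n_L}$. Therefore $\text{Col}(Q) = \text{Col}(J^T) = \text{Row}(J)$, which is exactly the statement of Condition \ref{con:Q_GN}.

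For the eigenvalue claim I would diagonalize $M$. Being symmetric, $M = U \Lambda U^T$ with $U$ orthogonal and $\Lambda = \text{diag}(\mu_1, \ldots, \mu_{n_L})$, each $\mu_i > 0$ by full rank. Then $M$ and $(M + \gamma I)^{-1}$ are simultaneously diagonalized by $U$, so $M(M + \gamma I)^{-1} = U \, \text{diag}\!\big(\mu_i/(\mu_i + \gamma)\big) \, U^T$, and each eigenvalue $\mu_i/(\mu_i + \gamma)$ is strictly positive because $\mu_i > 0$ and $\mu_i + \gamma > 0$. This proves the eigenvalue statement, and since $JQ = JJ^T (JJ^T + \gamma I)^{-1} = M (M + \gamma I)^{-1}$, it shows the eigenvalues of $JQ$ are strictly positive, hence greater than $-\alpha$ for every $\alpha \geq 0$; thus Condition \ref{con:local_stability} holds even when $\alpha = 0$, as claimed in Theorem \ref{theorem:fb_learning_full}.

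I expect no substantial obstacle, as the result is routine linear algebra. The only two points that warrant care are (i) invoking the full-rank hypothesis precisely where it is needed, namely to guarantee invertibility of $JJ^T$ in the $\gamma = 0$ case and strict positivity of the $\mu_i$, and (ii) the bijection argument for the column space, which quietly relies on $(M + \gamma I)^{-1}$ mapping \emph{onto} all of $\mathbb{R}^{n_L}$ rather than merely into it.
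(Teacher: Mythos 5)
Your proof is correct and follows essentially the same route as the paper's: the column-space claim is handled by observing that right-multiplication by the invertible matrix $(JJ^T+\gamma I)^{-1}$ preserves $\mathrm{Col}(J^T)$, and the eigenvalue claim by diagonalizing $JJ^T$ (the paper uses the SVD of $J$, which yields the identical eigendecomposition $U\,\mathrm{diag}(\sigma_i^2/(\sigma_i^2+\gamma))\,U^T$). Your explicit check that $JJ^T+\gamma I$ is invertible for $\gamma\geq 0$ is a small point of added care the paper leaves implicit, but it does not change the argument.
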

\begin{proof}
When $J$ is of full rank, $J^T(JJ^T + \gamma I)^{-1}$ can be written as $J^T M$, with $M$ a square full rank matrix. As $M$ is full rank, $J^TM$ has the same column space as $J^T$, thereby proving that $Q=J^T(JJ^T + \gamma I)^{-1}$ with $\gamma \geq 0$ satisfies Condition \ref{con:Q_GN}. 

Next, consider the singular value decomposition of $J$:
\begin{align}
    J = U\Sigma V^T.
\end{align}
Now, $JJ^T(JJ^T + \gamma I)^{-1}$ can be written as
\begin{align}\label{eq_app:svd_jjt}
    JJ^T(JJ^T + \gamma I)^{-1} = U\Sigma \Sigma^T (\Sigma \Sigma^T + \gamma I)^{-1} U^T,
\end{align}
with $\Sigma \Sigma^T (\Sigma \Sigma^T + \gamma I)^{-1}$ a diagonal matrix with $\frac{\sigma_i^2}{\sigma_i^2 + \gamma} > 0$ on its diagonal, and $\sigma_i$ the singular values of $J$. As $U^T = U^{-1}$, and $\Sigma \Sigma^T (\Sigma \Sigma^T + \gamma I)^{-1}$ is diagonal, eq. \eqref{eq_app:svd_jjt} is the eigenvalue decomposition of $JJ^T(JJ^T + \gamma I)^{-1}$, with eigenvalues $\frac{\sigma_i^2}{\sigma_i^2 + \gamma} > 0$, thereby concluding the proof. 
\end{proof}

\subsection{Toy experiments corroborating the theory}
To test whether Theorem \ref{theorem:fb_learning_full} can also provide insight into more realistic settings, we conducted a series of student-teacher toy regression experiments with a one-hidden-layer network of size $20-10-5$ for more realistic values of $\taufb$, $\tau_v$, $\alpha$ and $k_p>0$. For details about the simulation implementation, see App. \ref{app:simulation_and_algorithms}. We investigate the learning of $Q$ during pre-training, hence, when the forward weights $W_i$ are fixed. In contrast to Theorem \ref{theorem:fb_learning_full}, we use multiple batch samples for training the feedback weights. When the network is linear, $J$ remains the same for each batch sample, hence mimicking the situation of Theorem \ref{theorem:fb_learning_full} where $Q$ is trained on only one sample to convergence. When the network is nonlinear, however, $J$ will be different for each sample, causing $Q$ to align with an average configuration over the batch samples. 

We start by investigating which damping value $\gamma$ accurately describes the alignment of $Q$ with $J^T(JJ^T + \gamma I)^{-1}$ in this more realistic case. Fig. \ref{fig_app:fb_learning_5}.a shows the alignment of $Q$ with $J^T(JJ^T + \gamma I)^{-1}$ for different damping values $\gamma$ in a linear network. Interestingly, the damping value that optimally describes the alignment of $Q$ is $\gamma=5$, which is much larger than would be predicted by Theorem \ref{theorem:fb_learning_full} which uses simplified conditions. Hence, the more realistic settings used in the simulation of these toy experiments result in a larger damping value $\gamma$. For nonlinear networks, similar conclusions can be drawn (see Fig. \ref{fig_app:fb_learning_5}.b), however, with slightly worse alignment due to $J$ changing for each batch sample. Note that almost perfect compliance to Condition \ref{con:Q_GN} is reached for both the linear and nonlinear case (not shown here).

\begin{figure}[h!]
\centering
\begin{subfigure}{.49\textwidth}
  \centering
  \includegraphics[width=1.\linewidth]{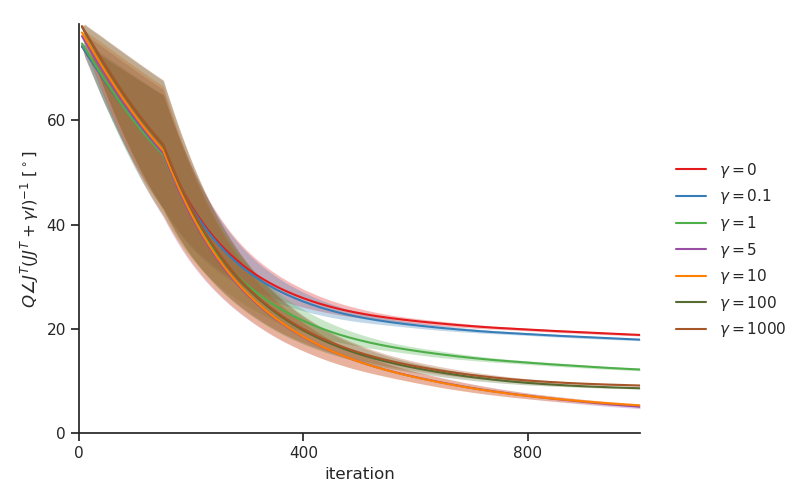}  
  \caption{Linear}
\end{subfigure}
\begin{subfigure}{.49\textwidth}
  \centering
  \includegraphics[width=1.\linewidth]{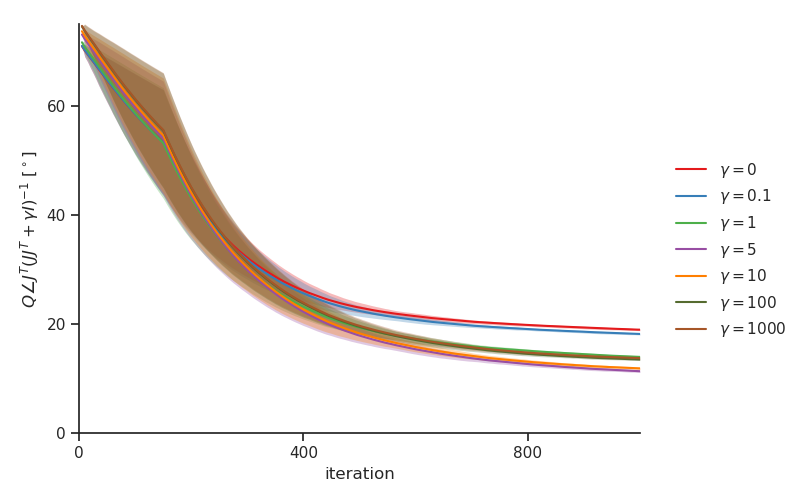}  
  \caption{Nonlinear}
\end{subfigure}
\caption{Alignment of the feedback weights $Q$ with the damped pseudoinverse $J^T(JJ^T + \gamma I)^{-1}$ for various values of $\gamma$. We used a one-hidden-layer network of size 20-10-5 with a linear output layer and (a) a linear hidden layer or (b) a $\tanh$ hidden layer. Hyperparameters: $k_p=0$, $\alpha=0.5$, $\tau_u=1.$, $\taufb = 0.3$, $\tau_v=0.005$, $\beta =0.01$ and $\sigma=0.01$. We used 300 Euler-Maruyama simulation steps of size $\Delta t = 0.001$. A window-average is plotted together with the window-std (shade).}
\label{fig_app:fb_learning_5}
\end{figure}

Next, we investigate how big $\alpha$ needs to be for good alignment. Surprisingly, Fig. \ref{fig_app:fb_learning_6} shows that $Q$ reaches almost perfect alignment for all values of $\alpha \in [0,1]$, both for linear and nonlinear networks. We hypothesize that this is due to the short simulation window (300 steps of $\Delta t = 0.001$) that we used to reduce computational costs, preventing the dynamics from diverging, even when they are unstable. Interestingly, this hypothesis leads to another case where the feedback learning rule \eqref{eq:Q_dynamics} can be used besides for big $\alpha$: when the network activations can be `reset' when they start diverging, e.g., by inhibition from other brain areas, the feedback weights can be learned properly, even with unstable dynamics. 

\begin{figure}[h!]
\centering
\begin{subfigure}{.49\textwidth}
  \centering
  \includegraphics[width=1.\linewidth]{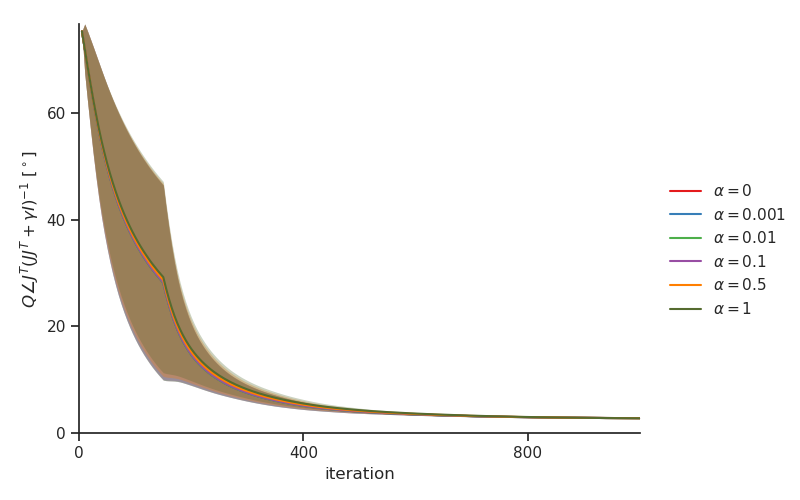}  
  \caption{Linear}
\end{subfigure}
\begin{subfigure}{.49\textwidth}
  \centering
  \includegraphics[width=1.\linewidth]{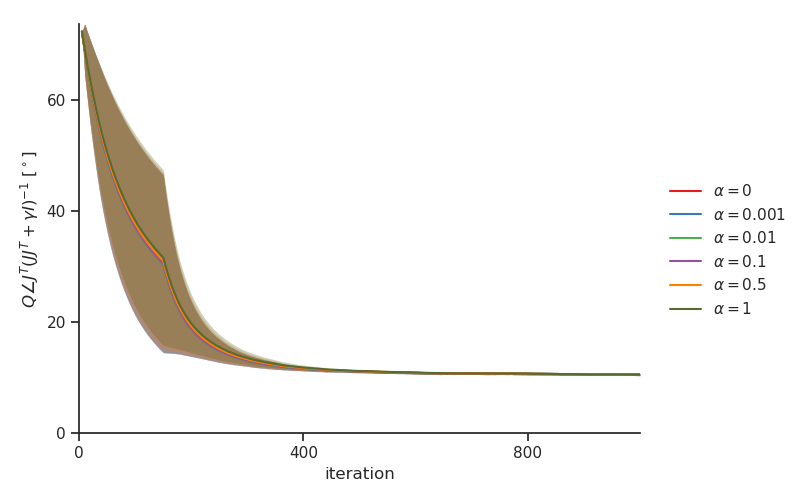}  
  \caption{Nonlinear}
\end{subfigure}
\caption{Alignment of the feedback weights $Q$ with the damped pseudoinverse $J^T(JJ^T + \gamma I)^{-1}$ for various values of $\alpha$. We used a one-hidden-layer network of size 20-10-5 with a linear output layer and (a) a linear hidden layer or (b) a $\tanh$ hidden layer. Hyperparameters: $k_p=0.3$, $\gamma=5$, $\tau_u=1.$, $\taufb = 0.3$, $\tau_v=0.005$, $\beta =0.01$ and $\sigma=0.01$. We used 300 Euler-Maruyama simulation steps of size $\Delta t = 0.001$. A window-average is plotted together with the window-std (shade).}
\label{fig_app:fb_learning_6}
\end{figure}

Finally, we investigate how $k_p$ influences the feedback learning. Fig. \ref{fig_app:fb_learning_7} shows that bigger $k_p$ increase the speed of alignment with $J^T(JJ^T + \gamma I)^{-1}$.

\begin{figure}[h!]
\centering
\begin{subfigure}{.49\textwidth}
  \centering
  \includegraphics[width=1.\linewidth]{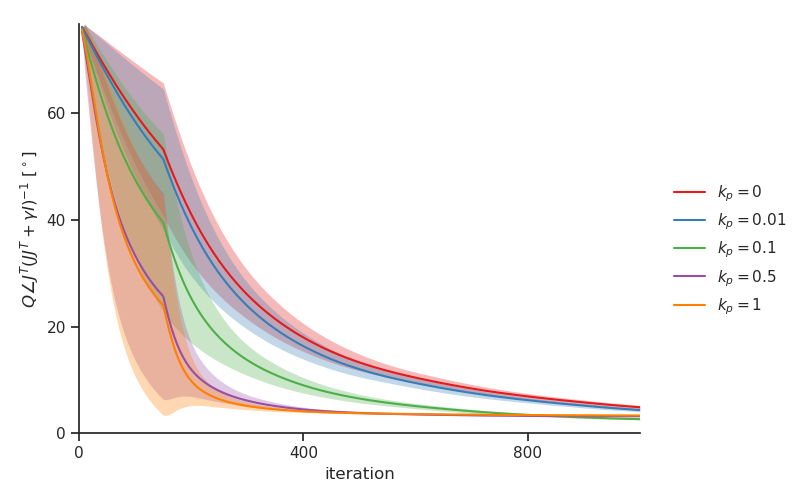}  
  \caption{Linear}
\end{subfigure}
\begin{subfigure}{.49\textwidth}
  \centering
  \includegraphics[width=1.\linewidth]{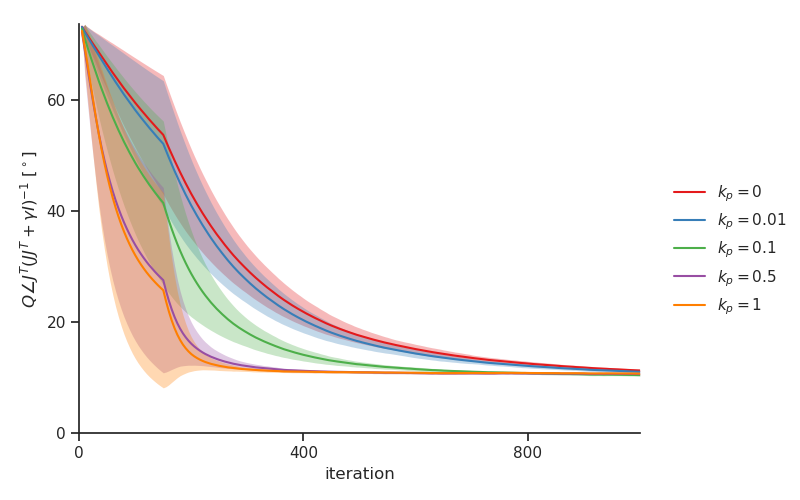}  
  \caption{Nonlinear}
\end{subfigure}
\caption{Alignment of the feedback weights $Q$ with the damped pseudoinverse $J^T(JJ^T + \gamma I)^{-1}$ for various values of $k_p$. We used a one-hidden-layer network of size 20-10-5 with a linear output layer and (a) a linear hidden layer or (b) a $\tanh$ hidden layer. Hyperparameters: $\alpha=0.1$, $\gamma=5$, $\tau_u=1.$, $\taufb = 0.3$, $\tau_v=0.005$, $\beta =0.01$ and $\sigma=0.01$. We used 300 Euler-Maruyama simulation steps of size $\Delta t = 0.001$. A window-average is plotted together with the window-std (shade).}
\label{fig_app:fb_learning_7}
\end{figure}

\subsection{Learning the forward and feedback weights simultaneously}\label{app:fb_learning_simultaneously}

In this section, we show that the forward and feedback weights can be learned simultaneously, when noise is added to the feedback compartment, resulting in the noisy dynamics of eq. \eqref{eq_app:network_dynamics_noisy}, and when the feedback plasticity rule \eqref{eq:Q_dynamics} uses a high-pass filtered version of $\ve{u}$ as presynaptic plasticity signal. 

We make the same assumptions as in Theorem \ref{theorem:fb_learning_full}, except now the output target $\ve{r}_L^*$ is the one for learning the forward weights, hence given by eq. \eqref{eq:output_target}. 
Linearizing the network dynamics, gives us the following expression for the control error
\begin{align}
    \ve{e}(t) = \ves{\delta}_L - J\Delta \ve{v}(t),
\end{align}
and for the controller dynamics (with $k_p=0$) 
\begin{align}
    \tau_u \ddt \ve{u}(t) = \ves{\delta}_L - J\Delta \ve{v}(t) - \alpha \ve{u}(t).
\end{align}
Using instantaneous network dynamics ($\tau_v \ll \tau_u$), we have that $\Delta \ve{v}(t) = Q\ve{u}(t) + \sigma \ves{\epsilon}(t)$, giving us:
\begin{align}\label{eq_app:u_dyn_inst_fblearning}
    \tau_u \ddt \ve{u}(t) = \ves{\delta}_L - (JQ + \alpha I)\ve{u}(t) -\sigma J\ves{\epsilon}(t).
\end{align}
We now continue by investigating the dynamics of newly defined signal $\tdu(t)$ that subtracts a baseline from the control signal $\ve{u}(t)$:
\begin{align}
    \tdu(t) &\triangleq \ve{u}(t) - \ve{u}_{\sss}, \,\,\,\, \ve{u}_{\mathrm{ss}} = (JQ + \alpha I)^{-1}\ves{\delta}_L,
\end{align}
with $\ve{u}_{\mathrm{ss}}$ being the steady state of $\ve{u}$ in the dynamics without noise (see Lemma \ref{lemma:steady_state_solution_system}). Rewriting the dynamics \eqref{eq_app:u_dyn_inst_fblearning} for $\tdu$ gives us
\begin{align}
    \tau_u \ddt \Delta \ve{u}(t) = - (JQ + \alpha I)\Delta \ve{u}(t) -\sigma J\ves{\epsilon}(t).
\end{align}
We now recovered exactly the same dynamics for $\Delta \ve{u}$ as was the case for $\ve{u}$ \eqref{eq:app_controller_dynamics_noisy} during the sleep phase where $\ve{r}_L^*=\ve{r}_L^-$ in Theorem \ref{theorem:fb_learning_full}. Now, we introduce a new plasticity rule for $Q$ using $\tdu$ instead of $\ve{u}$ as presynaptic plasticity signal:
\begin{align} \label{eq_app:Q_dynamics_hpf}
    \tau_Q \ddt Q(t) = -\vfb(t) \tdu(t)^T - \beta Q(t).
\end{align}
Upon noting that $\tdu$ (representing the noise fluctuations in $\ve{u}$) is independent of $\ve{u}_{\sss}$ (representing the control input needed to drive the network to $\ve{r}_L^*$), the approximate first moment dynamics described in Theorem \ref{theorem:fb_learning_full} also hold for the new plasticity rule \eqref{eq_app:Q_dynamics_hpf}. Furthermore, when the controller dynamics \eqref{eq_app:u_dyn_inst_fblearning} have settled, $\ve{u}_{\sss}$ is the average of $\ve{u}(t)$ (which has zero-mean noise fluctuations on top of $\ve{u}_{\sss}$), hence, $\tdu$ can be seen as a high-pass filtered version of $\ve{u}(t)$. 

To conclude, we have shown that the sleep phase for training the feedback weights $Q$ can be merged with the phase for training the forward weights with $\ve{r}_L^*$ as defined in eq. \eqref{eq:output_target}, if the plasticity rule for $Q$ \eqref{eq_app:Q_dynamics_hpf} uses a high-pass filtered version $\tdu$ of $\ve{u}$ as presynaptic plasticity signal and when the network and controller are fluctuating around their equilibrium, as we did not take initial conditions into account. We hypothesize that even with initial dynamics that have not yet converged to the steady-state, the plasticity rule for Q \eqref{eq_app:Q_dynamics_hpf} with $\Delta \ve{u}$ a high-pass filtered version of $\ve{u}$ will result in proper feedback learning, as high-pass filtering $\ve{u}(t)$ will extract high-frequency noise fluctuations\footnote{Not all noise fluctuations are high-frequency. However, the important part of the hypothesis is that the high-pass filtering selects noise components that are zero-mean and correlate with $\vfb$.} out of it which are correlated with $\vfb$ and can hence be used for learning $Q$. We leave it to future work to experimentally verify this hypothesis. Merging the two phases into one has as a consequence that there is also noise present during the learning of the forward weights \eqref{eq:W_dynamics}, which we investigate in the next subsection.

\subsection{Influence of noisy dynamics on learning the forward weights}
When there is noise present in the dynamics during learning the forward weights, this will have an influence on the updates of $W_i$. It turns out that the same noise correlations that we used in the previous sections to learn the feedback weights will cause bias terms to appear in the updates of the forward weights $W_i$ \eqref{eq:W_dynamics}. This issue is not unique to our DFC setting with a feedback controller but appears in general in methods that use error feedback and have realistic noise dynamics in their hidden layers. In this section, we lay down the issues caused by noise dynamics for learning forward weights for general methods that use error feedback. At the end of the section, we comment on the implications of these issues for DFC. 

For simplicity, we consider a normal feedforward neural network 
\begin{align}
    \ve{r}_i^- = \phi(\ve{v}^-_i) = \phi(W_i \ve{r}_{i-1}^-), \quad 1\leq i \leq L.
\end{align}

To incorporate the notion of noisy dynamics, we perturb each layer's pre-nonlinearity activation with zero-mean noise $\ves{\epsilon}_i$ and propagate the perturbations forward:
\begin{align}
    \tilde{\ve{v}}_i^- = W_i \phi(\tilde{\ve{v}}_{i-1}^-) + \sigma \ves{\epsilon}_i, \quad 1\leq i \leq L,
\end{align}
with $\tilde{\ve{r}}_0^- = \ve{r}_0^-$. For small $\sigma$, a first-order Taylor approximation of the perturbed output gives
\begin{align}
    \tilde{\ve{r}}_L^- = \ve{r}_L^- + \sigma J \ves{\epsilon} + \mathcal{O}(\sigma^2),
\end{align}
with $\ves{\epsilon}$ the concatenated vector of all $\ves{\epsilon}_i$. If the task loss is an $L^2$ loss and we have the training label $\ve{r}_L^*$, the output error is equal to 
\begin{align}
    \ve{e}_L = \ve{r}_L^* -\tilde{\ve{r}}_L^- = \ves{\delta}_L -\sigma J \ves{\epsilon} + \mathcal{O}(\sigma^2),
\end{align}
with $\ves{\delta}_L = \ve{r}_L^* - \ve{r}_L^-$, the output error without noise perturbations. To remain general, we define the feedback path $\ve{e}_i = g_i(\ve{e}_L)$ that transports the output error $\ve{e}_L$ to the hidden layer $i$, at the level of the pre-nonlinearity activations. E.g., for BP, $\ve{e}_i = g_i(\ve{e}_L) = J_i^T\ve{e}_L$, and for direct linear feedback mappings such as DFA, $\ve{e}_i = g_i(\ve{e}_L) = Q_i\ve{e}_L$. Now, the commonly used update rule of postsynaptic error signal multiplied with presynaptic input gives (after a first-order Taylor expansion of all terms)
\begin{align}
    \Delta W_i &= \eta \ve{e}_i \tilde{\ve{r}}_{i-1}^{-T}\\
    &= \eta \big( \ves{\delta}_i - \sigma J_{g_i} J \ves{\epsilon} + \mathcal{O}(\sigma^2)\big) \big(\ve{r}_{i-1}^{-} + \sigma D_{i-1} \ves{\epsilon}_{i-1} + \mathcal{O}(\sigma^2)\big)^T,
\end{align}
with $\ves{\delta}_i = g_i(\ves{\delta}_L)$, $J_{g_i} = \frac{\partial g_i(\ve{e}_L)}{\partial \ve{e}_L}\big\rvert_{\ve{e}_L=\ves{\delta}_L}$ and $D_i = \frac{\partial \ve{r}^-_i}{\partial \ve{v}_i}\big\rvert_{\ve{v}_i=\ve{v}_i^-}$. Taking the expectation of $\Delta W_i$, we get
\begin{align}
    \mathbb{E}[\Delta W_i] = \eta \ves{\delta_i}\ve{r}_{i-1}^{-T} - \eta \sigma^2 J_{g_i}J_{i-1}\Sigma_{i-1} D_{i-1}  + \mathcal{O}(\sigma^3),
\end{align}
with $\Sigma_{i-1}$ the covariance matrix of $\ves{\epsilon}_{i-1}$.
We see that besides the desired update $\eta \ves{\delta_i}\ve{r}_{i-1}^{-T}$, there also appears a bias term due to the noise, which scales with $\sigma^2$ and cannot be avoided by averaging over weight updates. The noise bias arises from the correlation between the noise in the presynaptic input $\tilde{\ve{r}}_{i-1}$ and the postsynaptic error $\ve{e}_i$. Note that it is not a valid strategy to assume that the noise in $\ve{e}_i$ is uncorrelated from the noise in $\tilde{\ve{r}}_{i-1}$ due to a time delay between the two signals, as in more realistic cases, $\ves{\epsilon}$ originates from stochastic dynamics that integrate noise over time (e.g., one can think of $\ves{\epsilon}$ as an Ornstein-Uhlenbeck process \citep{sarkka2019applied}) and is hence always correlated over time. 

In DFC, similar noise biases arise in the average updates of $W_i$. To reduce the relative impact of the noise bias on the weight update, the ratio $\|\ves{\delta}_i\|_2/\sigma^2$ must be big enough, hence strong error feedback is needed. In DFC, $\|\ves{\delta}_L\|_2$, and hence also the postsynaptic error term in the weight updates for $W_i$, scales with the target stepsize $\lambda$. Interestingly, this causes a trade-off to appear in DFC: on the one hand, $\lambda$ needs to be small such that the weight updates \eqref{eq:W_dynamics} approximate GN and MN optimization (the theorems used Taylor approximations which become exact for $\lambda \rightarrow 0$), and on the other hand, $\lambda$ needs to be big to prevent the forward weight updates from being buried in the noise bias.

A possible solution for removing the noise bias from the average forward weight updates is to either buffer the postsynaptic error term or the presynaptic input $\ve{r}_{i-1}$, or both (e.g., accumulating them or low-pass filtering them), before they are multiplied with each other to produce the weight update. This procedure would average the noise out in the signals, before they have the chance to correlate with each other in the weight update. Whether this procedure could correspond with biophysical mechanisms in a neuron is an interesting question for future work.



\section{Related work}\label{app:related_work}
 
Our learning theory analysis that connects DFC to Gauss-Newton (GN) optimization was inspired by three independent recent studies that, on the one hand, connect Target Propagation (TP) to GN optimization \citep{meulemans2020theoretical, bengio2020deriving} and, on the other hand, point to a possible connection between Dynamic Inversion (DI) and GN optimization \citep{podlaski2020biological}. There are however important distinctions between how DFC approximates GN and how TP and DI approximate GN. In the following subsections, we discuss these related lines of work in detail.

\subsection{Comparison of DFC to TP and variants}\label{app:subsec_theoretical_TP}

Recent work \citep{meulemans2020theoretical, bengio2020deriving} discovered that learning through inverses of the forward pathway can in certain cases lead to an approximation of GN optimization. Although this finding inspired our theoretical results on the CA capabilities of DFC, there are fundamental differences between DFC and TP.
The main conceptual difference between DFC and the variants of TP \citep{bengio2014auto, lee2015difference, meulemans2020theoretical, bengio2020deriving} is that DFC uses the combination of network dynamics and a controller to dynamically invert the forward pathway for CA, whereas TP and its variants learn parametric inverses of the forward pathway, encoded in the feedback weights. Although dynamic and parametric inversion seem closely related, they lead to major methodological and theoretical differences.

\paragraph{Methodological differences between DFC and TP.}

First, for TP and its variants, the task of approximating the inverse of the forward pathway is completely put onto the feedback weights, resulting in the need for a strict relation between the feedforward and feedback pathway at all times during training. DFC, in contrast, reuses the forward pathway to dynamically compute its inverse, resulting in a more flexible relation between the feedforward and feedback pathway, described by Condition \ref{con:Q_GN}. To the best of our knowledge, DFC is the first method that approximates a principled optimization method for feedforward neural networks of arbitrary dimensions, compatible with a wide range of feedback connectivity. The recent work of Bengio \citep{bengio2020deriving} iteratively improves the inverse and, hence, can compensate for imperfect parametric inverses. However, this method is developed only for invertible networks, which require all layers to have equal dimensions.
 
Second, DFC drives the hidden neural activations to target values simultaneously, hence letting `target activations' from upstream layers influence `target activations' from downstream layers. TP, in contrast, computes each target as a (pseudo)inverse of the output target independently. This is a subtle yet important difference between DFC and TP, which leads to significant theoretical differences, on which we will expand later. To gain intuition, consider the case where we update the weights of both DFC and TP to reach exactly the local layer targets. In TP, if we update the weights of a hidden layer to reach its target, all downstream layers will also reach their target without updating the weights. Hence, if we update all weights simultaneously, the output will overshoot its target. DFC, in contrast, takes the effect of the updated target values of upstream layers already into account, hence, when all weight updates are done simultaneously, the output target is reached exactly (in the linearized dynamics, c.f. Theorem \ref{theorem:MN_DFC}). 

Third, DFC needs significantly less external coordination compared to the recent TP variants. The new variants of TP with a link to GN \citep{meulemans2020theoretical} need highly coordinated noise phases for computing the Difference Reconstruction Loss (one separated noise phase for each layer). For DTP \citep{lee2015difference}, similar coordination is needed if noisy activations are used for computing the reconstruction loss, as proposed by the authors. The iterative variant of TP \citep{bengio2020deriving} needs coordination in propagating the target values, as the target iterations for a layer can only start when the iterations of the downstream layer have converged. As DFC uses dynamic inversion instead of parametric inversion, possible learning rules for the feedback weights do not need to use the Difference Reconstruction Loss \citep{meulemans2020theoretical} or variants thereof, opening the route to alternative, more biologically realistic learning rules. We propose a first feedback learning rule compatible with DFC, that makes use of noise and Hebbian learning, without the need for extensive external coordination (see also App. \ref{app:fb_learning_simultaneously} that merges feedforward and feedback weight training in a single-phase).

Finally, DFC uses a multi-compartment neuron model closely corresponding to recent models of cortical pyramidal neurons, to obtain plasticity rules fully local in space and time. Presently, it is unclear whether there exist similar neuron and network models for TP that result in plasticity rules local in time.

\paragraph{Theoretical differences between DFC and TP.}
First, computing layerwise inverses, as is done in TP \citep{bengio2014auto}, DTP \citep{lee2015difference}, and iterative TP \citep{bengio2020deriving}, can only be linked to GN for invertible networks but breaks down for non-invertible networks, as shown by Meulemans et al. \citep{meulemans2020theoretical}. Both DFC and the DRL variants of TP \citep{meulemans2020theoretical} establish a link to GN for both invertible and non-invertible feedforward networks of arbitrary dimensions. However, the DRL variants of TP are linked to a hybrid version of GN and gradient descent, whereas DFC, under appropriate conditions, is linked to pure GN optimization on the parameters. Our Theorems \ref{theorem:GN_TPDI} and \ref{theorem:MN_DFC} differ from the theoretical results on the DRL variants of TP \citep{meulemans2020theoretical} due to the fact that: (i) the DRL variants compute targets for the post-nonlinearity activations and the DFC target activations, $\mathbf{v}_i$, are pre-nonlinearity activations; and (ii) the DRL variants compute the targets for each layer independently, whereas DFC dynamically computes the targets while taking into account the changed target activations of other layers. We continue with expanding on this second point.

As explained intuitively before, TP and its variants compute each layer target independently from the other layer targets. Consequently, to link their variants of TP to GN optimization, Meulemans et al. \citep{meulemans2020theoretical} and Bengio \citep{bengio2020deriving} need to make a block-diagonal approximation of the GN curvature matrix, with each block corresponding to a single layer. As off-diagonal blocks are put to zero, influences of upstream target values on the downstream targets are ignored. The block-diagonal approximation of the GN curvature matrix was proposed in studies that used GN optimization to train deep neural networks with big minibatch sizes \citep{martens2015optimizing, botev2017practical}. However, similar to DFC, TP is connected to GN with a minibatch size of 1. In this case, the GN curvature matrix is of low rank, and a block-diagonal approximation of this matrix will change its rank and hence its properties. In the analysis of DFC, in contrast, we do not need to make this block-diagonal approximation, as the target activations, $\mathbf{v}_i$, influence each other. Consequently, DFC has a closer connection to GN optimization than the TP variants \citep{meulemans2020theoretical, bengio2020deriving}. 

Finally, DFC does not use a reconstruction loss to train the feedback weights but instead uses noise and Hebbian learning. 

\paragraph{Empirical comparison of DFC to TP and variants}\label{app:subsec_empirical_TP}
Table \ref{tab:tp_results} shows the results for DTP \citep{lee2015difference}, and DDTP-linear \citep{meulemans2020theoretical} (the best performing variant of TP in \citep{meulemans2020theoretical}) on MNIST, Fashion MNIST, MNIST-autoencoder, and MNIST (train), for the same architectures as used for Table \ref{tab:test_results}. 

\begin{table}[H]
\centering
\caption{The test error (MNIST, Fashion MNIST), test loss (MNIST-autoencoder), and training loss MNIST (train) for DTP and DDTP-linear. Same network architectures and settings as for Table \ref{tab:test_results}.}
\label{tab:tp_results}
\begin{tabular}{*5c}
\toprule
{} & MNIST & Fashion-MNIST & MNIST-autoencoder  & MNIST (train) \\
\midrule
DTP & $2.61^{\pm 0.13}\%$ & $11.26^{\pm 0.23}\%$ & $22.36^{\pm 0.59} \cdot 10^{-2}$ &$8.36^{\pm 4.09} \cdot 10^{-6}$ \\
DDTP-linear & $2.22^{\pm 0.22}\%$ & $10.84^{\pm 0.22}\%$ & $14.60^{\pm 0.10} \cdot 10^{-2}$ &$1.97^{\pm 0.70} \cdot 10^{-8}$ \\
\bottomrule
\end{tabular}
\end{table}

Comparing these results to the ones in Table \ref{tab:test_results}, we see that DFC outperforms DTP on all datasets and DDTP-linear on MNIST-autoencoder, while having similar performance on the other datasets. These encouraging results suggest that the closer connection of DFC to GN, when compared to the one of DDTP-linear to GN (see section \ref{app:subsec_theoretical_TP}), leads to practical improvements in performance in some more challenging datasets.

\subsection{Comparison of DFC to Dynamic Inversion}
Recent work introduced DI \citep{podlaski2020biological}, which, similar to DFC, dynamically inverts the forward pathway through the use of a controller. However, some fundamental differences between DI and DFC lead to various new desirable properties of DFC. First, DFC introduces a principled way to control all layers simultaneously, hence requiring less external control. Second, in contrast to DI, the learning rules in DFC are fully local in time. Third, DFC can train the feedback weights to continuously adapt themselves to the changing forward pathway, leading to more accurate CA. Finally, \citet{podlaski2020biological} only explored the link between DI and GN for linear one-hidden layer networks and it requires a block-diagonal approximation of the curvature matrix, similar to TP. Upon closer inspection, the link between DI and GN cannot be generalized to networks with multiple hidden layers of various sizes or nonlinear activation functions, in contrast to DFC. This is because the layerwise dynamical inversion in DI does not result in the pseudoinverses of $J_i = \frac{\partial \vr_L}{\partial \vr_i}$ since: (i) the pseudoinverse cannot be factorized over the layers \citep{meulemans2020theoretical}; and (ii) in nonlinear networks, the Jacobians are evaluated at a wrong value because DI transmits errors instead of controlled layer activations through the forward path of the network during the dynamical inversion phase.

\subsection{The core contributions of DFC}\label{app:subsec_orignality_and_significance}

In summary, we see that DFC merges various insights from different fields resulting in a novel biologically plausible CA technique with unique and interesting properties that transcend the sheer sum of its parts. To clarify the novelty of our work, we summarize here again the core contributions of DFC:
\begin{itemize}
    \item DFC extends the idea of using a feedback controller to adjust network activations to also provide CA to DNNs by using it to track the desired output target, opening a new route for designing principled CA methods for DNNs.
    \item To the best of our knowledge, DFC is the first method that approximates a principled optimization method for feedforward neural networks of arbitrary dimensions, while allowing for a wide and flexible range of feedback connectivity, in contrast to a single allowed feedback configuration.
    \item The learning rules of DFC for the forward and feedback weights are fully local both in time and space, in contrast to many other biologically plausible learning rules. Furthermore, DFC does not need highly specific connectivity motives nor tightly coordinated plasticity mechanisms and can have all weights plastic simultaneously, if the adaptations explained in appendix C.3 are used. 
    \item The multi-compartment neuron model needed for DFC naturally corresponds to recent multi-compartment models of pyramidal neurons.
\end{itemize}

\section{Simulations and algorithms of DFC}\label{app:simulation_and_algorithms}
In this section, we provide details on the simulation and algorithms used for DFC, DFC-SS, DFC-SSA and for training the feedback weights. 
\subsection{Simulating DFC and DFC-SS for training the forward weights}
For simulating the network dynamics \eqref{eq:network_dynamics} and controller dynamics \eqref{eq:controller_dynamics} without noise, we used the forward Euler method with some slight modifications. First, we implemented the controller dynamics \eqref{eq:controller_dynamics} as follows:
\begin{align} \label{eq_app:controller_dynamics_modified}
    \ve{u}(t) = \ve{u}^{\text{int}}(t) + k_p \ve{e}(t), \quad 
    \tau_u \ddt \ve{u}^{\text{int}}(t) = \ve{e}(t) - \tilde{\alpha} \ve{u}(t).
\end{align}
Note that we changed the leakage term from $\alpha \ve{u}^{\mathrm{int}}$ to $\tilde{\alpha} \ve{u}$, such that we have direct control over the hyperparameter $\tilde{\alpha}$ (the damping factor in Lemma \ref{lemma:steady_state_solution_system}) that is now independent of $k_p$. Note that both \eqref{eq:controller_dynamics} and \eqref{eq_app:controller_dynamics_modified} result in exactly the same dynamics for $\ve{u}(t)$, if $\tilde{\alpha}=\frac{\alpha}{1+k_p \alpha}$ and $\tau_u$ scaled by $\frac{\alpha}{1+k_p \alpha}$. Hence, this is just an implementation strategy to gain direct control over $\tilde{\alpha}$ as a hyperparameter independent from $k_p$.

Algorithm \ref{al:DFC_forward} provides the pseudo-code of our simulation of the network and controller dynamics during the training of the forward weights $W_i$ and biases $\ve{b}_i$. We use the forward Euler method \citep{sarkka2019applied} with stepsize $\Delta t$ and make two nuanced modifications. First, to better reflect the layered structure of the network, we use $\vff_i[k+1] = W_i\phi(\ve{v}_{i-1}[k+1]) + \ve{b}_i$ instead of $\vff_i[k+1] = W_i\phi(\ve{v}_{i-1}[k]) + \ve{b}_i$.\footnote{In the code repository, this modification to Euler's method is indicated with the command line argument \texttt{inst\_transmission}} For small stepsizes $\Delta t$, this modification has almost no effect. However, for larger stepsizes, the modification better reflects the underlying continuous dynamics with its layerwise structure. Second, using insights from discrete control theory, we use $\vfb_i[k+1] = Q_i \ve{u}[k+1]$ instead of $\vfb_i[k+1] = Q_i \ve{u}[k]$, such that the control error $\ve{e}[k]$ of the previous timestep is used to provide feedback, instead of the control error $\ve{e}[k-1]$ of two timesteps ago.\footnote{In the code repository, this modification to Euler's method is indicated with the command line argument \texttt{proactive\_controller}} Again, this modification has almost no effect for small stepsizes $\Delta t$, but better reflects the underlying continuous dynamics for bigger stepsizes. In our simulations, the stepsize $\Delta t$ that worked best for the experiments was small, hence, the discussed modifications had only minor effects on the simulation.

\begin{algorithm}[h]
Initialize layer activations and parameter update buffers:\\
\For{i in \text{range}(1,L)}{
$\vv_i[1] = \vv_i^-$\\
$\vr_i[1] = \vr_i^-$\\
$\uint[1] = 0$ \\
$\Delta W_i = 0 $ \\
$\Delta \ve{b}_i = 0$\\
}
\For{k in \text{range}(\text{1,$K_{\max}$})}{
Update controller:\\
$\ve{e}[k] = \ve{r}_L^* - \ve{r}_L[k]$ \\
$\uint[k+1] = \uint[k] + \frac{\Delta t}{\tau_u} (\ve{e}[k] - \tilde{\alpha} \ve{u}[k])$ \\
$\ve{u}[k+1] = \uint[k+1] + k_p \ve{e}[k] $ \\

Update network:\\
\For{i in \text{range}(1,L)}{
$\vff_i[k+1] = W_i\phi(\ve{v}_{i-1}[k+1]) + \ve{b}_i$\\
$\vfb_i[k+1] = Q_i \ve{u}[k+1]$\\
$\vv_i[k+1] = \vv_i[k] + \frac{\Delta t}{\tau_v} ( -\vv_i[k] + \vff_i[k+1] + \vfb_i[k+1])$ \\
$\vr_i[k+1] = \phi(\vv_i[k+1])$ \\
Buffer forward parameter updates:\\
$\Delta W_i = \Delta W_i + \big(\phi(\vv_i[k+1]) - \phi(\vff_i[k+1])\big)\vr_{i-1}[k+1]^T$ \\
$\Delta \ve{b}_i = \Delta \ve{b}_i + \phi(\vv_i[k+1]) - \phi(\vff_i[k+1])$ \\
}
}
Update forward parameters with $\Delta W_i/K_{\max}$ and $\Delta \ve{b}_i/K_{\max}$ and an optimizer of choice
\caption{Simulation of DFC for training the forward parameters.}
\label{al:DFC_forward}
\end{algorithm}

For DFC-SS, the same simulation strategy is used, with as only difference that the weight updates $\Delta W_i$ only use the network activations of the last simulation step (see Algorithm \ref{al:DFC_SS}). Finally, for DFC-SSA, we directly compute the steady-state solutions according to Lemma \ref{lemma:steady_state_solution_system} (see Algorithm \ref{al:DFC_SSA}).

\begin{algorithm}[h]
Initialize layer activations and parameter update buffers:\\
\For{i in \text{range}(1,L)}{
$\vv_i[1] = \vv_i^-$\\
$\vr_i[1] = \vr_i^-$\\
$\uint[1] = 0$ \\
}
\For{k in \text{range}(\text{1,$K_{\max}$})}{
Update controller:\\
$\ve{e}[k] = \ve{r}_L^* - \ve{r}_L[k]$ \\
$\uint[k+1] = \uint[k] + \frac{\Delta t}{\tau_u} (\ve{e}[k] - \tilde{\alpha} \ve{u}[k])$ \\
$\ve{u}[k+1] = \uint[k+1] + k_p \ve{e}[k] $ \\

Update network:\\
\For{i in \text{range}(1,L)}{
$\vff_i[k+1] = W_i\phi(\ve{v}_{i-1}[k+1]) + \ve{b}_i$\\
$\vfb_i[k+1] = Q_i \ve{u}[k+1]$\\
$\vv_i[k+1] = \vv_i[k] + \frac{\Delta t}{\tau_v} ( -\vv_i[k] + \vff_i[k+1] + \vfb_i[k+1])$ \\
$\vr_i[k+1] = \phi(\vv_i[k+1])$ \\
}
}
Compute forward parameter updates using the last simulation step:\\
$\Delta W_i = \big(\phi(\vv_i[K_{\max}]) - \phi(\vff_i[K_{\max}])\big)\vr_{i-1}[K_{\max}]^T$ \\
$\Delta \ve{b}_i = \phi(\vv_i[K_{\max}]) - \phi(\vff_i[K_{\max}])$ \\
Update forward parameters with $\Delta W_i$ and $\Delta \ve{b}_i$ and an optimizer of choice
\caption{Simulation of DFC-SS for training the forward parameters.}
\label{al:DFC_SS}
\end{algorithm}

\begin{algorithm}[h]
Compute the network Jacobian $J$ \\
$\vdl = \vr_L^* - \vr_L^-$ \\
Compute steady-state solution using Lemma \ref{lemma:steady_state_solution_system}:\\
$\ve{u}_{\sss} = \big(JQ + \tilde{\alpha}I\big)^{-1} \vdl$ \\
$\Delta \vv_{\sss} = Q \ve{u}_{\sss}$ \\

Split $\Delta \vv_{\sss}$ over the layers into $\Delta \vv_{i,\sss}$ \\
Compute steady-state network activations: \\
$\vr_{0,\sss} = \vr_0^-$ \\
\For{i in \text{range}(1,L)}{
$\vv_{i,\sss} = W_i \vr_{i-1, \sss} + \ve{b}_i + \Delta \vv_{i, \sss} $\\
$\vr_{i,\sss} = \phi(\vv_{i,\sss})$\\
}

Compute forward parameter updates using the analytical steady-state solutions:\\
$\Delta W_i = \big(\phi(\vv_{i,\sss}) - \phi(\vff_{i,\sss})\big)\vr_{i-1,\sss}^T$ \\
$\Delta \ve{b}_i = \phi(\vv_{i,\sss}) - \phi(\vff_{i,\sss})$ \\
Update forward parameters with $\Delta W_i$ and $\Delta \ve{b}_i$ and an optimizer of choice

\caption{DFC-SSA iteration for training the forward parameters.}
\label{al:DFC_SSA}
\end{algorithm}

\subsection{Simulating DFC with noisy dynamics for training the feedback weights}
For simulating the noisy dynamics during the training of the feedback weights, we use the Euler-Maruyama method \citep{sarkka2019applied}, which is the stochastic version of the forward Euler method. As discussed in App. \ref{app:fb_learning}, we let white noise $\ves{\xi}$ enter the dynamics of the feedback compartment and we now take a finite time constant $\taufb$ for the feedback compartment, as the instantaneous form with $\taufb \rightarrow 0$ (that we used for simulating the network dynamics without noise) is not well defined when noise enters the dynamics:
\begin{align} \label{eq_app:dynamics_fb_compartment_noisy}
    \taufb \ddt \vfb_i(t) = -\vfb_i(t) + Q_i \ve{u}(t) + \sigma \ves{\xi}_i.
\end{align}
The dynamics for the network then becomes 
\begin{align}
    \tau_v \ddt \vv_i(t) = -\vv_i(t) + W_i \vr_{i-1}(t) + \vfb_i(t),
\end{align}
and, as before, eq. \eqref{eq_app:controller_dynamics_modified} is taken for the controller dynamics. Using the Euler-Maruyama method \citep{sarkka2019applied}, the feedback compartment dynamics \eqref{eq_app:dynamics_fb_compartment_noisy} can be simulated as
\begin{align}
    \vfb_i[k+1] = \vfb_i[k] +  \frac{\Delta t}{\taufb}\big( -\vfb_i[k] + Q_i \ve{u}[k+1]\big) + \frac{\sqrt{\Delta t}}{\taufb} \sigma \Delta \ves{\xi}_i, \quad \Delta \ves{\xi}_i \sim \mathcal{N}(0,I).
\end{align}
As all other dynamical equations do not have noise, their simulation remains equivalent to the simulation with the forward Euler method. Algorithm \ref{al:DFC_feedback} provides the pseudo code of the simulation of DFC during the feedback weight training phase.

\begin{algorithm}[h]
Initialize layer activations and parameter update buffers:\\
\For{i in \text{range}(1,L)}{
$\vv_i[1] = \vv_i^-$\\
$\vr_i[1] = \vr_i^-$\\
$\uint[1] = 0$ \\
$\Delta Q_i = 0 $ \\
}
\For{k in \text{range}(\text{1,$K_{\max}$})}{
Update controller:\\
$\ve{e}[k] = \ve{r}_L^- - \ve{r}_L[k]$ \\
$\uint[k+1] = \uint[k] + \frac{\Delta t}{\tau_u} (\ve{e}[k] - \tilde{\alpha} \ve{u}[k])$ \\
$\ve{u}[k+1] = \uint[k+1] + k_p \ve{e}[k] $ \\

Update network:\\
\For{i in \text{range}(1,L)}{
$\vff_i[k+1] = W_i\phi(\ve{v}_{i-1}[k+1]) + \ve{b}_i$\\
Sample noise and let it enter in the feedback compartment with non-instantaneous dynamics:\\
$\Delta \ves{\xi}_i \sim \mathcal{N}(0,I)$ \\
$\vfb_i[k+1] = \vfb_i[k] +  \frac{\Delta t}{\taufb}\big( -\vfb_i[k] + Q_i \ve{u}[k+1]\big) + \frac{\sqrt{\Delta t}}{\taufb} \sigma \Delta \ves{\xi}_i$ \\
$\vv_i[k+1] = \vv_i[k] + \frac{\Delta t}{\tau_v} ( -\vv_i[k] + \vff_i[k+1] + \vfb_i[k+1])$ \\
$\vr_i[k+1] = \phi(\vv_i[k+1])$ \\
Buffer feedback weight updates:\\
$\Delta Q_i = \Delta Q_i - \vfb_i[k] \ve{u}[k+1] - \beta Q_i$ \\
}
}
Update feedback parameters with $\Delta Q_i/K_{\max}$ and an optimizer of choice
\caption{Simulation of DFC for training the feedback weights.}
\label{al:DFC_feedback}
\end{algorithm}

\section{Experiments} \label{app:experiments}
\subsection{Description of the alignment measures}\label{app:alignment_measures}
In this section, we describe the alignment measures used in Fig. \ref{fig:toy_experiment_tanh} in detail. 

\paragraph{Condition 2.} Fig. \ref{fig:toy_experiment_tanh}A describes how well the network satisfies Condition \ref{con:Q_GN}. For this, we project $Q$ onto the column space of $J^T$, for which we use a projection matrix $P_{J^T}$:
\begin{align}
    P_{J^T} Q = J^T (JJ^T)^{-1} J Q.
\end{align}
Then, we compare the Frobenius norm of the projection of $Q$ with the norm of $Q$, via its ratio:
\begin{align}
    \mathrm{ratio}_{\mathrm{Con2}} = \frac{\|P_{J^T} Q\|_F}{\|Q\|_F}.
\end{align}
Notice that a $\mathrm{ratio}_{\mathrm{Con2}}=1$ indicates that the column space of $Q$ lies fully inside the column space of $J^T$, hence indicating that Condition \ref{con:Q_GN} is satisfied.\footnote{Note that in degenerate cases, $Q$ could be lower rank and still have $\mathrm{ratio}_{\mathrm{Con2}}=1$ if its (reduced) column space lies inside the column space of $J^T$. As $Q$ is a skinny matrix, we assume it is always of full rank and do not consider this degenerate scenario.} At the opposite extreme, $\mathrm{ratio}_{\mathrm{Con2}}=0$ indicates that the column space of Q is orthogonal on the column space of $J^T$.

\paragraph{Condition 1.} Fig. \ref{fig:toy_experiment_tanh}C describes how well the network satisfies Condition \ref{con:magnitude_r}. This condition states that all layers (except the output layer) have an equal $L^2$ norm. To measure how well Condition \ref{con:magnitude_r} is satisfied, we compute the standard deviation of the layer norms over the layers, and normalize it by the average layer norm:
\begin{align}
    \mathrm{ratio}_{\mathrm{Con1}} &= \frac{\frac{1}{L}\sum_{i=0}^L \big(\|\vr_i\|_2 - \mathrm{mean}(\|\vr\|_2)\big)^2}{\mathrm{mean}(\|\vr\|_2)}\\
    \mathrm{mean}(\|\vr\|_2) &= \frac{1}{L}\sum_{i=0}^L\|\vr_i\|_2
\end{align}
We take $\vr_i = \vr_i^-$ to compute this measure, but other values of $\vr_i$ during the dynamics would also work, as they remain close together for a small target stepsize $\lambda$. Now, notice that $\mathrm{ratio}_{\mathrm{Con1}}=0$ indicates perfect compliance with Condition \ref{con:magnitude_r}, as then all layers have the same norm, and $\mathrm{ratio}_{\mathrm{Con1}}=1$ indicates that the layer norms vary by $\mathrm{mean}(\|\vr\|_2)$ on average, hence indicating that Condition \ref{con:magnitude_r} is not at all satisfied.

\paragraph{Stability measure.} Fig. \ref{fig:toy_experiment_tanh}E describes the stability of DFC during training. For this, we plot the maximum real part of the eigenvalues of the total system matrix $A_{PI}$ around the steady state (see eq. \eqref{eq_app:a_pi}), which describes the dynamics of DFC around the steady state (incorporating $k_p$ and the actual time constants, in contrast to Condition \ref{con:local_stability}).  

\paragraph{Alignment with MN updates.} Fig. \ref{fig:toy_experiment_tanh}B describes the alignment of the DFC updates with the ideal weighted MN updates. The MN updates are computed as follows:
\begin{align}\label{eq_app:MN_updates}
    \Delta \bar{W}^{\mathrm{MN}} = R J^{\dagger} \vdl,
\end{align}

with $R$ defined in eq. \eqref{eq:R_matrix} and $\bar{W}$ the concatenated vectorized form of all weights $W_i$. For the alignment measurements in the computer vision experiments (see Section \ref{sec:alignment_computer_vision}) we use a damped variant of the MN updates:
\begin{align}\label{eq_app:damped_MN_updates}
    \Delta \bar{W}^{\mathrm{MN}} = R J^{T}(JJ^T+\gamma I)^{-1} \vdl,
\end{align}
with $\gamma$ some positive damping constant.
The damping constant is needed to incorporate the damping effect of the leakage constant, $\alpha$, into the dynamical inversion, but also to reflect an \textit{implicit damping} effect. \citet{meulemans2020theoretical} showed that introducing a higher damping constant, $\gamma$, in the pseudoinverse \eqref{eq_app:MN_updates} reflected better the updates made by TP, which uses learned inverses. We found empirically that a higher damping constant, $\gamma$, also reflects better the updates made by DFC. Using a similar argumentation, we hypothesize that this implicit damping in DFC originates from the fact that, in nonlinear networks, $J$ changes for each batch sample and hence $Q$ cannot satisfy Condition \ref{con:Q_GN} for each batch sample. Consequently, $Q$ tries to satisfy Condition \ref{con:Q_GN} as good as possible for all batch samples, but does not satisfy it perfectly, resulting in a phenomenon that can be partially described by implicit damping.

\paragraph{Alignment with GN updates.} Fig. \ref{fig:toy_experiment_tanh}D describes the alignment of the DFC updates with the ideal GN updates. The GN updates are computed as follows:
\begin{align}\label{eq_app:GN_updates}
    \Delta \bar{W}^{\mathrm{GN}} = J_{\bar{W}}^{\dagger}\vdl,
\end{align}
with $J_{\bar{W}} = \frac{\partial \vr_L^-}{\partial \bar{W}}$, evaluated at the feedforward activations $\vr_i^-$. Similarly to the MN updates, we also introduce a damped variant of the GN updates, which is used in the computer vision alignment experiments (Section \ref{sec:alignment_computer_vision}):
\begin{align}\label{eq_app:damped_GN_updates}
    \Delta \bar{W}^{\mathrm{GN}} = J_{\bar{W}}^T(J_{\bar{W}}J_{\bar{W}}^T + \gamma I)^{-1}\vdl,
\end{align}
where the damping constants, $\gamma$ and $\alpha$, reflect the leakage constant and the implicit damping effects, respectively.

\paragraph{Alignment with DFC-SSA updates.} Finally, Fig. \ref{fig:toy_experiment_tanh}F describes the alignment of the DFC updates with the DFC-SSA updates which use the linearized analytical steady-state solution of the dynamics. The DFC-SSA updates are computed as follows (see also Algorithm \ref{al:DFC_SSA}):
\begin{align}\label{eq_app:DFC_SSA_updates}
    \Delta \bar{W}^{\mathrm{SSA}} = R_{\sss} Q(JQ + \tilde{\alpha} I)^{-1} \vdl,
\end{align}
with $R_{\sss}$ defined in eq. \eqref{eq:R_matrix} but with the steady-state values $\ve{r}_{i,\sss}$ instead of $\vr_i^-$.

\subsection{Description of training}\label{app:description_training}

\paragraph{Training phases.}
We iterate between one epoch of training the forward weights and $X$ epochs of training the feedback weights, with $X\in [1,2,3]$ a hyperparameter. The extra epochs for training the feedback weights enable the feedback weights to better satisfy Conditions \ref{con:Q_GN} and \ref{con:local_stability} when the forward weights are changing fast (e.g., during early training), and slightly improve the performance of DFC. Before the training starts, we pre-train the feedback weights for 10 epochs, starting from a random configuration, to ensure that the network is stable when the training begins and Condition \ref{con:Q_GN} is approximately satisfied. 

\paragraph{Student-teacher toy regression.} For the toy experiments of Fig. \ref{fig:toy_experiment_tanh}, we use the student-teacher regression paradigm. Here, a randomly initialized teacher generates a synthetic regression dataset using random inputs. A separate randomly initialized student is then trained on this synthetic dataset. We used more hidden layers and neurons for the teacher network compared to the student network, such that the student network cannot get `lucky' by being initialized close to the teacher network.

\paragraph{Optimizer.} In student-teacher toy regression experiments, we use vanilla SGD without momentum as an optimizer. In the computer vision experiments, we use a separate Adam optimizer \citep{kingma2014adam} for the forward and feedback weights, as this improves training results compared to vanilla SGD. As Adam was designed for BP updates, it will likely not be an optimal optimizer for DFC, which uses MN updates. An interesting future research direction is to design new optimizers that are tailored towards the MN updates of DFC, to further improve its performance. We used gradient clipping for all DFC experiments to prevent too large updates when the inverse of $J$ is poorly conditioned.

\paragraph{Training length and reported test results.} For the classification experiments, we used 100 epochs of training for the forward weights (and a corresponding amount of feedback training epochs, depending on $X$). As the autoencoder experiment was more resource-intensive, we trained the models for only 25 epochs there, as this was sufficient for getting near-perfect autoencoding performance when visually inspected (see Fig. \ref{fig:autoencoding_im}). For all experiments, we split the 60000 training samples into a validation set of 5000 samples and a training set of 55000 samples. The hyperparameter searches are done based on the validation accuracy (validation loss for MNIST-autoencoder and train loss for MNIST-train) and we report the test results corresponding to the epoch with best validation results in Table \ref{tab:test_results}.

\paragraph{Weight initializations.}
All network weights are initialized with the Glorot-Bengio normal initialization \citep{glorot2010understanding}, except when stated otherwise.

\paragraph{Initialization of the fixed feedback weights.}
For the variants of DFC with fixed feedback weights, we use the following initialization:
\begin{align}\label{eq_app:wpi}
    Q_i &= \prod_{k=i+1}^L W_k^T, \quad 1\leq i \leq L-1 \\
    Q_L &= I
\end{align}
For $\tanh$ networks, this initialization approximately satisfies Conditions \ref{con:Q_GN} and \ref{con:local_stability} at the beginning of training. This is because $Q$ will approximate $J^T$, as the forward weights are initialized by Glorot-Bengio normal initialization \citep{glorot2010understanding}, and the network will consequently be in the approximate linear regime of the $\tanh$ nonlinearities. 

\paragraph{Freeze $Q_L$.} For the MNIST-autoencoder experiments, we fixed the output feedback weights to $Q_L = I$, i.e., one-to-one connections between $\vr_L and \ve{u}$. As we did not train $Q_L$, we also did not introduce noise in the output layer during the training of the feedback weights. Freezing $Q_L$ prevents the noise in the high-dimensional output layer from burying the noise information originating from the small bottleneck layer and hence enabling better feedback weight training. This measure modestly improved the performance of DFC on MNIST-autoencoder (without fixing $Q_L$, the performance of all DFC variants was around 0.13 test loss -- c.f. Table \ref{tab:test_results} -- which is not a big decrease in performance). Freezing $Q_L$ does not give us any advantages over BP or DFA, as these methods implicitly assume to have direct access to the output error, i.e., also having fixed feedback connections between the error neurons and output neurons equal to the identity matrix. We provided the option to freeze $Q_L$ into the hyperparameter searches of all experiments but this is not necessary for optimal performance of DFC in general, as this option was not always selected by the hyperparameter searches.

\paragraph{Double precision.} We noticed that the standard data type \texttt{float32} of PyTorch \citep{paszke2017automatic} caused numerical errors to appear during the last epochs of training when the output error $\vdl$ is very small. For small $\vdl$, the difference $\phi(\vv_i) - \phi(\vff_i)$ in the forward weight updates \eqref{eq:W_dynamics} is very small and can result in numerical underflow. We solved this numerical problem by using \texttt{float64} (double precision) as data type.

\subsection{Architecture details}
We use fully connected (FC) architectures for all experiments. 
\begin{itemize}
    \item Classification experiments (MNIST, Fashion-MNIST, MNIST-train): 3 FC hidden layers of 256 neurons with $\tanh$ nonlinearity and 1 softmax output layer of 10 neurons. 
    \item MNIST-autoencoder: 256-32-256 FC hidden layers with tanh-linear-tanh nonlinearities and a linear output layer of 784 neurons.
    \item Student-teacher regression (Fig. \ref{fig:toy_experiment_tanh}): 2 FC hidden layers of 10 neurons and tanh nonlinearities, a linear output layer of 5 neurons, and input dimension 15. 
\end{itemize}

\paragraph{Absorbing softmax into the cross-entropy loss.}
For the classification experiments (MNIST, Fashion-MNIST, and MNIST-train), we used a softmax output nonlinearity in combination with the cross-entropy loss. As the softmax nonlinearity and cross-entropy loss cancel out each others curvatures originating from the exponential and log terms, respectively, it is best to combine them into one output loss:
\begin{align}
    \Lagr^{\text{combined}} = -\sum_{b=1}^B \ve{y}^{(b)T}\log\big(\text{softmax}(\ve{r}_L^{(b)})\big),
\end{align}
with $\ve{y}^{(b)}$ the one-hot vector representing the class label of sample $b$, and $\log$ the element-wise logarithm. Now, as the softmax is absorbed into the loss function, the network output $\ve{r}_L$ can be taken linear and the output target is computed with eq. \eqref{eq:output_target} using $\Lagr^{\text{combined}}$.

\subsection{Hyperparameter searches}
All hyperparameter searches were based on the best validation accuracy (best validation loss for MNIST-autoencoder and last train loss for MNIST-train) over all training epochs, using 5000 validation datasamples extracted from the training set. We use the Tree of Parzen Estimators hyperparameter optimization algorithm \citep{bergstra2011algorithms} based on the Hyperopt \citep{bergstra2013hyperopt} and Ray Tune \citep{liaw2018tune} Python libraries.

Due to the heavy computational cost of simulating DFC, we performed only hyperparameter searches for DFC-SSA, DFC-SSA (fixed), BP and DFA (200 hyperparameter samples for all methods). We used the hyperparameters found for DFC-SSA and DFC-SSA (fixed) for DFC and DFC-SS, and DFC (fixed) and DFC-SS (fixed), respectively, together with standard simulation hyperparameters for the forward weight training that proved to work well ($k_p=2$, $\tau_u=1$, $\tau_v=0.2$, forward Euler stepsize $\Delta t=0.02$ and 1000 simulation steps). 

Tables \ref{tab:hp_symbols} and \ref{tab:hp_DFC} provide the hyperparameters and search intervals that we used for DFC-SSA in all experiments. We included the simulation hyperparameters for the feedback training phase in the search to prevent us from fine-tuning the simulations by hand. Note that we use different simulation hyperparameters for the forward training phase (see paragraph above) and the feedback training phase (see Table \ref{tab:hp_DFC}). This is because the simulation of the feedback training phase needs a small stepsize, $\Delta t_{\mathrm{fb}}$, and a small network time constant, $\tau_{v}$, to properly simulate the stochastic dynamics. For the forward phase, however, we need to simulate over a much longer time interval, so taking small $\Delta t$ and $\tau_{v}$\footnote{The simulation stepsize, $\Delta t$, needs to be smaller than the time constants.} would be too resource-intensive. When using $k_p=2$, $\tau_u=1$, and $\tau_v=0.2$ during the simulation of the forward training phase, much bigger timesteps such as $\Delta t=0.02$ can be used. Note that these simulation parameters do not change the steady state of the controller and network, as $\tilde{\alpha}$ is independent from $k_p$ in our implementation. We also differentiated $\tilde{\alpha}$ in the forward training phase from $\tilde{\alpha}_{\mathrm{fb}}$ in the feedback training phase, as the theory predicted that a bigger leakage constant is needed during the feedback training phase in the first epochs. However, toy simulations in Section \ref{app:fb_learning} suggest that the feedback learning also works for smaller $\tilde{\alpha}$, which we did not explore in the computer vision experiments. Finally, we used $\mathrm{lr}\cdot \lambda$ and $\lambda$ as hyperparameters in the search instead of $\mathrm{lr}$ and $\lambda$ separately, as $\mathrm{lr}$ and $\lambda$ have a similar influence on the magnitude of the forward parameter updates. The specific hyperparameter configurations for all experiments can be found in our codebase.\footnote{PyTorch implementation of all methods is available at \url{https://github.com/meulemansalex/deep_feedback_control}.}

\begin{table}[H]
\centering
\vspace*{-0.4cm}
\caption{Hyperparameter symbols and meaning.}
\label{tab:hp_symbols}
\begin{tabular}{c|l}
\toprule
Symbol & \multicolumn{1}{|c}{Hyperparameter}\\
\midrule
$\text{lr}$ & Learning rate of the Adam optimizer for the forward parameters\\
$\epsilon$ & Parameter of the Adam optimizer for the forward parameters\\
$\tilde{\alpha}$ & Leakage term of the controller dynamics \eqref{eq_app:controller_dynamics_modified} during training of the forward weights\\
$\lambda$ & Output target stepsize (see \eqref{eq:output_target})\\
$\text{lr}_{\text{fb}}$ & Learning rate of the Adam optimizer for the feedback parameters\\
$\text{lr}_{\text{fb, pre-train}}$ & Learning rate of the Adam optimizer for the feedback parameters during pre-training\\
$\epsilon_{\text{fb}}$ & Parameter of the Adam optimizer for the feedback parameters\\
$\tilde{\alpha}_{\text{fb}}$ & Leakage term of the controller dynamics \eqref{eq_app:controller_dynamics_modified} during the training of the feedback weights \\
$\beta$ & Weight decay term for the feedback weights \\
$k_{p,\text{fb}}$ & Proportional control constant during training of the feedback weights\\
$\tau_v$ & Time constant of the network dynamics, during training of the feedback weights \\
$\taufb$ & Time constant of the feedback compartment during feedback weight training \\
$\sigma$ & Standard deviation of the noise perturbation during training of the feedback weights\\
$X$ & Number of feedback training epochs after each forward training epoch\\
$\Delta t_{\mathrm{fb}}$ & Stepsize for simulating the dynamics during feedback weight training\\
$t_{\max, \mathrm{fb}}$ & Number of simulation steps during feedback weight training\\
freeze\_$Q_L$ & Flag for fixing the feedback weights $Q_L$ to the identity matrix\\
\bottomrule
\end{tabular}
\vspace*{-0.4cm}
\end{table}

\begin{table}[H]
\centering
\vspace*{-0.4cm}
\caption{Hyperparameter search intervals for DFC.}
\label{tab:hp_DFC}
\begin{tabular}{c|c}
\toprule
Hyperparameter & Search interval   \\
\midrule
$\text{lr} \cdot \lambda$ & $[10^{-7}: 10^{-4}]$ \\
$\epsilon$ & $[10^{-8}: 10^{-5}]$\\
$\tilde{\alpha}$ & $[10^{-5}: 10^{-1}]$\\
$\lambda$ &  $[10^{-3}: 10^{-1}]$\\
$\text{lr}_{\text{fb}}$ & $[1\cdot 10^{-6}: 5\cdot 10^{-4}]$\\
$\text{lr}_{\text{fb, pre-train}}$ & $[5\cdot 10^{-5}: 1\cdot 10^{-3}]$\\
$\epsilon_{\text{fb}}$ & $[10^{-8}: 10^{-5}]$\\
$\tilde{\alpha}_{\text{fb}}$ & $[0.2: 1.]$\\
$\beta$ & $\{0;10^{-5};10^{-3};10^{-1}\}$ \\
$k_{p,\text{fb}}$ & $[0: 0.2]$\\
$\tau_v$ & $[5\cdot 10^{-3}: 1.5\cdot 10^{-2}]$\\
$\taufb$ & $[0.1: 0.5]$\\
$\sigma$ & $[10^{-3}: 10^{-1}]$\\
$X$ & $\{1;2;3\}$ \\
$\Delta t_{\mathrm{fb}}$ & $[1\cdot 10^{-3}: 5\cdot 10^{-3}]$\\
$t_{\max, \mathrm{fb}}$ & $[30: 60]$\\
freeze\_$Q_L$ & $\{\mathrm{True}; \mathrm{False}\}$\\
\bottomrule
\end{tabular}
\vspace*{-0.4cm}
\end{table}

\begin{table}[H]
\centering
\vspace*{-0.4cm}
\caption{Hyperparameter search intervals for BP and DFA.}
\label{tab:hp_BP}
\begin{tabular}{c|c}
\toprule
Hyperparameter   & Search interval   \\
\midrule
$\text{lr}$ & $[10^{-8}: 10^{-2}]$ \\
$\epsilon$ & $[10^{-8}: 10^{-5}]$\\
\bottomrule
\end{tabular}
\vspace*{-0.4cm}
\end{table}

\subsection{Extended experimental results}
In this section, we provide extra experimental results accompanying the results of Section \ref{sec:experiments}.

\subsubsection{Training losses of the computer vision experiments}
Table \ref{tab:loss_train_results} provides the best training loss over all epochs for all the considered computer vision experiments. Comparing the train losses with the test performances in Table \ref{tab:test_results}, shows that good test performance is not only caused by good optimization properties (i.e., low train loss) but also by other mechanisms, such as implicit regularization. The distinction is most pronounced in the results for MNIST. These results highlight the need to disentangle optimization from implicit regularization mechanisms to study the learning properties of DFC, which we do in the MNIST-train experiments provided in Table \ref{tab:test_results}.

\begin{table}[h]
\centering
\caption[Train errors]{Best training loss over a training of 100 epochs (classification) or 25 epochs (autoencoder). We use the Adam optimizer \citep{kingma2014adam}. Architectures: 3x256 fully connected (FC) tanh hidden layers and softmax output (classification), 256-32-256 FC hidden layers for autoencoder MNIST with tanh-linear-tanh nonlinearities, and a linear output. Mean $\pm$ std (5 random seeds).}
\label{tab:loss_train_results}
\begin{tabular}{*5c}
\toprule

{}  & MNIST & Fashion-MNIST & MNIST-autoencoder \\
\midrule
BP & $7.51^{\pm 1.06}\cdot 10^{-4}$ & $1.19^{\pm 0.064}\cdot 10^{-2}$ &  $9.57^{\pm0.080}\cdot 10^{-2}$ \\
\midrule
DFC & $1.93^{\pm 3.83}\cdot 10^{-4}$ & $8.84^{\pm 0.87}\cdot 10^{-2}$ & $1.16^{\pm 0.023}\cdot 10^{-1}$ \\
DFC-SSA & $2.51^{\pm 1.76}\cdot 10^{-5}$ & $1.74^{\pm 0.037}\cdot 10^{-1}$ & $1.16^{\pm 0.009}\cdot 10^{-1}$ \\
DFC-SS & $7.71^{\pm 5.04}\cdot 10^{-6}$ & $1.72^{\pm 0.022}\cdot 10^{-1}$ & $1.14^{\pm 0.006}\cdot 10^{-1}$\\
DFC (fixed) & $9.04^{\pm 3.87}\cdot 10^{-4}$ & $2.34^{\pm 0.46}\cdot 10^{-1}$ & $3.34^{\pm 0.06}\cdot 10^{-1}$\\
DFC-SSA (fixed) & $1.32^{\pm 0.22}\cdot 10^{-3}$ & $1.69^{\pm 0.24}\cdot 10^{-1}$ & $3.20^{\pm 0.041}\cdot 10^{-1}$\\
DFC-SS (fixed) & $1.66^{\pm 0.44}\cdot 10^{-3}$ & $1.30^{\pm 0.11}\cdot 10^{-1}$ & $3.23^{\pm 0.04}\cdot 10^{-1}$\\
DFA & $3.59^{\pm 0.14}\cdot 10^{-4}$ & $6.43^{\pm 0.37}\cdot 10^{-2}$ & $3.05^{\pm 0.021}\cdot 10^{-1}$\\
\bottomrule
\end{tabular}
\vspace*{-0.2cm}
\end{table}

\subsubsection{Alignment plots for the toy experiment}
Here, we show the alignment of the methods used in the toy experiments of Fig. \ref{fig:toy_experiment_tanh} with MN updates and compare it with the alignment with BP updates. We plot the alignment angles per layer to investigate whether the alignment differs between layers. Fig. \ref{fig:experiment_gnt_angles} shows the alignment of all methods with the damped MN angles and Fig. \ref{fig:experiment_bp_angles} with the BP angles. We see clearly that the alignment with MN angles is much better for the DFC variants with trained feedback weights compared to the alignment with BP angles, hence indicating that DFC uses a fundamentally different approach to learning, compared to BP, and thereby confirming the theory. 

\subsubsection{Alignment plots for computer vision experiments}\label{sec:alignment_computer_vision}
Figures \ref{fig:mnist_gnt_angles} and \ref{fig:mnist_bp_angles} show the alignment of all methods with MN and BP updates, respectively. In contrast to the toy experiments in the previous section, now the alignment with BP is much closer to the alignment with MN updates. There are two main reasons for this. First, the classification networks we used have big hidden layers and a small output layer. In this case, the network Jacobian $J$ has many rows and only very few columns, which causes $J^{\dagger}$ to approximately align with $J^{T}$ (see among others Theorem S12 in \citet{meulemans2020theoretical}). Hence, the BP updates will also approximately align with the MN updates, explaining the better alignment with BP updates on MNIST compared to the toy experiments. Secondly, due to the nonlinearity of the network, $J$ changes for each datasample and $Q$ cannot satisfy Condition \ref{con:Q_GN} exactly for all datasamples. We try to model this effect by introducing a higher damping constant, $\gamma=1$, for computing the ideal damped MN updates (see Section \ref{app:alignment_measures}). However, this higher damping constant is not a perfect model for the phenomena occurring. Consequently, the alignment of DFC with the damped MN updates is suboptimal and a better alignment could be obtained by introducing other variants of MN updates that more accurately describe the behavior of DFC on nonlinear networks.\footnote{Now, we perform a small grid-search to find a $\gamma \in \{0, 10^{-5}, 10^{-4}, 10^{-3}, 10^{-2}, 10^{-1}, 1, 10\}$ that best aligns with the DFC and DFA updates after 3 epochs of training. As this is a very coarse-grained approach, better alignment angles with damped MN updates could be obtained by a more fine-tuned approach for finding an optimal $\gamma$.} Note that nonetheless, the alignment with MN updates is better compared to the alignment with BP updates. 

Surprisingly, for Fashion-MNIST and MNIST-autoencoder, the DFC updates in the last and penultimate layer align better with BP than with MN updates (see Figures \ref{fig:fashion_mnist_bp_angles}-\ref{fig:mnist_auto_gnt_angles}). One notable difference between the configurations used for MNIST on the one hand and Fashion-MNIST and MNIST-autoencoder on the other hand, is that the hyperparameter search selected for the latter two to fix the output feedback weights $Q_L$ to the identity matrix (see Section \ref{app:description_training} for a description and discussion). This freezing of the output feedback weights slightly improved the performance of the DFC methods. Freezing $Q_L$ to the identity matrix explains why the output weight updates align closely with BP, as the postsynaptic plasticity signal is now an integrated plus proportional version of the output error. However, it is surprising that the alignment in the penultimate layer is also changed significantly. We hypothesize that this is due to the fact that the feedback learning rule \eqref{eq:Q_dynamics} was designed for learning all feedback weights (leading to Theorem \ref{theorem:fb_learning_simplified}) and that freezing $Q_L$ breaks this assumption. However, extra investigation is needed to fully understand the occurring phenomena.

\clearpage
\begin{sidewaysfigure}[h]    
    \centering
    \begin{subfigure}{\linewidth}
        \includegraphics[width=\linewidth]{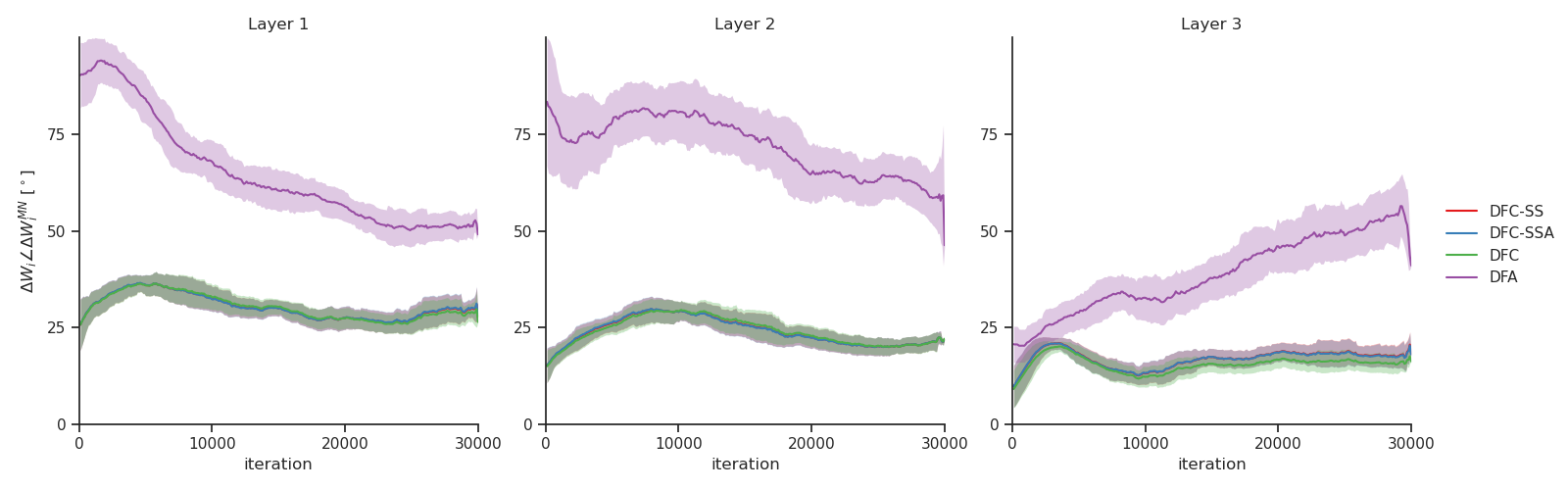}
        \caption{Angles between $\Delta W_i$ and the damped MN updates.}
    \end{subfigure}
     \begin{subfigure}{\linewidth}
        \includegraphics[width=\linewidth]{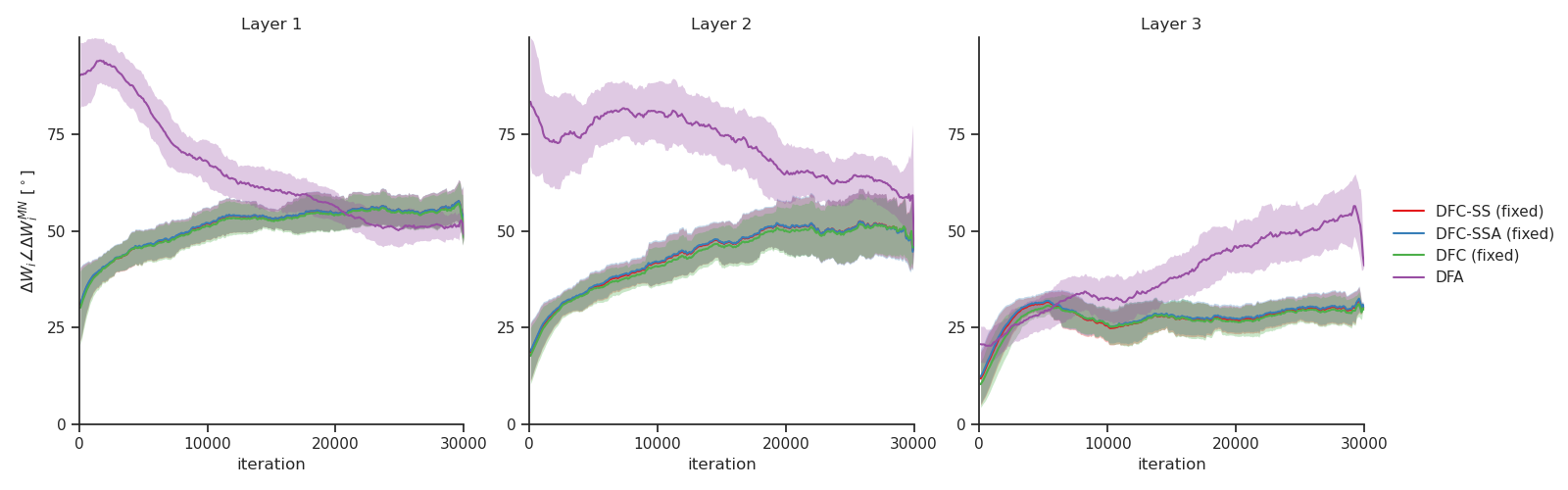}
        \caption{Angles between $\Delta W_i$ with fixed feedback weights and the damped MN updates.}
    \end{subfigure}
    \caption{Angles between the damped MN updates with $\gamma=0.1$ \eqref{eq_app:MN_updates} and the ones computed by DFC-SS, DFC-SSA, DFC, and DFA, plotted for all hidden layers with (a) learned feedback weights and (b) fixed feedback weights on the toy experiment explained in Figure \ref{fig:toy_experiment_tanh}. A window-average is plotted together with the window-std (shade). The x-axis iterations corresponds to the minibatches processed. The curves of DFC, DFC-SS, and DFC-SSA overlap in some of the plots.}
    \label{fig:experiment_gnt_angles}
\end{sidewaysfigure}

\clearpage
\begin{sidewaysfigure}[h]
    \centering
    \begin{subfigure}{\linewidth}
        \includegraphics[width=\linewidth]{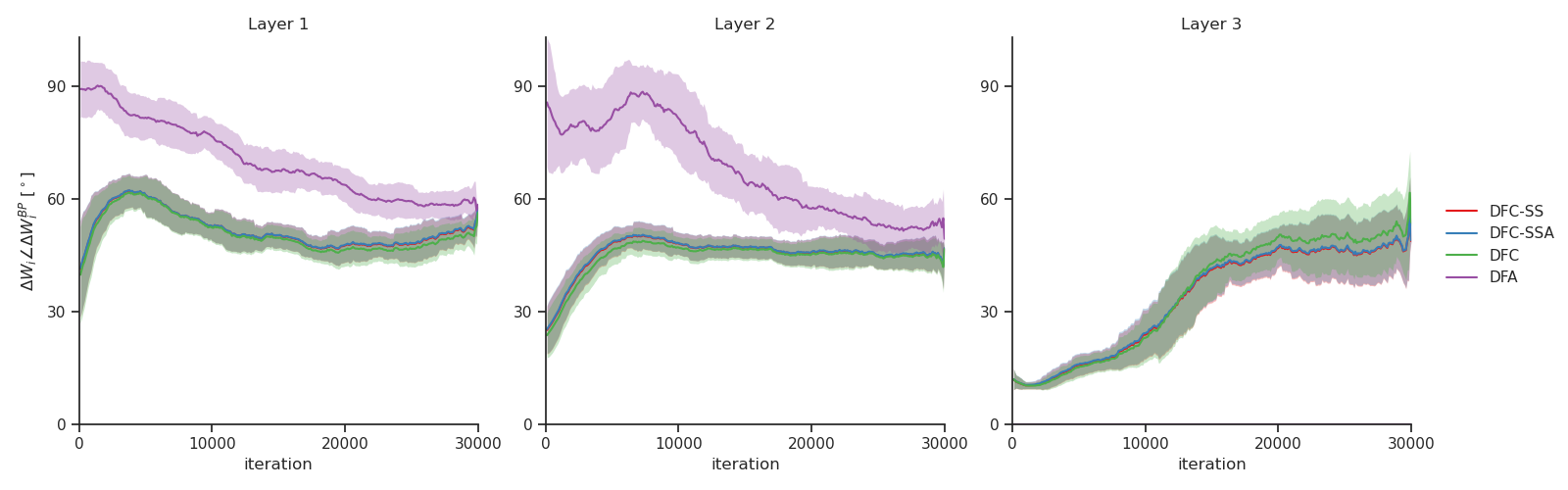}
        \caption{Angles between $\Delta W_i$ and the BP updates.}
    \end{subfigure}
     \begin{subfigure}{\linewidth}
        \includegraphics[width=\linewidth]{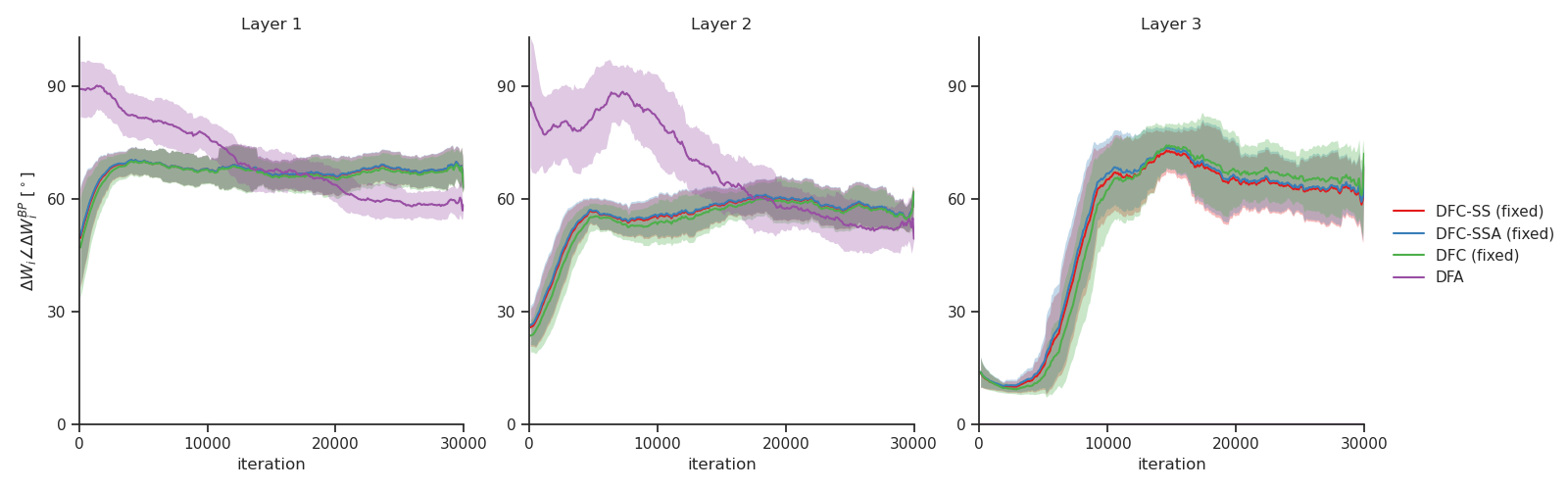}
        \caption{Angles between the BP updates and $\Delta W_i$ with fixed feedback weights.}
    \end{subfigure}
    \caption{Angles between the weight updates $\Delta W_i$ computed by BP and the ones computed by DFC-SS, DFC-SSA, DFC, and DFA, plotted for all hidden layers with (a) learned feedback weights (b) fixed feedback weights on the toy experiment explained in Figure \ref{fig:toy_experiment_tanh}. A window-average is plotted together with the window-std (shade). The x-axis iterations corresponds to the minibatches processed. The curves of DFC, DFC-SS and DFC-SSA overlap in some of the plots.}
    \label{fig:experiment_bp_angles}
\end{sidewaysfigure}

\clearpage
\begin{sidewaysfigure}[h]    
    \centering
    \begin{subfigure}{\linewidth}
        \includegraphics[width=\linewidth]{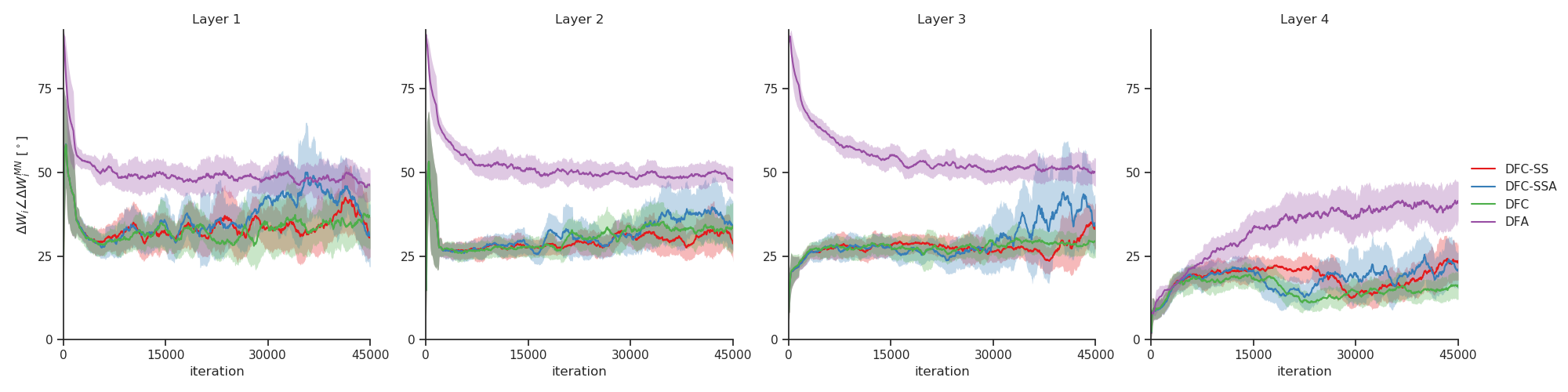}
        \caption{Angles between $\Delta W_i$ and the damped MN updates for MNIST.}
    \end{subfigure}
     \begin{subfigure}{\linewidth}
        \includegraphics[width=\linewidth]{figures/pipeline_fixed_TE1/gnt_angles_subplots.png}
        \caption{Angles between $\Delta W_i$ and the damped MN updates for MNIST with fixed feedback weights.}
    \end{subfigure}
    \caption{Angles between the damped MN updates with $\gamma=1$ \eqref{eq_app:MN_updates} and the ones computed by DFC-SS, DFC-SSA, DFC, and DFA, plotted for all hidden layers with (a) learned feedback weights and (b) fixed feedback weights on MNIST. A window-average is plotted together with the window-std (shade). The x-axis iterations corresponds to the minibatches processed. The curves of DFC, DFC-SS, and DFC-SSA overlap in some of the plots.}
    \label{fig:mnist_gnt_angles}
\end{sidewaysfigure}

\clearpage
\begin{sidewaysfigure}[h]
    \centering
    \begin{subfigure}{\linewidth}
        \includegraphics[width=\linewidth]{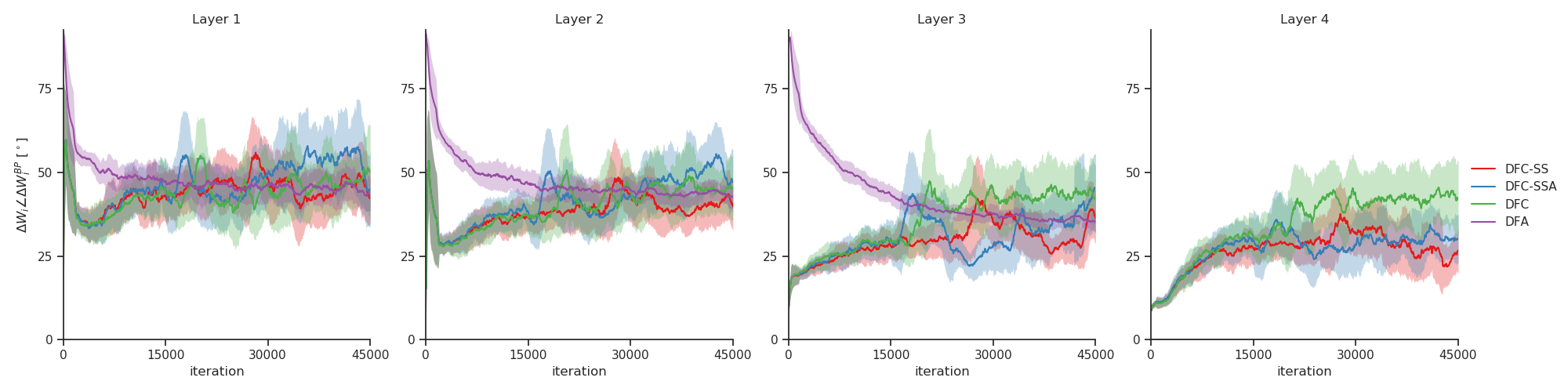}
        \caption{Angles between $\Delta W_i$ and the BP updates for MNIST.}
    \end{subfigure}
     \begin{subfigure}{\linewidth}
        \includegraphics[width=\linewidth]{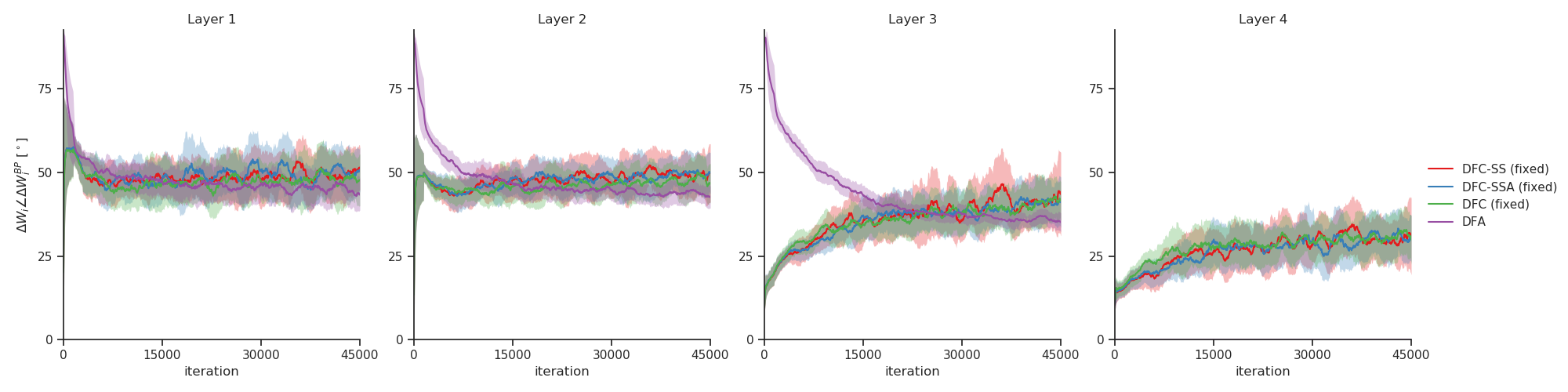}
        \caption{Angles between $\Delta W_i$ and the BP updates for MNIST with fixed feedback weights.}
    \end{subfigure}
    \caption{Angles between the weight updates $\Delta W_i$ computed by BP and the ones computed by DFC-SS, DFC-SSA, DFC, and DFA, plotted for all hidden layers with (a) learned feedback weights (b) fixed feedback weights on MNIST. A window-average is plotted together with the window-std (shade). The x-axis iterations corresponds to the minibatches processed. The curves of DFC, DFC-SS, and DFC-SSA overlap in some of the plots.}
    \label{fig:mnist_bp_angles}
\end{sidewaysfigure}

\clearpage
\begin{sidewaysfigure}[h]
    \centering
    \begin{subfigure}{\linewidth}
        \includegraphics[width=\linewidth]{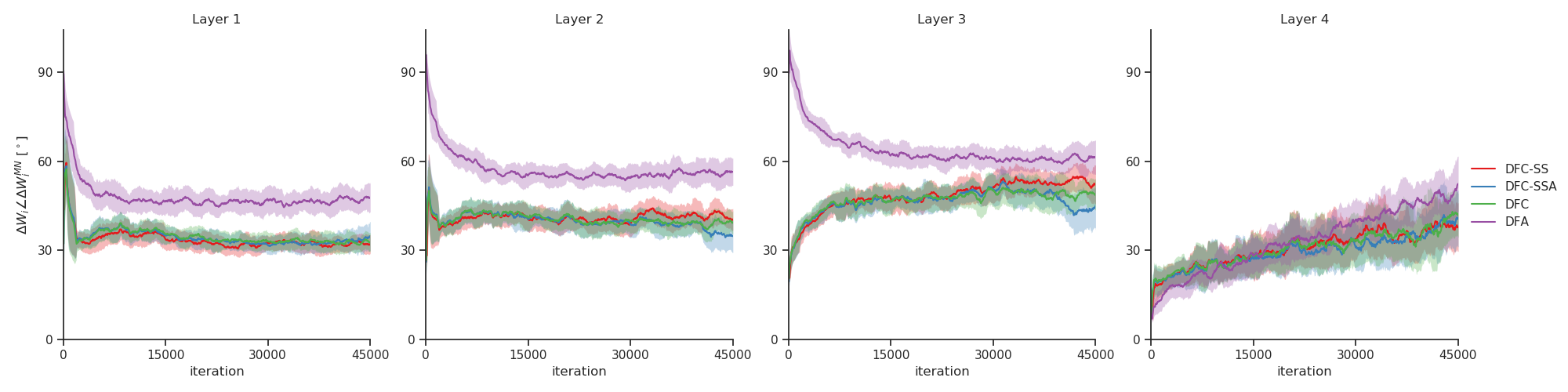}
        \caption{Angles between $\Delta W_i$ and the damped MN updates for Fashion-MNIST.}
    \end{subfigure}
     \begin{subfigure}{\linewidth}
        \includegraphics[width=\linewidth]{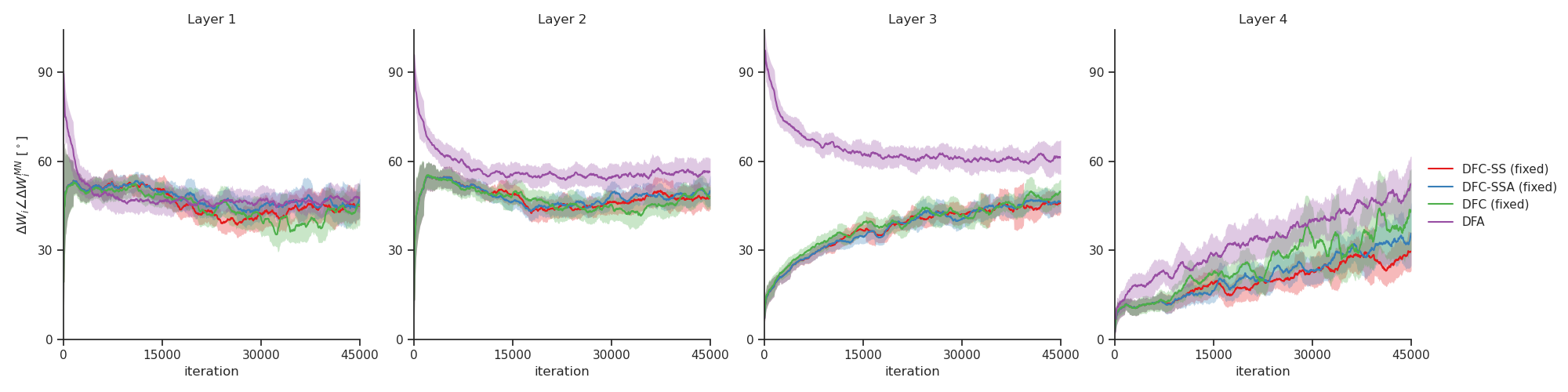}
        \caption{Angles between $\Delta W_i$ and the damped MN updates for Fashion-MNIST with fixed feedback weights.}
    \end{subfigure}
    \caption{Angles between the damped MN updates with $\gamma=1$ \eqref{eq_app:MN_updates} and the ones computed by DFC-SS, DFC-SSA, DFC, and DFA, plotted for all hidden layers with (a) learned feedback weights and (b) fixed feedback weights on Fashion-MNIST. A window-average is plotted together with the window-std (shade). The x-axis iterations corresponds to the minibatches processed. The curves of DFC, DFC-SS and DFC-SSA overlap in some of the plots.}
    \label{fig:fashion_mnist_gnt_angles}
\end{sidewaysfigure}

\clearpage
\begin{sidewaysfigure}[h]
    \centering
    \begin{subfigure}{\linewidth}
        \includegraphics[width=\linewidth]{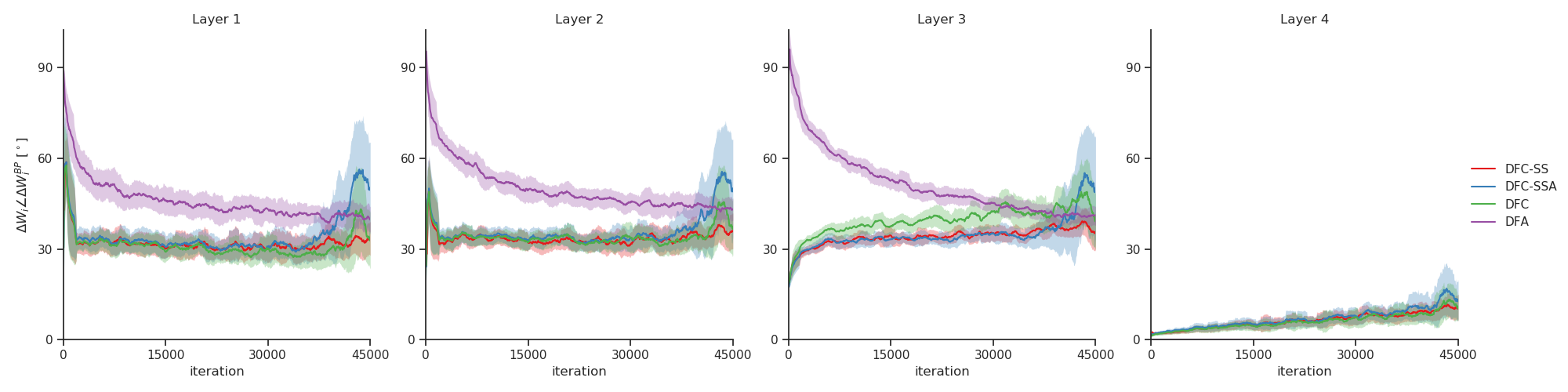}
        \caption{Angles between $\Delta W_i$ and the BP updates for Fashion-MNIST.}
    \end{subfigure}
     \begin{subfigure}{\linewidth}
        \includegraphics[width=\linewidth]{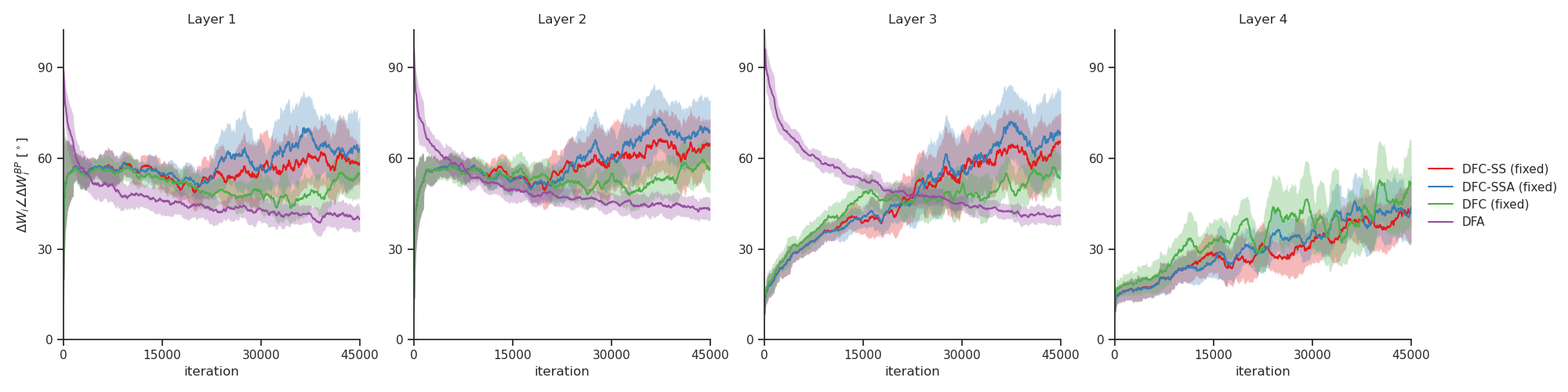}
        \caption{Angles between $\Delta W_i$ and the BP updates for Fashion-MNIST with fixed feedback weights.}
    \end{subfigure}
    \caption{Angles between the weight updates $\Delta W_i$ computed by BP and the ones computed by DFC-SS, DFC-SSA, DFC, and DFA, plotted for all hidden layers with (a) learned feedback weights (b) fixed feedback weights on Fashion-MNIST. A window-average is plotted together with the window-std (shade). The x-axis iterations corresponds to the minibatches processed. The curves of DFC, DFC-SS and DFC-SSA overlap in some of the plots.}
    \label{fig:fashion_mnist_bp_angles}
\end{sidewaysfigure}

\clearpage
\begin{sidewaysfigure}[h]
    \centering
    \begin{subfigure}{\linewidth}
        \includegraphics[width=\linewidth]{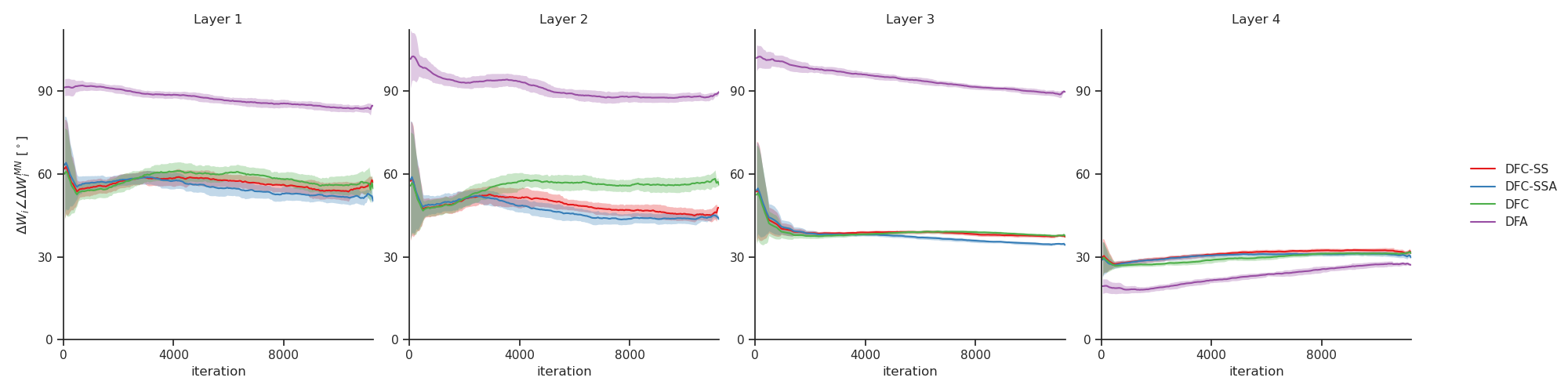}
        \caption{Angles between $\Delta W_i$ and the damped MN updates for MNIST-autoencoder.}
    \end{subfigure}
     \begin{subfigure}{\linewidth}
        \includegraphics[width=\linewidth]{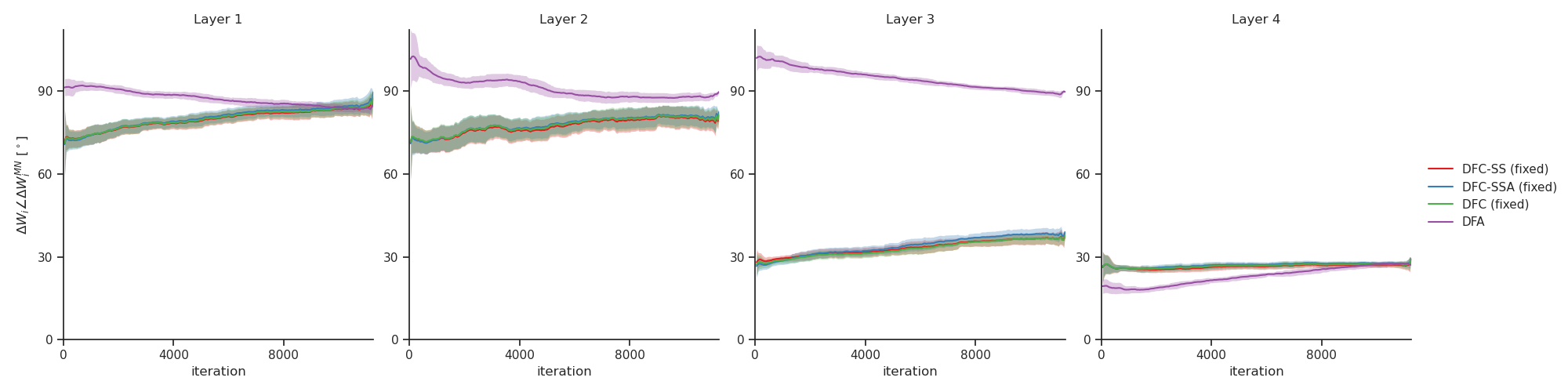}
        \caption{Angles between $\Delta W_i$ and the damped MN updates for MNIST-autoencoder with fixed feedback weights.}
    \end{subfigure}
    \caption{Angles between the damped MN updates with $\gamma=1$ \eqref{eq_app:MN_updates} and the ones computed by DFC-SS, DFC-SSA, DFC, and DFA, plotted for all hidden layers with (a) learned feedback weights and (b) fixed feedback weights on MNIST-autoencoder. A window-average is plotted together with the window-std (shade). The x-axis iterations corresponds to the minibatches processed. The curves of DFC, DFC-SS and DFC-SSA can overlap in some of the plots.}
    \label{fig:mnist_auto_gnt_angles}
\end{sidewaysfigure}

\clearpage
\begin{sidewaysfigure}[h]
    \centering
    \begin{subfigure}{\linewidth}
        \includegraphics[width=\linewidth]{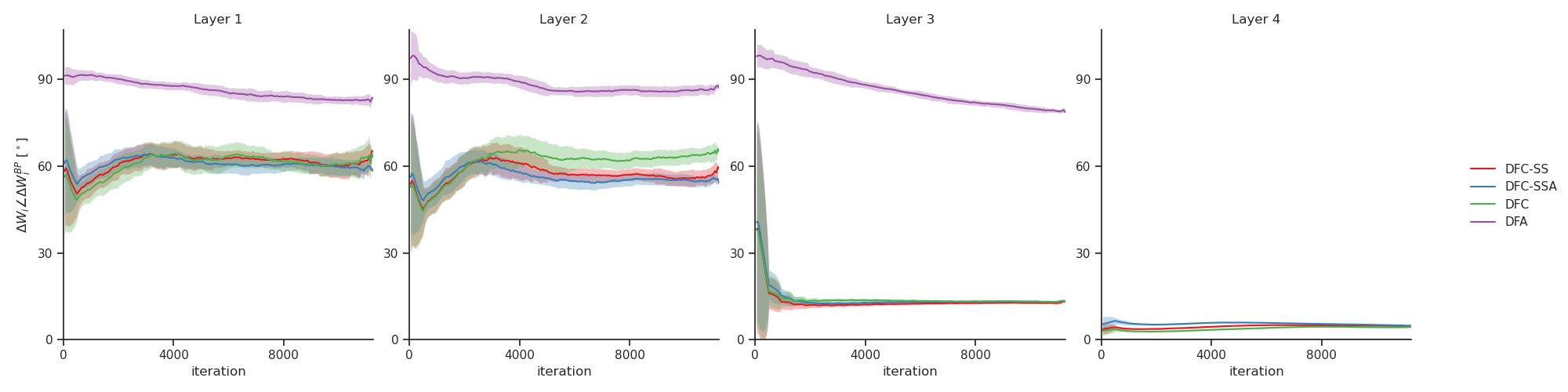}
        \caption{Angles between $\Delta W_i$ and the BP updates for MNIST-autoencoder.}
    \end{subfigure}
     \begin{subfigure}{\linewidth}
        \includegraphics[width=\linewidth]{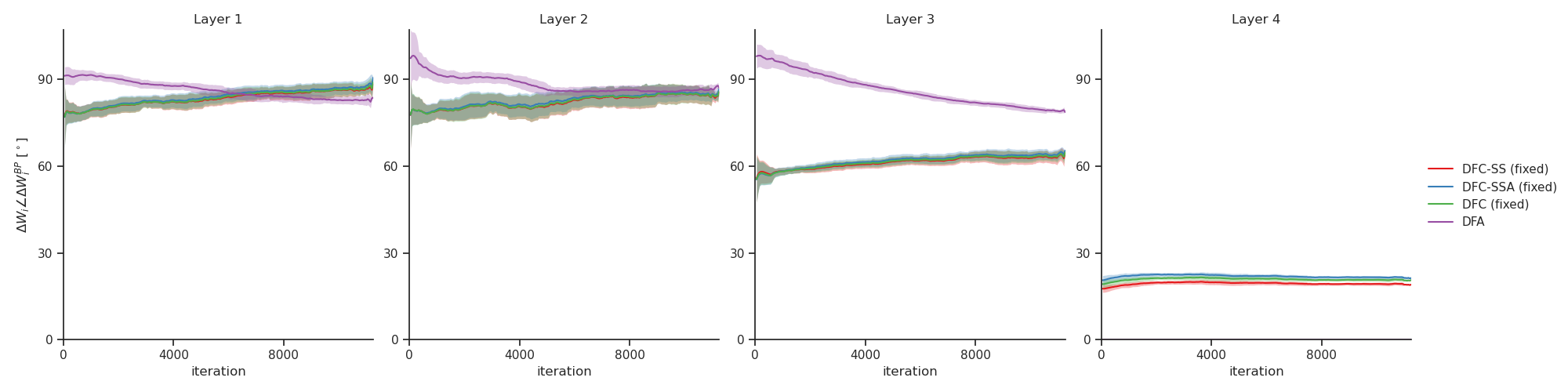}
        \caption{Angles between $\Delta W_i$ and the BP updates for MNIST-autoencoder with fixed feedback weights.}
    \end{subfigure}
    \caption{Angles between the weight updates $\Delta W_i$ computed by BP and the ones computed by DFC-SS, DFC-SSA, DFC, and DFA, plotted for all hidden layers with (a) learned feedback weights (b) fixed feedback weights on MNIST-autoencoder. A window-average is plotted together with the window-std (shade). The x-axis iterations corresponds to the minibatches processed. The curves of DFC, DFC-SS and DFC-SSA can overlap in some of the plots.}
    \label{fig:mnist_auto_bp_angles}
\end{sidewaysfigure}

\clearpage

\subsubsection{Autoencoder images}
Fig. \ref{fig:autoencoding_im} shows the autoencoder output for randomly selected samples of BP, DFC-SSA, DFC-SSA (fixed), and DFA, compared with the autoencoder input. As DFC, DFC-SS, and DFC-SSA have very similar test losses and hence autoencoder performance, we only show the plots for DFC-SSA and DFC-SSA (fixed). Fig. \ref{fig:autoencoding_im} shows that BP and the DFC variants with trained weights have almost perfect autoencoding performance when visually inspected, while DFA and the DFC (fixed) variants do not succeed in autoencoding their inputs, which is also reflected in the performance results (see Table \ref{tab:test_results}.

\begin{figure}[h]
    \centering
    \includegraphics[width=0.5\linewidth]{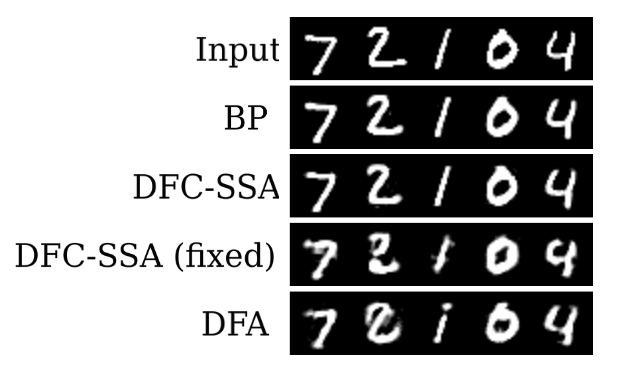}
    \caption{Visual representation of the autoencoder outputs for BP, DFC-SSA, DFC-SSA (fixed), and DFA, compared to the autoencoder input.}
    \label{fig:autoencoding_im}
\end{figure}

\subsection{Resources and compute}\label{app:resources_compute}
For the computer vision experiments, we used GeForce RTX 2080 and GeForce RTX 3090 GPUs. Table \ref{tab:runtimes} provides runtime estimates for 1 epoch of feedforward training and 3 epochs of feedback training (if applicable) for the DFC methods, using a GeForce RTX 2080 GPU. For MNIST and Fashion-MNIST we do 100 training epochs and for MNIST-autoencoder 25 training epochs. We did hyperparameter searches of 200 samples on all datasets for DFC-SSA and DFC-SSA (fixed) and reused the hyperparameter configuration for the other DFC variants. For BP and DFA we also performed hyperparameter searches of 200 samples for all experiments, with computational costs negligible compared to DFC.

\begin{table}[h]
\centering
\caption{Estimated run times in seconds per epoch (both feedforward and feedback training included) on GeForce RTX 2080 GPU for the experiments of Table \ref{tab:test_results}}
\label{tab:runtimes}
\begin{tabular}{*3c}
\toprule
{}   & MNIST \& Fashion-MNIST & MNIST-autoencoder \\
\midrule
DFC   &  500s & 1500s \\
    DFC-SSA  &  130s & 450s \\
DFC-SS  &  500s & 1500s\\
DFC (fixed)   &  370s & 1350s \\
DFC-SSA (fixed)   &  4s & 300s\\
DFC-SS (fixed) 
&  370s & 1350s \\

\bottomrule
\end{tabular}
\end{table}

\subsection{Dataset and Code licenses}\label{app:licenses}
For the computer vision experiments, we used the MNIST dataset \citep{lecun1998mnist} and the Fashion-MNIST dataset \citep{xiao2017fashion}, which have the following licenses:
\begin{itemize}
    \item MNIST: \url{https://creativecommons.org/licenses/by-sa/3.0/}
    \item Fashion-MNIST:\url{https://opensource.org/licenses/MIT}
\end{itemize}

For the implementation of the methods, we used PyTorch \citep{pytorch} and built upon the codebase of \citet{meulemans2020theoretical}, which have the following licenses:
\begin{itemize}
    \item Pytorch: \url{https://github.com/pytorch/pytorch/blob/master/LICENSE}
    \item \citet{meulemans2020theoretical}: \url{https://www.apache.org/licenses/LICENSE-2.0}
\end{itemize}

\section{DFC and multi-compartment models of cortical pyramidal neurons}
As mentioned in the Discussion, the multi-compartment neuron of DFC (see Fig. \ref{fig:DFC_schematics}C) is closely related to recent dendritic compartment models of the cortical pyramidal neuron \citep{sacramento2018dendritic, guerguiev2017towards, payeur2020burst, richards2019dendritic}. In the terminology of these models, our central, feedforward, and feedback compartments, correspond to the somatic, basal dendritic, and apical dendritic compartments of pyramidal neurons. Here, we relate our network dynamics \eqref{eq:network_dynamics} in more detail to the proposed pyramidal neuron dynamics of \citet{sacramento2018dendritic}. Rephrasing their dynamics for the somatic membrane potentials of pyramidal neurons (equation (1) of \citet{sacramento2018dendritic}) with our own notation, we get
\begin{align}\label{eq_app:pyramidal_dynamics}
    \tau_v \ddt \ve{v}_i(t) = -g_{\mathrm{lk}} \ve{v}_i(t) + g_{\mathrm{B}}\big(\vff_i(t) - \vv_i(t)\big) + g_{\mathrm{A}}\big(\vfb_i(t) - \vv_i(t)\big) + \sigma \ves{\xi}_i(t).
\end{align}
Like DFC, the network is structured in multiple layers, $0\leq i \leq L$, where each layer has its own dynamical equation as defined above. Basal and apical dendritic compartments ($\vff_i$ and $\vfb_i$ resp.) of pyramidal cells are coupled towards the somatic compartment ($\ve{v}_i$) with fixed conductances $g_{\mathrm{B}}$ and $g_{\mathrm{A}}$, and leakage $g_{\text{lk}}$. Background activity of all compartments is modeled by an independent white noise input $\ves{\xi}_i \sim \mathcal{N}(0,I)$. The dendritic compartment potentials are given in their instantaneous forms (c.f. equations (3) and (4) in \citet{sacramento2018dendritic})
\begin{align}
    \vff_i(t) &= W_i\phi(\ve{v}_{i-1}(t)) \label{eq:model_equation_B}\\
    \vfb_i(t) &= Q_i\ve{u}(t) \label{eq:model_equation_A}
\end{align}
with $W_i$ the synaptic weights of the basal dendrites, $Q_i$ the synaptic weights of the apical dendrites, $\phi$ a nonlinear activation function transforming the voltage levels to firing rates, and $\ve{u}$ a feedback input. 

Filling the instantaneous forms of $\vff$ and $\vfb$ into the dynamics of the somatic compartment \eqref{eq_app:pyramidal_dynamics}, and reworking the equation, we get:
\begin{align}
    \tilde{\tau}_v \ddt \vv_i(t) = -\vv_i + \tilde{g}_{\mathrm{B}}  W_i\phi(\ve{v}_{i-1}(t)) + \tilde{g}_{\mathrm{A}} Q_i\ve{u}(t) + \tilde{\sigma} \ves{\xi}_i(t),
\end{align}
with $\tilde{g}_{\mathrm{B}} = \frac{g_{\mathrm{B}}}{g_{\mathrm{lk}}+ g_{\mathrm{B}}+ g_{\mathrm{A}}}$, $\tilde{g}_{\mathrm{A}} = \frac{g_{\mathrm{A}}}{g_{\mathrm{lk}}+ g_{\mathrm{B}}+ g_{\mathrm{A}}}$, $\tilde{\sigma} = \frac{\sigma}{g_{\mathrm{lk}}+ g_{\mathrm{B}}+ g_{\mathrm{A}}}$ and $\tilde{\tau}_v = \frac{\tau_v}{g_{\mathrm{lk}}+ g_{\mathrm{B}}+ g_{\mathrm{A}}}$. When we absorb $\tilde{g}_{\mathrm{B}}$ and $\tilde{g}_{\mathrm{A}}$ into $W_i$ and $Q_i$, respectively, we recover the DFC network dynamics \eqref{eq:network_dynamics} with noise added. Hence, we see that not only the multi-compartment neuron model of DFC is closely related to dendritic compartment models of pyramidal neurons, but also the neuron dynamics used in DFC are intimately connected to models of cortical pyramidal neurons. What sets DFC apart from the cortical model of \citet{sacramento2018dendritic} is its unique feedback dynamics that make use of a feedback controller and lead to approximate GN optimization.

\section{Feedback pathway designs compatible with DFC} \label{app:fb_pathways}
To present DFC in its most simple form, we used direct linear feedback mappings from the output controller towards all hidden layers. However, DFC is also compatible with more general feedback pathways. 

Consider $\vfb_i = g_i(\ve{u})$ with $g_i$ a smooth mapping from the control signal $\ve{u}$ towards the feedback compartment of layer $i$, leading to the following network dynamics:
\begin{align}
    \tau_v \ddt \ve{v}_i(t) &= -\ve{v}_i(t) + W_i\phi\big(\ve{v}_{i-1}(t)\big) + g_i\big(\ve{u}(t)\big) \quad 1\leq i \leq L.
\end{align}
The feedback path $g_i$ could be for example a multilayer neural network (see Fig. \ref{fig:feedback_paths}A) and different $g_i$ could share layers (see Fig. \ref{fig:feedback_paths}B). As the output stepsize $\lambda$ is taken small in DFC, the control signal $\ve{u}$ will also remain small. Hence, we can take a first-order Taylor approximation of $g_i$ around $\ve{u}=0$:
\begin{align}
    g_i(\ve{u}) = J_{g_i}\ve{u} + \mathcal{O}(\lambda^2),
\end{align}
with $J_{g_i} = \frac{\partial g_i(\ve{u})}{\partial \ve{u}}\big\rvert_{\ve{u}=0}$. With this linear approximation and replacing $Q_i$ by $J_{g_i}$, all previous theoretical results from Section \ref{sec:learning_TPDI} hold, as they consider the limit of $\lambda \rightarrow 0$. Furthermore, the local stability results of Section \ref{sec:stability_TPDI} can be recovered by replacing $Q_i$ in Condition \ref{con:local_stability} with $J_{g_i}$ evaluated at $\ve{u}= \ve{u}_{\sss}$. Finally, the feedback learning results (Section \ref{sec:fb_learning}) can be extended to this setting, by learning the synaptic strengths connecting the feedback path $g_i$ to the network layers $\ve{v}_i$ according to the proposed feedback learning rule \eqref{eq:Q_dynamics}. For small $\sigma$, $\ve{u}$ will remain small and hence the feedback learning rule will align $J_{g_i}$, correctly evaluated around $\ve{u}=0$, with $J^T(JJ^T + \gamma I)^{-1}$.

\begin{figure}[h!]
    \centering
    \includegraphics[width=0.7\linewidth]{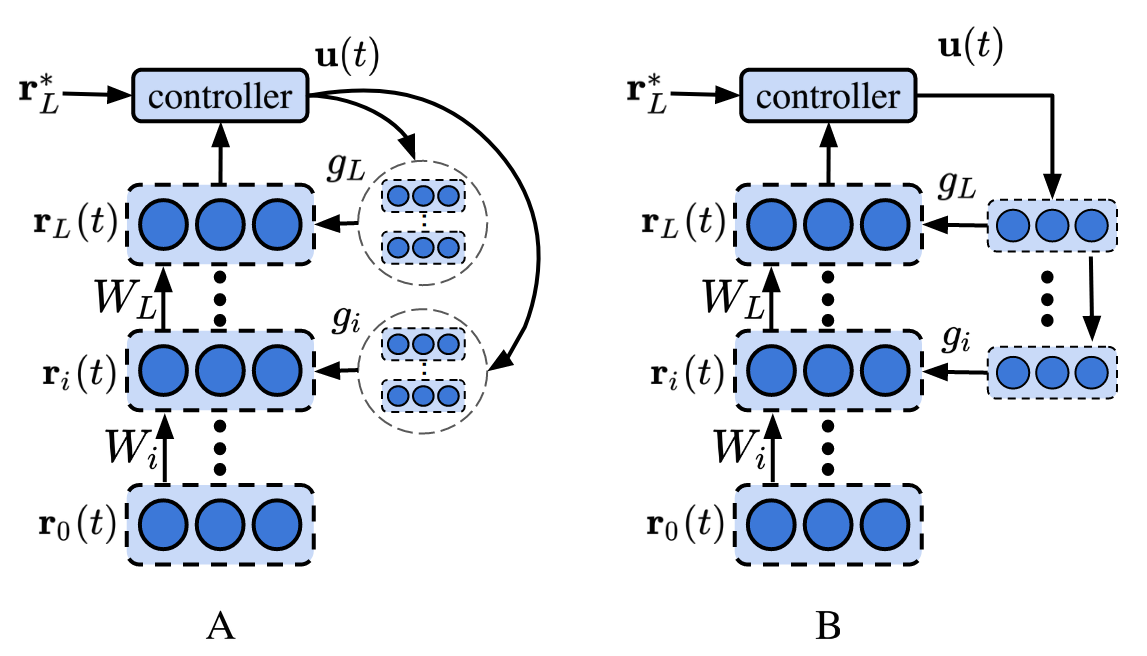}
    \caption{Schematic illustration of more general feedback paths compatible with DFC.}
    \label{fig:feedback_paths}
\end{figure}

Until now, we considered general feedback paths $g_i$ and linearized them around $\ve{u}=0$, thereby reducing their expressive power to linear mappings. As the forward Jacobian $J$ changes for each datasample in nonlinear networks, it can be helpful to have a feedback path for which $J_{g_i}$ also changes for each datasample. Then, each $J_{g_i}$ can specialize its mapping for a particular cluster of datasamples, thereby enabling a better compliance to Conditions \ref{con:Q_GN} and \ref{con:local_stability} for each datasample. To let $J_{g_i}$ change depending on the considered datasample and hence activations $\ve{v}_i$ of the network, the feedback path $g_i$ needs to be `influenced' by the network activations $\ve{v}_i$.

One interesting direction for future work is to have connections from the network layers $\ve{v}_i$ onto the layers of the feedback path $g_i$, that can modulate the nonlinear activation function $\phi_g$ of those layers. By modulating $\phi_g$, the feedback Jacobian $J_{g_i}$ will depend on the network activations $\ve{v}_i$ and, hence, will change for each datasample. Interestingly, there are many candidate mechanisms to implement such modulation in biological cortical neurons \citep{silver2010neuronal, ferguson2020mechanisms, larkum2004top}.

Another possible direction is to integrate the feedback path $g_i$ into the forward network $\eqref{eq:network_dynamics}$ and separate forward signals from feedback signals by using neural multiplexed codes \citep{payeur2020burst, naud2017burst}. As the feedback path $g_i$ is now integrated into the forward pathway, its Jacobian $J_{g_i}$ can be made dependent on the forward activations $\ve{v}_i$. While being a promising direction, merging the forward pathway with the feedback path is not trivial and significant future work would be needed to accomplish it.

\section{Discussion on the biological plausibility of the controller} \label{app:controller_microcircuit}
The feedback controller used by DFC (see Fig. \ref{fig:DFC_schematics}A and eq. \eqref{eq:controller_dynamics}) has three main components. First, it needs to have a way of computing the control error $\ve{e}(t)$. Second, it needs to perform a leaky integration ($\ve{u}^{\mathrm{int}}$) of the control error. Third, the controller needs to multiply the control error by $k_p$. 

Following the majority of biologically plausible learning methods \citep{lillicrap2016random, akrout2019deep, kunin2020two, lansdell2019learning,lee2015difference, meulemans2020theoretical, bengio2020deriving, payeur2020burst, nokland2016direct}, we assume to have access to an output error that the feedback controller can use. As the error is a simple difference between the network output and an output target $\ve{r}_L^*$, this  should be relatively easily computable. Another interesting aspect of computing the output error is the question of where the output target $\ve{r}_L^*$ could originate from in the brain. This is currently an open question in the field \citep{bengio2015towards} which we do not aim to address in this work.

Integrating neural signals over long time horizons is a well-studied subject concerning many application areas, ranging from oculomotor control to maintaining information in working memory \citep{seung1996brain, koulakov2002model, goldman2003robust, goldman2010neural, lim2013balanced}. To provide intuition, a straightforward approach to leaky integration is to use recurrent self-connections with strength $(1-\alpha)$. Then, the same neural dynamics used in \eqref{eq:network_dynamics} give rise to
\begin{align}
    \tau_u \ddt \ve{u}^{\mathrm{int}}(t) = -\ve{u}^{\mathrm{int}}(t) + W_{\mathrm{in}} \ve{e}(t) + (1-\alpha)\ve{u}^{\mathrm{int}}(t) = W_{\mathrm{in}} \ve{e}(t) - \alpha \ve{u}^{\mathrm{int}}(t).
\end{align}
When we take the input weights $W_{\mathrm{in}}$ equal to the identity matrix, we recover the dynamics for $\ve{u}^{\mathrm{int}}(t)$ described in \eqref{eq:controller_dynamics}. 


Finally, a multiplication of the control error by $k_p$ can simply be done by having synaptic weights with strength $k_p$.

\end{document}